\newcommand{\Attnk}{\mathrm{Attn}_k}
\newcommand{\Attn}{\operatorname{Attn}}
\newcommand{\Var}{\operatorname{Var}}
\newcommand{\EE}{\mathbb{E}}
\numberwithin{equation}{section}
\newtheorem{theorem}{Theorem}[section]
\newtheorem{lemma}[theorem]{Lemma}
\newtheorem{proposition}[theorem]{Proposition}
\newtheorem{corollary}[theorem]{Corollary}
\theoremstyle{remark}
\newtheorem{remark}[theorem]{Remark}
\DeclareMathOperator{\softmax}{softmax}
\newcommand{\TV}{\operatorname{TV}}
\newcommand{\KL}{\operatorname{KL}}
\newcommand{\E}{\mathbb{E}}
\newcommand{\R}{\mathbb{R}}
\algrenewcommand\algorithmicrequire{\textbf{Input:}}
\algrenewcommand\algorithmicensure{\textbf{Output:}}
\algnewcommand{\LineComment}[1]{\State \(\triangleright\) #1}
\title{A Mathematical Theory of Top-\emph{k} Sparse Attention via Total Variation Distance}
\author{
  Georgios Tzachristas\textsuperscript{1,2},
  Lei Deng\textsuperscript{1},
  Ioannis Tzachristas\textsuperscript{3},
  Gong Zhang\textsuperscript{1},
  Renhai Chen\textsuperscript{1}
}
\date{
\textsuperscript{1}\,Theory Laboratory, Central Research Institute, 2012 Laboratory, \\ Huawei Technologies Co., Ltd., Shenzhen, China.\\
\texttt{\{tzachristasgeorgios, deng.lei2, nicholas.zhang, chenrenhai\}@huawei.com}\\[0.75em]
\textsuperscript{2}\,National Technical University of Athens, Greece\\
\texttt{el23102@mail.ntua.gr}\\[1em]
\textsuperscript{3}\,Trustworthy Technology and Engineering Laboratory, Heisenberg Research Center, 
\newline
2012 Laboratory, Huawei Technologies Co., Ltd., Munich, Germany.\\
\texttt{ioannis.tzachristas@huawei.com}\\[1em]
}
\begin{document}

\maketitle
\begin{abstract}
We develop a unified mathematical framework for certified Top-$k$ attention truncation that quantifies approximation error at both the distribution and output levels. For a single attention distribution $P$ and its Top-$k$ truncation $\hat P$, we show that the total-variation distance coincides with the discarded softmax tail mass and satisfies the exact identity
\[
\TV(P,\hat P) = 1 - e^{-\KL(\hat P \Vert P)},
\]
replacing generic inequalities by a sharp equality specific to Top-$k$ truncation. Building on this, we derive a hierarchy of non-asymptotic deterministic bounds---from a single boundary gap, through multi-gap and blockwise variants---that control $\TV(P,\hat P)$ using only the ordered logits.

We then turn to the attention outputs. Using an exact head--tail decomposition, we prove that the vector error factorizes as
\[
\|\Attn(q,K,V) - \Attnk(q,K,V)\|_2
= \tau \,\|\mu_{\text{tail}} - \mu_{\text{head}}\|_2,
\qquad \tau = \TV(P,\hat P),
\]
yielding a new head--tail diameter bound
\[
\|\Attn(q,K,V) - \Attnk(q,K,V)\|_2 \le \tau\,\mathrm{diam}_{H,T},
\]
where $\mathrm{diam}_{H,T}$ is the cross-set diameter of the value vectors. This introduces the value matrix $V$ as an explicit geometric parameter in the error bound, and we further obtain variance-based refinements that relate the output error to $\Var_P(V)$.

Under an i.i.d.\ Gaussian score model $s_i \sim \mathcal N(\mu,\sigma^2)$, we derive closed-form expressions for the softmax tail mass and show that the minimal $k_\varepsilon$ ensuring $\TV(P,\hat P) \le \varepsilon$ obeys an asymptotic design rule $k_\varepsilon/n \approx \Phi_c(\sigma + \Phi^{-1}(\varepsilon))$. These analytical results motivate two certified selection algorithms,
$\Delta_k$-Search and MC-Search, which adaptively score only a subset of
keys and stop as soon as they can certify that a prescribed Top-$k$
truncation satisfies $\TV(P,\widehat P)\le\varepsilon$.

Empirical evaluations on \texttt{bert-base-uncased} attention maps and synthetic long-context logits confirm the predicted scaling of the certified sparsity ratio $k_\varepsilon/n$, reveal heavier-tailed logits than the Gaussian idealization, and demonstrate that certified Top-$k$ can reduce scored keys by a factor of $2$--$4\times$ on average---and by orders of magnitude on sharply peaked heads---while rigorously respecting the prescribed total-variation error.
\end{abstract}

\vspace{1em}
\noindent\textbf{Keywords:} Top-\emph{k} attention, certified sparsification, total-variation bounds, softmax truncation, probabilistic analysis, efficient transformers

\section{Introduction}
\label{sec:intro}

The attention mechanism is the computational core of modern transformer architectures.
For a sequence of $n$ tokens with query, key, and value representations
$Q,K\in\R^{n\times d}$ and $V\in\R^{n\times d_v}$,
the standard attention operator
\[
\Attn(Q,K,V)=\softmax\!\left(\frac{QK^\top}{\sqrt d}\right)V
\]
requires $O(n^2 d)$ time and $O(n^2)$ memory to compute, dominated by the
pairwise dot products between all queries and keys.
As the context length $n$ grows into the tens or hundreds of thousands,
this quadratic scaling has become the primary bottleneck in large language models (LLMs).

A natural remedy is to exploit the observation that, for each query, only a small number of keys contribute significantly to the attention output. Top-$k$ sparse attention methods approximate the softmax weights by, for each query $q_i$, retaining only the $k$ largest scores $q_i^\top k_j$ over $j$ and zeroing the rest. Such sparsity can lead to subquadratic complexity and aligns well with modern hardware, and numerous practical variants have been proposed, including block-based, hierarchical, and hash-based sparse attention. Several mainstream open-source systems now adopt Top-$k$ sparse attention in their inference stacks.

Despite impressive empirical performance, however, the mathematical foundations of these methods remain poorly understood. In this work we study Top-$k$ attention truncation through the lens of probability and optimization. Fundamental questions persist:
\begin{itemize}
    \item How does truncating the softmax affect the output distribution and the resulting attention output?
    \item What deterministic or probabilistic guarantees can bound the error introduced by keeping only the top $k$ keys?
    \item Can we design algorithms that certify in advance that a given truncation achieves a desired accuracy?
\end{itemize}

\subsection{Contributions}

This paper develops a unified mathematical theory of Top-$k$ attention that quantifies the
effect of truncation at both the distribution and output levels, and connects these guarantees
to certified sparse-attention algorithms. Our main contributions are:

\begin{enumerate}
  \item \textbf{Distribution-level theory for Top-$k$ truncation.}
  For a softmax distribution $P$ and its Top-$k$ truncation $\widehat P$, we show that the
  total-variation distance coincides with the discarded softmax tail mass and satisfies the
  exact identity
  \[
    \TV(P,\widehat P) \;=\; 1 - e^{-\KL(\widehat P \,\Vert\, P)}.
  \]
  This replaces generic inequality-based bounds with a sharp equality specific to Top-$k$
  truncation and serves as the foundation for all subsequent certificates.

  \item \textbf{Deterministic certificates from score gaps and blocks.}
  Using only the ordered logits $s_1 \ge \cdots \ge s_n$, we derive non-asymptotic upper
  bounds on $\TV(P,\widehat P)$ in terms of:
  (i) a single boundary gap $s_k - s_{k+1}$;
  (ii) multi-gap (pigeonhole) refinements; and
  (iii) blockwise extensions that operate on groups of consecutive keys.
  These results yield a hierarchy of deterministic certificates, from fine-grained to
  coarse-grained, without any distributional assumptions.

  \item \textbf{Output-level error bounds and geometry of the value matrix.}
  We establish an exact head--tail decomposition
  \[
    \bigl\|\Attn(q,K,V) - \Attn_k(q,K,V)\bigr\|_2
      \;=\; \tau \,\bigl\|\mu_{\text{tail}} - \mu_{\text{head}}\bigr\|_2,
      \qquad \tau = \TV(P,\widehat P),
  \]
  and obtain a new head--tail diameter bound
  \[
    \bigl\|\Attn(q,K,V) - \Attn_k(q,K,V)\bigr\|_2
      \;\le\; \tau\,\mathrm{diam}_{H,T},
  \]
  where $\mathrm{diam}_{H,T}$ is the cross-set diameter of the value vectors.
  This introduces the value matrix $V$ as an explicit geometric parameter in the error and
  is complemented by variance-based bounds that relate the output error to $\Var_P(V)$.
  We further give a graph-theoretic characterization of the optimal head--tail partition:
  the "minimax" cross diameter is achieved by cutting the lightest edge of a maximum spanning
  tree.

  \item \textbf{Probabilistic laws under a Gaussian score model.}
  Under an i.i.d.\ Gaussian model $s_i \sim \mathcal N(\mu,\sigma^2)$, we derive closed-form
  expressions for the softmax tail mass and show that the minimal $k_\varepsilon$ ensuring
  $\TV(P,\widehat P)\le\varepsilon$ obeys an asymptotic design rule
  \[
    k_\varepsilon/n \;\approx\; \Phi_c\bigl(\sigma + \Phi^{-1}(\varepsilon)\bigr).
  \]
  This links error tolerance, logit variance, and expected sparsity, providing a principled
  guideline for choosing $k$ in high-dimensional regimes.

  \item \textbf{Certified selection algorithms and empirical validation.}
  We translate these analytical results into two certified Top-$k$ selection procedures:
  \emph{$\Delta_k$-Search}, which uses a gap-based certificate, and \emph{MC-Search},
  which uses mass certificates and their blockwise extensions.
  Both algorithms adaptively score only as many keys as needed to certify
$\TV(P,\widehat P)\le\varepsilon$, with $\Delta_k$-Search verifying a
prescribed Top-$k$ and MC-Search additionally returning the minimal
mass-certified size $k_\varepsilon(q)$ for each query.

  Experiments on \texttt{bert-base-uncased} attention maps and synthetic long-context logits
  confirm the predicted scaling of the certified sparsity ratio $k_\varepsilon/n$, reveal
  heavier-tailed logits than the Gaussian idealization, and demonstrate that certified
  Top-$k$ can reduce the number of scored keys by factors of $2$--$4\times$ on average,
  and by orders of magnitude for sharply peaked heads, while rigorously respecting the
  prescribed TV budget.
\end{enumerate}

\subsection{Applications to LLM-based systems}
\label{subsec:llm-applications}

The theory developed here enables \textbf{provably efficient attention} in practical LLM deployments. By coupling the \textbf{exact tail-mass identity}
\[
\mathrm{TV}(P,\hat P) = 1 - e^{-\mathrm{KL}(\hat P \,\|\, P)}
\]
with our \textbf{certified selectors} ($\Delta_k$-Search and MC-Search), one can adaptively choose the minimal Top-$k$ per head and query that meets a prescribed tolerance $\varepsilon$, potentially yielding \textbf{subquadratic complexity} in the sequence length with explicit accuracy guarantees (Sections~\ref{sec:dist-theory}--\ref{sec:algorithms}). The same certificates extend to structured, blockwise pruning across heads, windows, or tiles, supporting head/window selection under \textbf{verifiable error budgets} (Sections~\ref{sec:dist-theory} and~\ref{sec:algorithms}). Because the certification is orthogonal to implementation, it composes with fast attention kernels to reduce FLOPs without altering semantics, and it can inform serving-time trade-offs (latency/throughput vs.\ bounded deviation) as well as training-time objectives that encourage larger score gaps or target TV budgets. Empirically, certified sparsity scales predictably with context length while preserving the requested bound, making the approach suitable for \textbf{long-context inference} and resource-constrained deployment (Sections~\ref{sec:algorithms}--\ref{sec:real-attn-validation}).

\subsection{Scope and Structure}

Our analysis focuses on single-head attention with fixed queries,
but the results extend naturally to multi-head settings and to other exponential-family attention mechanisms.
The paper is organized as follows.
Section~3 introduces notation and basic definitions.
Section~4 derives fundamental identities connecting total variation, KL divergence, and related quantities, and establishes deterministic gap and blockwise bounds.
Section~5 converts these distribution-level guarantees into output-level
error guarantees, including geometric and graph-theoretic characterizations of the value matrix.
Section~6 develops probabilistic predictions under a Gaussian score model.
Section~7 presents certified selection algorithms translating these bounds into practice.
Section~8 empirically validates the theory on real attention maps and synthetic long-context logits.
Section~9 concludes with a discussion of implications, limitations, and future directions.

In summary, this work provides a rigorous mathematical foundation for Top-$k$ attention,
showing that its empirical robustness follows from the precise exponential decay of
the discarded softmax mass and from structural properties of the value vectors,
and that these insights enable the design of attention mechanisms with certified accuracy guarantees.

\section{Related Work}

The quadratic time and memory complexity of the standard attention operator has motivated extensive research into efficient and sparse alternatives. A full attention layer requires $O(n^2 d)$ computation, which becomes prohibitive for long sequences. Consequently, a wide variety of methods have sought to approximate or restrict the attention pattern while preserving model quality. Most of these approaches rely on heuristic sparsity or low-rank assumptions, and even when they come with theoretical guarantees, these are usually stated in terms of kernel or matrix approximation or model expressivity rather than per-query accuracy of the attention distribution. Our work differs in offering a unified mathematical analysis of \textit{Top-$k$} truncation with provable, certified error bounds on the resulting softmax distribution.

\paragraph{Fixed and Learned Sparse Patterns.}
Early efforts such as the \textbf{Sparse Transformer}~\cite{child2019sparse} introduced fixed blockwise and strided attention masks that reduce complexity while maintaining global context. Subsequent models, including \textbf{Longformer}~\cite{beltagy2020longformer} and \textbf{BigBird}~\cite{zaheer2020bigbird}, combined local sliding windows with a small number of global tokens to extend attention to thousands of tokens. These methods improve scalability and can be trained effectively, but they impose hard-coded sparsity patterns rather than deriving them from the attention statistics themselves. They do not, however, provide explicit per-query deterministic or probabilistic bounds on the approximation error introduced by truncating a given attention distribution.

\paragraph{Low-Rank and Kernel Approximations.}
A second line of work compresses the attention matrix through low-rank or kernel-based approximations. \textbf{Linformer}~\cite{wang2020linformer} projects the key and value matrices into a lower-dimensional subspace, achieving linear complexity in sequence length. \textbf{Performer}~\cite{choromanski2021performer} and \textbf{Random Feature Attention}~\cite{peng2021random} replace the softmax kernel by positive random feature maps, while \textbf{Nyströmformer}~\cite{xiong2021nystromformer} approximates the attention matrix using Nyström sampling. These techniques provide substantial speedups and are accompanied by theoretical analyses in terms of kernel or matrix approximation error, or unbiasedness and concentration of the underlying estimators. However, their guarantees typically concern approximation of the attention kernel or matrix, rather than explicit per-query bounds on the deviation between the approximate and exact softmax distributions that can be turned into certification rules. In contrast, our analysis derives \textit{closed-form deterministic} and \textit{probabilistic} guarantees on the deviation between the truncated and exact softmax distributions.

\paragraph{Memory-Efficient Implementations.}
Complementary work has focused on optimizing the \textit{exact} softmax computation for modern hardware. \textbf{FlashAttention}~\cite{dao2022flashattention} reorganizes the computation to minimize GPU memory access, achieving large throughput gains without altering the mathematical definition of attention. While such systems-level improvements are orthogonal to sparsification, our certified truncation algorithms can be combined with such implementations to yield both theoretical and practical efficiency.

\paragraph{Analytical and Theoretical Perspectives.}
Several studies have examined the softmax function from a numerical or probabilistic viewpoint. Classical works on exponential families and on regular variation~\cite{resnick1987extreme,pitman2006combinatorial} provide the probabilistic foundation for our Gaussian score model analysis. Despite these advances, prior work stops short of delivering \textit{exact identities} or \textit{certifiable algorithms} that bound, for a given finite score set, the total-variation error arising from truncating the attention distribution.

\paragraph{Position of This Work.}
Our contribution is orthogonal and complementary to these lines of research. We develop, to our knowledge, the first rigorous closed-form TV/KL-based mathematical theory of \textit{Top-$k$} attention truncation, proving an exact identity between total-variation distance and KL divergence, deriving deterministic and Gaussian-model bounds, and translating them into \emph{$\Delta_k$-Search} and \emph{MC-Search} algorithms that certify a desired accuracy ($\mathrm{TV}\le\varepsilon$) before computing all dot products. Whereas most existing methods trade accuracy for speed heuristically or provide only kernel-level approximation guarantees, our framework provides \textit{verifiable per-query guarantees} linking sparsity, score variance, and approximation error—establishing a principled foundation for provably correct sparse attention.

\paragraph{Comparative Guarantees.}
Most existing efficient-attention architectures---including Linformer~\cite{wang2020linformer}, Performer~\cite{choromanski2021performer}, Reformer~\cite{kitaev2020reformer}, and other kernelized or low-rank variants---reduce computational cost by projecting or approximating the attention kernel. Their analyses typically provide bounds on kernel or matrix approximation error, or characterize unbiasedness and concentration of random feature estimators, rather than giving explicit per-query bounds (for example, in total-variation distance) on the deviation between the approximate and true softmax distributions that can be used as certification rules for a particular score vector.

In contrast, our framework establishes \emph{finite-sample, non-asymptotic guarantees} in total-variation distance. 
The exact identity
\[
\mathrm{TV}(P,\widehat P) = 1 - e^{-\mathrm{KL}(\widehat P\|P)}
\]
for the pair $(P,\widehat P)$ arising from Top-$k$ truncation links the truncation error directly to discarded softmax mass, yielding deterministic certificates that hold for any finite score set.
Unlike prior methods, our certified algorithms (\emph{$\Delta_k$-Search} and \emph{MC-Search}) guarantee that the truncated attention achieves $\mathrm{TV}\le\varepsilon$ before any approximation is accepted.
Thus, the proposed theory replaces heuristic sparsity with \textbf{verifiable correctness}, complementing empirically efficient methods with provable accuracy control.

\section{Preliminaries and Notation}
\label{sec:prelim}

We formalize the notation and definitions used throughout the paper and record several basic properties of the softmax and its Top-$k$ truncation.

\subsection{Attention Scores and Softmax Probabilities}

For a single query vector $q\in\R^{d}$, keys $k_1,\dots,k_n\in\R^{d}$, and values $v_1,\dots,v_n\in\R^{d_v}$,
define the (scaled) similarity scores
\[
s_i = \frac{q^\top k_i}{\sqrt d}, \qquad i=1,\dots,n.
\]

\paragraph{Indexing convention.}
For each query $q$, let $\pi$ be a permutation that sorts the scores in non-increasing order:
$s_{(1)} \ge \cdots \ge s_{(n)}$ with $s_{(i)} = s_{\pi(i)}$.
From now on we relabel indices by $\pi$ and drop parentheses, so that
$s_1 \ge \cdots \ge s_n$, and the associated pairs $(k_i,v_i)$ are reindexed accordingly.
Ties are broken by a fixed rule (e.g., smaller original index).
All sums over indices $1{:}k$ (e.g., $\sum_{i=1}^k$) use this convention.

The standard softmax probabilities are
\[
p_i = \frac{e^{s_i}}{\sum_{j=1}^{n} e^{s_j}}, \qquad
P=(p_1,\dots,p_n),
\]
which define a discrete distribution on $\{1,\dots,n\}$.

We also collect the keys and values into matrices
\[
K := \begin{bmatrix} k_1^\top \\ \vdots \\ k_n^\top \end{bmatrix} \in \R^{n\times d}
\quad\text{and}\quad
V := \begin{bmatrix} v_1^\top \\ \vdots \\ v_n^\top \end{bmatrix} \in \R^{n\times d_v}.
\]

For a single query corresponding to a row $q^\top$ of $Q$, we view $q$ as a column
vector and specialize the matrix formula
\[
\Attn(Q,K,V) = \softmax\!\left(\frac{QK^\top}{\sqrt d}\right) V
\]
to
\[
\Attn(q,K,V)
  = V^\top \softmax\!\left(\frac{Kq}{\sqrt d}\right)
  = \sum_{i=1}^n p_i v_i.
\]

This requires computing all $n$ dot products $q^\top k_i$.

\subsection{Top-$k$ Truncation}

For a fixed integer $k<n$, let $I_k=\{1,\ldots,k\}$.
The \emph{Top-$k$ truncated distribution} is
\[
\widehat p_i =
\begin{cases}
\dfrac{e^{s_i}}{\sum_{j=1}^{k} e^{s_j}}, & \text{for } i \le k,\\[8pt]
0, & \text{for } i > k,
\end{cases}
\qquad
\widehat P=(\widehat p_1,\ldots,\widehat p_n).
\]
The corresponding truncated attention output is
\[
\Attn_k(q,K,V) := \sum_{i=1}^{k} \widehat p_i\, v_i .
\]

\subsection{Error Measures and Basic Properties}

We quantify the distance between the full distribution $P$ and its truncated version $\widehat P$ using standard divergences and record several elementary properties that will be used throughout.

\paragraph{Total variation.}
\[
\TV(P,\widehat P) = \frac{1}{2}\sum_{i=1}^{n} |p_i - \widehat p_i|.
\]

\paragraph{Kullback--Leibler divergence.}
\[
\KL(\widehat P\Vert P) = \sum_{i=1}^{n} \widehat p_i \log\!\frac{\widehat p_i}{p_i}.
\]
We adopt the standard convention $0\log(0/q)=0$ for $q>0$, since 
$\lim_{x\downarrow 0} x\log(x/q)=0$.

Because $\widehat p_i=0$ for $i>k$, the divergence $\KL(P\Vert\widehat P)$ is infinite unless the truncation error is zero, whereas $\KL(\widehat P\Vert P)$ remains finite and is therefore used throughout.

\paragraph{Basic properties.}
\begin{itemize}
  \item \textbf{Shift invariance:} For any constant $c\in\R$, replacing every score $s_i$ by $s_i+c$ leaves $P$ and $\widehat P$ unchanged.
  \item \textbf{Normalization:} $\sum_i p_i = \sum_i \widehat p_i = 1$.
  \item \textbf{Ordering convention:} $(s_{1},\ldots,s_{n})$ always refers to scores sorted in non-increasing order.
\end{itemize}

\subsection{Notation Summary}

\begin{table}[h]
\centering
\renewcommand{\arraystretch}{1.1}
\begin{tabular}{ll}
\toprule
Symbol & Description \\
\midrule
$n$ & Sequence length (number of keys) \\
$d,\,d_v$ & Key/query and value dimensions \\
$q,\,k_i,\,v_i$ & Query, key, and value vectors \\
$s_i = q^\top k_i/\sqrt d$ & Scaled similarity scores, with $s_1 \ge \cdots \ge s_n$ by convention \\
$P=(p_i)$ & Softmax distribution, $p_i=e^{s_i}/\sum_j e^{s_j}$ \\
$\widehat P=(\widehat p_i)$ & Top-$k$ truncated softmax distribution \\
$I_k$ & Indices of the $k$ largest scores \\
$\TV(P,\widehat P)$ & Total variation distance between distributions \\
$\KL(\widehat P\Vert P)$ & Kullback--Leibler divergence (finite direction) \\
$\Delta_k = s_{k}-s_{k+1}$ & Boundary score gap \\
$\mu,\sigma$ & Mean and standard deviation of Gaussian score model \\
$\Phi, \Phi_c$ & Standard normal CDF and survival function \\
$t_\varepsilon$ & Threshold achieving tail mass $\varepsilon$ \\
$k_\varepsilon$ & Expected Top-$k$ size ensuring error $\varepsilon$ \\
\bottomrule
\end{tabular}
\caption{Summary of main notation used throughout the paper.}
\label{tab:notation}
\end{table}

\section{Distribution-Level Theory of Top-$k$ Truncation}
\label{sec:dist-theory}

In this section we study the effect of Top-$k$ truncation purely at the level of the
softmax \emph{distribution}. We first establish two fundamental identities relating the
truncation error in total variation and KL divergence, and then derive deterministic
upper bounds that depend only on simple score statistics (gaps and block masses).

\subsection{Tail--Mass Identity for Total Variation}

The most immediate consequence of truncation is that the total variation (TV) distance
between $P$ and $\widehat P$ equals the probability mass that is removed from the tail.

\begin{lemma}[Tail--mass identity]\label{lem:tv_tailmass}
For Top-$k$ truncation as above,
\[
\TV(P,\widehat P) = \sum_{i>k} p_i.
\]
\end{lemma}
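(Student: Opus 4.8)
The plan is to compute the total variation distance directly from its definition by splitting the index set at the truncation boundary $k$. Writing $\TV(P,\widehat P) = \tfrac12\sum_{i=1}^n |p_i - \widehat p_i|$, I would separate the sum into the head indices $i \le k$, where $\widehat p_i = e^{s_i}/\sum_{j\le k} e^{s_j}$, and the tail indices $i > k$, where $\widehat p_i = 0$. The tail part contributes $\tfrac12\sum_{i>k} |p_i - 0| = \tfrac12\sum_{i>k} p_i$, which is immediate.

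The core of the argument is showing that the head part contributes the same amount, $\tfrac12\sum_{i>k} p_i$, so that the two halves combine to give $\sum_{i>k} p_i$. First I would introduce the notation $Z = \sum_{j=1}^n e^{s_j}$ and $Z_k = \sum_{j=1}^k e^{s_j}$ for the full and truncated partition functions, so that $p_i = e^{s_i}/Z$ and, for $i \le k$, $\widehat p_i = e^{s_i}/Z_k$. The key observation is that $Z_k \le Z$, hence $\widehat p_i \ge p_i$ for every head index $i \le k$; in other words, truncation only \emph{increases} the mass on the retained keys. This lets me drop the absolute value on the head, giving $\tfrac12\sum_{i\le k}(\widehat p_i - p_i)$.

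The remaining step is a normalization bookkeeping argument. Since both $P$ and $\widehat P$ are probability distributions summing to $1$, and $\widehat p_i = 0$ for $i > k$, the total head mass of $\widehat P$ is $\sum_{i\le k} \widehat p_i = 1$, while the total head mass of $P$ is $\sum_{i\le k} p_i = 1 - \sum_{i>k} p_i$. Subtracting, $\sum_{i\le k}(\widehat p_i - p_i) = \sum_{i>k} p_i$, so the head contribution equals $\tfrac12\sum_{i>k} p_i$ as claimed. Adding head and tail contributions yields $\TV(P,\widehat P) = \sum_{i>k} p_i$.

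I do not expect any genuine obstacle here; the result is essentially a restatement of the general fact that total variation between two distributions equals the total positive (equivalently, negative) deviation, specialized to the case where the support of $\widehat P$ is nested inside that of $P$. The only point requiring a line of care is verifying the sign $\widehat p_i \ge p_i$ on the head so that the absolute values can be removed cleanly; everything else is conservation of mass. A slicker one-line alternative would be to invoke the identity $\TV(P,\widehat P) = \sum_i (p_i - \widehat p_i)_+$ and note that the positive part is supported exactly on the tail $\{i>k\}$, where $\widehat p_i = 0$, but I would present the explicit split above to keep the argument self-contained.
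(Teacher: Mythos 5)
Your proof is correct and follows essentially the same route as the paper's: split the sum at the boundary $k$, use $\widehat p_i \ge p_i$ on the head (smaller normalizing denominator) to drop absolute values, and close with the mass-conservation identity $\sum_{i\le k}(\widehat p_i - p_i) = \sum_{i>k} p_i$. No gaps; the argument matches the appendix proof step for step.
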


\noindent\emph{Proof.} See Appendix~\ref{lem:tv_tailmass}.

\begin{remark}[Shift invariance]
\label{rem:shift}
For any constant $c\in\R$, replacing every score $s_i$ by $s_i+c$ leaves both $P$ and $\widehat P$ unchanged, since each is normalized by the same exponential factor $e^c$. Consequently, all subsequent results depend only on score \emph{differences}, such as gaps $s_i-s_j$, and are invariant to global shifts of the score vector.
\end{remark}

\subsection{Exact Relationship Between TV and KL Divergence}

Building on the tail--mass identity, we next show that the Kullback--Leibler (KL)
divergence between the truncated and full distributions admits a closed-form expression,
leading to an exact exponential relationship with the total-variation distance.

\begin{theorem}[Exact TV--KL identity]\label{thm:tv-kl}
For the distributions $P$ and $\widehat P$ defined above,
\[
\KL(\widehat P\Vert P)
= \log \frac{\sum_{j=1}^{n} e^{s_j}}{\sum_{j=1}^{k} e^{s_j}},
\qquad
\TV(P,\widehat P) = 1 - e^{-\KL(\widehat P\Vert P)}.
\]
\end{theorem}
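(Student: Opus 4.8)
The plan is to prove the two stated identities in sequence, deriving the KL formula first and then feeding it into the TV identity. Both results hinge on the key structural fact that $\widehat P$ is supported entirely on the head indices $\{1,\dots,k\}$, and that on those indices the truncated and full distributions differ only by a common normalization constant. Write $Z = \sum_{j=1}^n e^{s_j}$ for the full partition function and $Z_k = \sum_{j=1}^k e^{s_j}$ for the head partition function, so that $p_i = e^{s_i}/Z$ and, for $i \le k$, $\widehat p_i = e^{s_i}/Z_k$.

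First I would compute $\KL(\widehat P \Vert P)$ directly from the definition. Since $\widehat p_i = 0$ for $i > k$, the sum collapses to the head indices, and for each such $i$ the log-ratio is
\[
\log\frac{\widehat p_i}{p_i}
= \log\frac{e^{s_i}/Z_k}{e^{s_i}/Z}
= \log\frac{Z}{Z_k},
\]
which is the same constant for every $i \le k$. Pulling this constant out of the sum and using $\sum_{i=1}^k \widehat p_i = 1$ gives
\[
\KL(\widehat P \Vert P) = \log\frac{Z}{Z_k}
= \log\frac{\sum_{j=1}^n e^{s_j}}{\sum_{j=1}^k e^{s_j}},
\]
which is the first claimed formula. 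The crucial observation making this clean is that the exponential weight $e^{s_i}$ cancels in the ratio, leaving a quantity independent of $i$; this is exactly the feature that distinguishes Top-$k$ truncation from a generic pair of distributions and is what converts the usual inequalities into an equality.

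For the TV identity I would start from the tail--mass identity of Lemma~\ref{lem:tv_tailmass}, which gives $\TV(P,\widehat P) = \sum_{i>k} p_i$. Rewriting the tail mass in terms of the partition functions,
\[
\sum_{i>k} p_i = \frac{\sum_{j>k} e^{s_j}}{Z}
= \frac{Z - Z_k}{Z} = 1 - \frac{Z_k}{Z}.
\]
Then I would simply substitute the KL formula just derived, observing that $Z_k/Z = e^{-\log(Z/Z_k)} = e^{-\KL(\widehat P \Vert P)}$, to conclude
\[
\TV(P,\widehat P) = 1 - \frac{Z_k}{Z} = 1 - e^{-\KL(\widehat P \Vert P)}.
\]
I do not anticipate a genuine obstacle here: the argument is essentially algebraic bookkeeping with the two partition functions. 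The only point requiring a little care is the degenerate boundary case where the truncation is lossless (tail mass zero, i.e.\ $Z_k = Z$), in which case both sides vanish and the KL divergence is zero; one should also note that the convention $0\log(0/q)=0$ adopted earlier legitimizes dropping the tail terms in the KL sum, so that no vanishing-denominator issue arises from the zeros of $\widehat P$.
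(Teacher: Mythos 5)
Your proposal is correct and follows essentially the same route as the paper's own proof: compute $\KL(\widehat P\Vert P)$ directly from the definition, observe that the log-ratio $\log(\widehat p_i/p_i)=\log(Z/Z_k)$ is constant over the head so the sum collapses via $\sum_{i\le k}\widehat p_i=1$, and then combine $e^{-\KL(\widehat P\Vert P)}=Z_k/Z$ with the tail--mass identity of Lemma~\ref{lem:tv_tailmass} to obtain the TV formula. Your added remarks on the $0\log(0/q)=0$ convention and the lossless boundary case are fine but do not change the argument.
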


\begin{proof}
See Appendix~\ref{thm:tv-kl}.
\end{proof}

\begin{remark}[Asymmetry of the KL direction]
The divergence $\KL(P\Vert\widehat P)$ is infinite unless the discarded tail mass is zero, because $\widehat p_i=0$ for $i>k$.  The finite quantity $\KL(\widehat P\Vert P)$ therefore provides the meaningful asymmetric comparison for truncated softmax distributions.
\end{remark}

\subsection{Consequences}

Theorem~\ref{thm:tv-kl} reveals that the truncation error measured by total variation is
exactly the complement of an exponential in the KL divergence. Thus the two measures are
not merely related by an inequality (as in Pinsker's bound), but by an identity specific
to Top-$k$ truncation of a normalized exponential family.

Moreover, for small tail mass $\tau=\TV(P,\widehat P)\ll1$, the expansion
\[
\KL(\widehat P\Vert P) = -\log(1-\tau) = \tau + O(\tau^2)
\]
shows that $\KL$ and $\TV$ coincide to first order. Consequently, either quantity can serve
as a mathematically precise proxy for truncation error, depending on analytical convenience.

\subsection{Deterministic Bounds via Score Gaps and Blocks}

The exact $\TV$--$\KL$ identity of Theorem~\ref{thm:tv-kl} expresses the truncation error
as a function of exponential sums over all $n$ scores. While this form is precise, it still
depends on the entire score vector. In this subsection we derive \emph{deterministic bounds}
that depend only on simple quantities---notably the boundary gap
\[
\Delta_k \;=\; s_{k} - s_{k+1},
\]
and, more generally, aggregated \emph{block gaps}. These bounds provide computable and
interpretable certificates of small truncation error.

\subsubsection{A Pointwise Gap Bound}

\begin{theorem}[Deterministic gap bound]
\label{thm:gapbound}
Assume the scores are ordered $s_1 \ge \cdots \ge s_n$, and define $\Delta_k := s_k - s_{k+1}$.
Then the total-variation error of Top-$k$ truncation satisfies
\[
\TV(P,\widehat P) \le \frac{(n-k)e^{s_{k+1}}}{k e^{s_k} + (n-k)e^{s_{k+1}}}
= \frac{1}{1+\frac{k}{\,n-k\,} e^{\Delta_k}}.
\]
\end{theorem}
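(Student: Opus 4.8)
My plan is to reduce the bound to a monotonicity argument on a single two-variable function, starting from the tail--mass identity already established in Lemma~\ref{lem:tv_tailmass}. Writing the unnormalized head and tail masses as
\[
\Shead = \sum_{j=1}^{k} e^{s_j}, \qquad \Stail = \sum_{i>k} e^{s_i},
\]
the tail--mass identity gives
\[
\TV(P,\widehat P) = \sum_{i>k} p_i = \frac{\Stail}{\Shead + \Stail}.
\]
So the whole theorem amounts to controlling this ratio using only the boundary scores $s_k$ and $s_{k+1}$.

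The next step is to bound $\Stail$ from above and $\Shead$ from below using the ordering $s_1\ge\cdots\ge s_n$. For the $n-k$ tail indices, each satisfies $s_i \le s_{k+1}$, hence $e^{s_i}\le e^{s_{k+1}}$ and $\Stail \le (n-k)e^{s_{k+1}}$. For the $k$ head indices, each satisfies $s_j \ge s_k$, hence $e^{s_j}\ge e^{s_k}$ and $\Shead \ge k\,e^{s_k}$. These two elementary monotone estimates are where the boundary gap $\Delta_k = s_k - s_{k+1}$ enters.

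The step I would treat most carefully is combining these two one-sided bounds, since I am simultaneously enlarging the numerator (via $\Stail$) and shrinking part of the denominator (via $\Shead$), and I must verify both changes push the ratio in the same direction. Setting $g(x,y) = y/(x+y)$ for $x,y>0$, a direct computation gives $\partial_y g = x/(x+y)^2 > 0$ and $\partial_x g = -y/(x+y)^2 < 0$, so $g$ is increasing in the tail argument and decreasing in the head argument. Applying the substitutions in sequence,
\[
\TV(P,\widehat P) = g(\Shead,\Stail)
  \;\le\; g\bigl(k\,e^{s_k},\,\Stail\bigr)
  \;\le\; g\bigl(k\,e^{s_k},\,(n-k)e^{s_{k+1}}\bigr)
  = \frac{(n-k)e^{s_{k+1}}}{k\,e^{s_k} + (n-k)e^{s_{k+1}}},
\]
where the first inequality uses $\Shead \ge k\,e^{s_k}$ with $\partial_x g<0$, and the second uses $\Stail \le (n-k)e^{s_{k+1}}$ with $\partial_y g>0$.

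Finally, to recover the compact $\Delta_k$ form I would divide numerator and denominator of the last expression by $(n-k)e^{s_{k+1}}$, giving
\[
\frac{(n-k)e^{s_{k+1}}}{k\,e^{s_k} + (n-k)e^{s_{k+1}}}
= \frac{1}{1 + \frac{k}{\,n-k\,}\,e^{s_k - s_{k+1}}}
= \frac{1}{1 + \frac{k}{\,n-k\,}\,e^{\Delta_k}},
\]
which completes the argument. I do not anticipate a genuine obstacle here; the only place where care is required is confirming the directional consistency of the joint substitution, which the sign analysis of $\partial_x g$ and $\partial_y g$ settles.
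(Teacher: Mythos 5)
Your proof is correct and takes essentially the same approach as the paper: both start from the tail--mass identity $\TV(P,\widehat P)=S_{\text{tail}}/(S_{\text{top}}+S_{\text{tail}})$, bound $S_{\text{top}}\ge k\,e^{s_k}$ and $S_{\text{tail}}\le (n-k)\,e^{s_{k+1}}$ via the ordering, and substitute. Your explicit sign check on $g(x,y)=y/(x+y)$ merely makes rigorous the directional-consistency step that the paper's proof leaves implicit.
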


\begin{proof}
Write $S_{\text{top}}=\sum_{j\le k}e^{s_{j}}$ and $S_{\text{tail}}=\sum_{j>k}e^{s_{j}}$.
Because $s_{j}\ge s_{k}$ for $j\le k$ and $s_{j}\le s_{k+1}$ for $j>k$, we have
\[
S_{\text{top}}\ge k\, e^{s_{k}},\qquad
S_{\text{tail}}\le (n-k)\, e^{s_{k+1}}.
\]
Using Lemma~\ref{lem:tv_tailmass}, the total-variation error is
\[
\TV(P,\widehat P)
= \frac{S_{\text{tail}}}{S_{\text{top}}+S_{\text{tail}}},
\]
which gives
\[
\TV(P,\widehat P)
\le
\frac{(n-k)e^{s_{k+1}}}{k e^{s_{k}} + (n-k)e^{s_{k+1}}}
=
\frac{1}{1+\frac{k}{\,n-k\,} e^{\Delta_k}}.
\]
\end{proof}

\begin{remark}
The bound in Theorem~\ref{thm:gapbound} is tight in the extremal case where all Top-$k$ scores equal $s_{k}$ and all remaining scores equal $s_{k+1}$.
\end{remark}

\paragraph{Interpretation.}
A single large boundary gap $\Delta_k$ exponentially suppresses the tail mass.
For instance, with $n = 10^4$ and $k = 32$, to ensure $\TV\le 10^{-4}$ one needs
$\Delta_k \gtrsim 15$ (by Theorem~\ref{thm:gapbound}).
Such a $\Delta_k$ ensures that the truncated attention output differs from the full one by less than $0.01\%$ in total probability mass.

\subsubsection{A Pigeonhole Lemma for Multiple Gaps}
\label{sec:pigeonhole-multi-gap}

For finer control one may use several adjacent score gaps of the ordered list
$s_{1}\ge\cdots\ge s_{n}$.
Define the boundary gaps
\[
\Delta_j \;:=\; s_{j} - s_{j+1} \qquad (1\le j<n),
\]
and the local average of the next $m{+}1$ gaps past the boundary $k$,
\[
\bar\Delta_{m+1} \;:=\; \frac{1}{m+1}\sum_{r=k}^{k+m} \Delta_r
\;=\; \frac{s_{k}-s_{k+m+1}}{m+1}.
\]

\begin{lemma}[Multi-gap tail split]
\label{lem:multigap}
Let $1\le k<n$ and $m\ge 0$ with $k+m+1\le n$.
Then the total-variation error of Top-$k$ truncation satisfies
\begin{equation}
\label{eq:multigap-main}
\TV(P,\widehat P)
\;\le\;
\frac{1}{k}\Big[
\,m\,e^{\,s_{k+1}-s_{k}} \;+\; (n-k-m)\,e^{\,s_{k+m+1}-s_{k}}\Big].
\end{equation}
Equivalently, in terms of gaps,
\begin{equation}
\label{eq:multigap-gaps}
\TV(P,\widehat P)
\;\le\;
\frac{1}{k}\Big[
\,m\,e^{-\Delta_k} \;+\; (n-k-m)\,e^{-(m+1)\bar\Delta_{m+1}}\Big].
\end{equation}
\end{lemma}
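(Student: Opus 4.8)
The plan is to reduce the claim to a single exponential tail sum and then split that sum at a \emph{refined} boundary, which is the pigeonhole content of the lemma. First I would invoke the tail-mass identity (Lemma~\ref{lem:tv_tailmass}) in its ratio form: writing $S_{\text{top}}=\sum_{j\le k}e^{s_j}$ and $S_{\text{tail}}=\sum_{j>k}e^{s_j}$, we have
\[
\TV(P,\widehat P)=\frac{S_{\text{tail}}}{S_{\text{top}}+S_{\text{tail}}}\le\frac{S_{\text{tail}}}{S_{\text{top}}},
\]
so it suffices to bound the numerator from above and the denominator from below. For the denominator I would argue exactly as in the proof of Theorem~\ref{thm:gapbound}: each of the top $k$ scores is at least $s_k$, giving $S_{\text{top}}\ge k\,e^{s_k}$.

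The key step is the two-level upper bound on $S_{\text{tail}}$. Instead of treating all $n-k$ tail terms uniformly (which would recover only the single-gap bound), I would cut the tail at the refined index $k+m$. By monotonicity of the sorted scores, the $m$ indices $j\in\{k+1,\dots,k+m\}$ satisfy $s_j\le s_{k+1}$, while the remaining $n-k-m$ indices $j>k+m$ satisfy $s_j\le s_{k+m+1}$, so that
\[
S_{\text{tail}}\le m\,e^{s_{k+1}}+(n-k-m)\,e^{s_{k+m+1}}.
\]
Dividing by $k\,e^{s_k}$ and factoring $e^{-s_k}$ into each exponent produces inequality~\eqref{eq:multigap-main} directly.

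Finally, the gap form~\eqref{eq:multigap-gaps} follows by substituting $s_{k+1}-s_k=-\Delta_k$ and, via the telescoping identity $\sum_{r=k}^{k+m}\Delta_r=s_k-s_{k+m+1}=(m+1)\bar\Delta_{m+1}$, the exponent $s_{k+m+1}-s_k=-(m+1)\bar\Delta_{m+1}$. I do not expect a serious obstacle: the argument is a strict refinement of the single-gap bound, and the only care needed is the index bookkeeping—checking that the two blocks genuinely have sizes $m$ and $n-k-m$ and that the hypothesis $k+m+1\le n$ keeps the second exponent $e^{s_{k+m+1}-s_k}$ well defined. The one deliberate choice worth flagging is discarding the tail mass from the denominator (bounding $S_{\text{top}}+S_{\text{tail}}$ below by $S_{\text{top}}$ alone): this slight looseness is precisely what makes the resulting certificate depend only on the gaps $\Delta_k$ and $\bar\Delta_{m+1}$ rather than on the full score vector.
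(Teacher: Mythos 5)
Your proposal is correct and follows essentially the same route as the paper's proof: the tail-mass identity, the lower bound $S_{\text{top}}\ge k\,e^{s_k}$, the split of $S_{\text{tail}}$ at index $k+m$ with the monotonicity bounds $m\,e^{s_{k+1}}+(n-k-m)\,e^{s_{k+m+1}}$, and the telescoping substitution for the gap form. No gaps; the index bookkeeping and the deliberate relaxation of the denominator are exactly as in the paper.
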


\begin{proof}
Write $S_{\text{top}}=\sum_{j\le k}e^{s_{j}}$ and $S_{\text{tail}}=\sum_{j>k}e^{s_{j}}$.
Since $s_{j}\ge s_{k}$ for $j\le k$, we have $S_{\text{top}}\ge k\,e^{s_{k}}$.

Split the tail into the first $m$ discarded entries and the remainder:
\[
S_{\text{tail}}
=\!\sum_{j=k+1}^{k+m}e^{s_{j}}\;+\!\sum_{j=k+m+1}^{n}e^{s_{j}}.
\]
By monotonicity, for $k{+}1\le j\le k{+}m$ we have $s_{j}\le s_{k+1}$, hence
$\sum_{j=k+1}^{k+m}e^{s_{j}}\le m\,e^{s_{k+1}}$; and for $j\ge k{+}m{+}1$,
$s_{j}\le s_{k+m+1}$, hence
$\sum_{j=k+m+1}^{n}e^{s_{j}}\le (n-k-m)\,e^{s_{k+m+1}}$.
Therefore
\[
\TV(P,\widehat P)
=\frac{S_{\text{tail}}}{S_{\text{top}}+S_{\text{tail}}}
\le \frac{S_{\text{tail}}}{S_{\text{top}}}
\le \frac{ m\,e^{s_{k+1}} + (n-k-m)\,e^{s_{k+m+1}} }{ k\,e^{s_{k}} },
\]
which gives \eqref{eq:multigap-main}. Using
$s_{k+1}-s_{k}=-\Delta_k$ and
$s_{k+m+1}-s_{k}= -\sum_{r=k}^{k+m}\Delta_r = -(m{+}1)\bar\Delta_{m+1}$,
we obtain \eqref{eq:multigap-gaps}.
\end{proof}

\subsection{Blockwise Deterministic Bounds}
\label{sec:block_bounds}

When attention is computed in structured partitions—such as heads, local windows, or tiles—it is natural to consider truncation at the block level.  
Let $\mathcal{B}=\{B_1,\dots,B_M\}$ be a disjoint partition of indices, and let each block contribute total unnormalized mass
\[
Z_j = \sum_{i\in B_j} e^{s_i}, \qquad
w_j = \log Z_j .
\]
We order the block masses in non-increasing order
\[
Z_{(1)} \ge Z_{(2)} \ge \dots \ge Z_{(M)},
\]
and denote by the Top-$\alpha$ blocks $\{B_{(1)},\dots,B_{(\alpha)}\}$ the kept set after truncation.
Let
\[
Z_{\text{head}} := \sum_{j\le\alpha} Z_{(j)},
\qquad
Z_{\text{tail}} := \sum_{j>\alpha} Z_{(j)}.
\]
The overall total variation error equals the normalized tail mass,
\[
\TV(P,\widehat P)
   = \frac{Z_{\mathrm{tail}}}{Z_{\mathrm{head}}+Z_{\mathrm{tail}}}
   = \sum_{i\notin\cup_{j\le\alpha}B_{(j)}} p_i, 
   \qquad p_i = \frac{e^{s_i}}{\sum_k e^{s_k}} .
\]

\paragraph{Gap–based bound (sharper form).}
For an integer $1\le \alpha < M$, let the inter-block gap be
\[
\Delta_{\mathrm{blk}} = w_{(\alpha)} - w_{(\alpha+1)}
  = \log\!\frac{Z_{(\alpha)}}{Z_{(\alpha+1)}} .
\]
Then
\begin{equation}
\label{eq:block_bound}
\TV(P,\widehat P)
   \;\le\;
   \frac{1}{\,1+\dfrac{\alpha}{M-\alpha}\,e^{\Delta_{\mathrm{blk}}}\,}
   \;\le\;
   \frac{M-\alpha}{\alpha}\,e^{-\Delta_{\mathrm{blk}}}.
\end{equation}
\emph{Proof.} Using $Z_{\text{head}}\!\ge\!\alpha Z_{(\alpha)}$ and $Z_{\text{tail}}\!\le\!(M-\alpha)Z_{(\alpha+1)}$,
\[
\TV(P,\widehat P)
=\frac{Z_{\text{tail}}}{Z_{\text{head}}+Z_{\text{tail}}}
\ = \frac{1}{1+Z_{\text{head}}/Z_{\text{tail}}}
\le \frac{1}{1+\frac{\alpha}{M-\alpha}\frac{Z_{(\alpha)}}{Z_{(\alpha+1)}}}
= \frac{1}{1+\frac{\alpha}{M-\alpha}e^{\Delta_{\mathrm{blk}}}} .
\]

\paragraph{Existence of a sufficient gap.}
Among the $M$ ordered block masses, there always exists at least one pair of adjacent blocks satisfying
\[
w_{(j)} - w_{(j+1)} \ge \frac{w_{(1)} - w_{(M)}}{M-1},
\]
since
\(
\sum_{j=1}^{M-1} \bigl(w_{(j)} - w_{(j+1)}\bigr)
= w_{(1)} - w_{(M)}.
\)
Thus, in practice even modest variations in block mass typically produce a nontrivial certified gap.

\paragraph{Discussion.}
Equation~\eqref{eq:block_bound} provides a simple, fully verifiable certificate for 
structured sparsification: keeping the top-$\alpha$ blocks guarantees that 
the total variation error is bounded by
$1\big/\!\big(1+\tfrac{\alpha}{M-\alpha}e^{\Delta_{\mathrm{blk}}}\big)$
(and hence by $(M-\alpha)/\alpha \cdot e^{-\Delta_{\mathrm{blk}}}$).
This bound depends only on observable block statistics and requires no probabilistic assumptions.
In implementation, $\Delta_{\mathrm{blk}}$ can be estimated from a small subset of scores
using inexpensive upper bounds, enabling efficient certified pruning across heads or local windows.

\subsection{Blockwise Mass-Certificate Bounds}
\label{sec:blockwise-mc}

While the gap-based bound of Section 4.5 relies on a single inter-block gap
$\Delta_{\text{blk}}$, it may be conservative when several near-top blocks share similar mass.
A more adaptive approach is to certify directly on \emph{blockwise sums}, in analogy to
the Mass-Certificate Search (MC-Search) of Section~7.2.
Here, the goal is to verify that the retained set of $\alpha$ blocks captures
a sufficient fraction of the total mass, even when block gaps are small.

\paragraph{Setup.}
Let $\mathcal{B}=\{B_1,\ldots,B_M\}$ be a disjoint block partition with
block masses $Z_b=\sum_{i\in B_b} e^{s_i}$ and block probabilities
$p_b = Z_b / \sum_{r=1}^M Z_r$.
For any candidate kept set $K\subseteq\{1,\ldots,M\}$ with $|K|=\alpha$,
the total-variation error equals the normalized \emph{tail mass},
\[
\TV(P,\widehat P)
\,=\,
\frac{S_{\text{tail}}}{S_{\text{head}}+S_{\text{tail}}},
\qquad
S_{\text{head}}=\sum_{b\in K} Z_b,
\quad
S_{\text{tail}}=\sum_{b\notin K} Z_b.
\]
In practice, the exact $Z_b$ may be unavailable, but can be bounded via partial scoring
or block-level upper/lower estimates.

\paragraph{Mass-certificate bound.}
Suppose each block $b$ has known lower and upper bounds $L_b\le Z_b\le U_b$.
Let $K_\alpha(L)$ denote the $\alpha$ blocks with the largest lower bounds $L_b$
(this choice maximizes the kept-mass lower bound).
Then an upper bound on the total-variation error for the \emph{specific} kept set $K_\alpha(L)$,
consistent with these bounds, is
\begin{equation}
\label{eq:block-mc-bound}
\TV(P,\widehat P)
\;\le\;
\frac{S^{+}(U)}{S^{-}(L)+S^{+}(U)},
\qquad
S^{-}(L)=\sum_{b\in K_\alpha(L)} L_b,
\quad
S^{+}(U)=\sum_{b\notin K_\alpha(L)} U_b.
\end{equation}
This bound is deterministic and verifiable:
if the right-hand side is below $\varepsilon$, the corresponding block set $K_\alpha(L)$
is guaranteed to satisfy $\TV\le\varepsilon$ for all realizations of the true block masses.

\paragraph{Interpretation.}
Equation~\eqref{eq:block-mc-bound} is the blockwise analogue of the tokenwise MC-certificate of Section~\ref{sec:algorithms} .
The quantity $S^{-}(L)$ is a conservative underestimate of the kept-block mass, while
$S^{+}(U)$ is a conservative overestimate of the discarded-block mass.
Their ratio yields a provable upper bound on the tail probability fraction,
\[
\TV
\;\le\;
\frac{S^{+}(U)}{S^{-}(L)+S^{+}(U)}
\,=\,
\frac{1}{\,1+\dfrac{S^{-}(L)}{S^{+}(U)}\,}.
\]
Because both $L_b$ and $U_b$ tighten monotonically as more tokens are scored,
the bound improves adaptively and converges to equality once all blocks are fully evaluated.

\medskip
\noindent
\textbf{Transition to Output Bounds.}
The results in this section provide a hierarchy of \emph{distribution-level} certificates:
exact identities in TV and KL, pointwise and multi-gap bounds, and blockwise gap- and
mass-based bounds. In the next section we show how these certified deviations in
$\TV(P,\widehat P)$ translate into guarantees on the actual attention \emph{outputs},
yielding tight $\ell_2$ error bounds that explicitly incorporate the geometry of the
value matrix $V$.

\section{Output-Level Error Bounds and Geometry of the Value Matrix}
\label{sec:output-bounds}

Having established distribution-level control via total variation and KL, we now quantify
how Top-$k$ truncation affects the actual \emph{attention outputs}. We first convert TV
bounds into basic $\ell_2$ guarantees, then refine them using an exact head--tail
decomposition, geometric (diameter) and variance-based bounds, and finally formulate a
graph-theoretic problem that characterizes the optimal head--tail partition.

\subsection{Basic \texorpdfstring{$\ell_2$}{l2} Bounds via Total Variation}

The TV distance bounds derived in the previous section directly translate into Euclidean
errors on the resulting attention outputs.

\begin{proposition}[Output error]
\label{prop:output-error}
Let $V\in\R^{n\times d_v}$ and assume $\|v_j\|_2\le C$ for all rows $v_j$.
Then
\[
\bigl\|\Attn(q,K,V)-\Attn_k(q,K,V)\bigr\|_2
\le 2C\,\TV(P,\widehat P).
\]
\end{proposition}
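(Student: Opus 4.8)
The plan is to reduce the vector error to the $\ell_1$ discrepancy between the two distributions and then invoke the definition of total variation directly. First I would substitute the explicit forms $\Attn(q,K,V)=\sum_{i=1}^n p_i v_i$ and $\Attnk(q,K,V)=\sum_{i=1}^k \widehat p_i v_i$ recorded in Section~\ref{sec:prelim}. Since $\widehat p_i=0$ for every $i>k$, the truncated sum may be extended over all $n$ indices without changing its value, so the difference collapses into a single sum over a common index set:
\[
\Attn(q,K,V)-\Attnk(q,K,V)=\sum_{i=1}^n (p_i-\widehat p_i)\,v_i .
\]

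Next I would apply the triangle inequality in $\R^{d_v}$ together with the uniform norm bound $\|v_i\|_2\le C$, which pulls the scalar coefficients out of the norm:
\[
\Bigl\|\sum_{i=1}^n (p_i-\widehat p_i)\,v_i\Bigr\|_2
\;\le\; \sum_{i=1}^n |p_i-\widehat p_i|\,\|v_i\|_2
\;\le\; C\sum_{i=1}^n |p_i-\widehat p_i| .
\]
Finally, recognizing that $\sum_{i=1}^n |p_i-\widehat p_i| = 2\,\TV(P,\widehat P)$ straight from the definition of the total-variation distance delivers the claimed inequality $2C\,\TV(P,\widehat P)$.

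This argument presents no genuine obstacle: it is a single application of the triangle inequality, and the only point requiring care is the factor of two, which originates from the $\tfrac12$ normalization in the definition of $\TV$ and must be tracked rather than dropped. It is worth emphasizing that the resulting bound is deliberately crude—each value vector's contribution is controlled by the worst-case magnitude $C$ rather than by the actual spread of the vectors—so the bound degrades when the $v_i$ are large in norm but clustered. This looseness is precisely what motivates the exact head--tail decomposition and the cross-set diameter bound $\mathrm{diam}_{H,T}$ developed in the following subsections, where the factor $2C$ is replaced by a geometric quantity reflecting the true separation between the retained and discarded value vectors.
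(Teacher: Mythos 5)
Your proof is correct and follows exactly the paper's own argument: write the difference as $\sum_i (p_i-\widehat p_i)v_i$, apply the triangle inequality with $\|v_i\|_2\le C$, and use $\sum_i|p_i-\widehat p_i|=2\,\TV(P,\widehat P)$ from the definition of total variation. The added remarks on the looseness of the $2C$ factor and its refinement via the head--tail diameter are accurate but not part of the proof itself.
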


\begin{proof}
See Appendix~\ref{prop:output}.
\end{proof}

This provides a certified $\ell_2$ guarantee on the attention outputs: once a sufficient
score gap is observed, the truncation error on the resulting value vector is bounded by
$2C$ times the small total-variation mass.

\subsection{Exact Head--Tail Decomposition}

We next refine this crude bound by expressing the output error \emph{exactly} in terms of
conditional means over the retained and discarded sets.

\begin{theorem}[Exact head--tail identity]
\label{thm:head-tail-identity}
Let $P=(p_i)_{i=1}^n$ be the attention weights and $\widehat P=(\widehat p_i)$ their Top-$k$
truncation. Let $\tau := \sum_{i>k} p_i$ denote the tail mass and define the conditional
means
\[
\mu_{\mathrm{head}} := \frac{\sum_{i\le k} p_i v_i}{1-\tau},
\qquad
\mu_{\mathrm{tail}} := \frac{\sum_{i>k} p_i v_i}{\tau}
\quad (\mu_{\mathrm{tail}}=0\text{ if }\tau=0).
\]
Then the exact identity
\[
\Attn(q,K,V) - \Attn_k(q,K,V)
 = \tau\,(\mu_{\mathrm{tail}} - \mu_{\mathrm{head}})
\]
holds, so that
\[
\bigl\|\Attn(q,K,V) - \Attn_k(q,K,V)\bigr\|_2
   = \tau\,\|\mu_{\mathrm{tail}} - \mu_{\mathrm{head}}\|_2 .
\]
\textit{(Proof in Appendix~A.4.)}
\end{theorem}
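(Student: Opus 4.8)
The plan is to prove the exact head–tail identity by a direct algebraic decomposition of both attention outputs into their head (top-$k$) and tail (discarded) contributions, then recombining terms so that the common head mass cancels and the residual is expressed through the two conditional means. The key observation driving the whole argument is that $\widehat P$ is simply $P$ restricted to the top-$k$ indices and renormalized by the kept mass $1-\tau$, so $\widehat p_i = p_i/(1-\tau)$ for $i\le k$.

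First I would write $\Attn(q,K,V) = \sum_{i=1}^n p_i v_i = \sum_{i\le k} p_i v_i + \sum_{i>k} p_i v_i$. By the definition of the conditional means in the statement, the first sum equals $(1-\tau)\,\mu_{\mathrm{head}}$ and the second equals $\tau\,\mu_{\mathrm{tail}}$, giving the compact form
\[
\Attn(q,K,V) = (1-\tau)\,\mu_{\mathrm{head}} + \tau\,\mu_{\mathrm{tail}}.
\]
Next I would treat the truncated output similarly: using $\widehat p_i = p_i/(1-\tau)$ for $i\le k$ and $\widehat p_i = 0$ otherwise,
\[
\Attn_k(q,K,V) = \sum_{i\le k}\widehat p_i v_i = \frac{1}{1-\tau}\sum_{i\le k} p_i v_i = \mu_{\mathrm{head}}.
\]
Thus the truncated output is exactly the head conditional mean.

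Subtracting the two expressions then yields
\[
\Attn(q,K,V) - \Attn_k(q,K,V)
= \bigl[(1-\tau)\mu_{\mathrm{head}} + \tau\mu_{\mathrm{tail}}\bigr] - \mu_{\mathrm{head}}
= \tau\,(\mu_{\mathrm{tail}} - \mu_{\mathrm{head}}),
\]
which is precisely the claimed vector identity; taking Euclidean norms and pulling out the nonnegative scalar $\tau$ gives the stated norm equality. The only genuine subtlety, and the main thing I would be careful about, is the degenerate case $\tau=0$: here $\mu_{\mathrm{tail}}$ is defined to be $0$ by convention and $\widehat P = P$, so both sides vanish and the identity holds trivially; I would also note that the relation $\widehat p_i = p_i/(1-\tau)$ presupposes $1-\tau>0$, i.e.\ that the kept mass is positive, which is automatic whenever $k\ge 1$ since $p_1>0$. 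Beyond this boundary bookkeeping the proof is purely a rearrangement, so there is no substantive analytic obstacle.
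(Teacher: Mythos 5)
Your proof is correct and follows essentially the same route as the paper's: both hinge on the renormalization identity $\widehat p_i = p_i/(1-\tau)$ for $i\le k$ and then perform the same algebraic rearrangement, the only cosmetic difference being that you first express $\Attn = (1-\tau)\mu_{\mathrm{head}} + \tau\,\mu_{\mathrm{tail}}$ and $\Attn_k = \mu_{\mathrm{head}}$ before subtracting, whereas the paper computes $\sum_i (p_i-\widehat p_i)v_i$ directly and factors out $\tau$ at the end. Your explicit treatment of the $\tau=0$ edge case is a small bonus the paper leaves implicit.
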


\subsection{Head--Tail Diameter Bound}

The exact decomposition suggests a geometric control of the error via the spread of the
value vectors across the head--tail cut.

\begin{proposition}[Head--tail diameter bound]
\label{prop:head-tail-diam}
Define the cross--set diameter
\[
\mathrm{diam}_{H,T} := \max_{\substack{i>k\\ j\le k}} \|v_i - v_j\|_2 .
\]
Then
\[
\bigl\|\Attn(q,K,V) - \Attn_k(q,K,V)\bigr\|_2
   \le \tau\,\mathrm{diam}_{H,T}.
\]
If $\|v_i\|\le C$ for all $i$, then $\mathrm{diam}_{H,T}\le 2C$, recovering the classical
$2C\tau$ bound as a special case.
\textit{(Proof in Appendix A.5.)}
\end{proposition}

\subsection{Variance-Based Bounds}

We now complement the diameter bound with variance-based control that explicitly involves
the value variance under the full attention distribution.

\begin{proposition}[Centered $\chi^2$--variance bound]
\label{prop:chi2-var}
Let $\mu_P := \sum_i p_i v_i$ and define the value variance under $P$ by
\[
\operatorname{Var}_P(V) := \sum_{i=1}^n p_i\,\|v_i - \mu_P\|_2^2 .
\]
Then
\[
\bigl\|\Attn(q,K,V) - \Attn_k(q,K,V)\bigr\|_2
   \le \sqrt{D_{\chi^2}(\widehat P\Vert P)}\,\sqrt{\operatorname{Var}_P(V)},
\]
and for Top-$k$ truncation $D_{\chi^2}(\widehat P\Vert P)=\tau/(1-\tau)$.
\textit{(Proof in Appendix A.6.)}
\end{proposition}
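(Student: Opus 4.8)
The plan is to write the output discrepancy as a $P$-weighted covariance, bound its Euclidean norm by a vector-valued Cauchy--Schwarz inequality, and then evaluate $D_{\chi^2}(\widehat P\Vert P)$ in closed form using the special structure of Top-$k$ truncation. First I would expand the difference vector as
\[
\Attn(q,K,V)-\Attnk(q,K,V)=\sum_{i=1}^n (p_i-\widehat p_i)\,v_i .
\]
Since both $P$ and $\widehat P$ are probability vectors, $\sum_i (p_i-\widehat p_i)=0$, so I may subtract the constant vector $\mu_P=\sum_i p_i v_i$ inside the sum without changing it:
\[
\Attn(q,K,V)-\Attnk(q,K,V)=\sum_{i=1}^n (p_i-\widehat p_i)\,(v_i-\mu_P).
\]
This recentering is what turns a raw second moment into the \emph{centered} variance $\Var_P(V)$ and is essential to obtaining the stated form rather than $\sum_i p_i\|v_i\|_2^2$.

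Next I would bound the norm by duality. For any unit vector $u\in\R^{d_v}$, I split $p_i-\widehat p_i=\sqrt{p_i}\cdot\frac{p_i-\widehat p_i}{\sqrt{p_i}}$ (legitimate since $p_i>0$ for softmax weights, which is also why the direction $\KL(\cdot)$/$\chi^2$ against the full-support reference $P$ is finite) and apply the scalar Cauchy--Schwarz inequality:
\[
\Big\langle \sum_i (p_i-\widehat p_i)(v_i-\mu_P),\,u\Big\rangle \le \Big(\sum_i \frac{(p_i-\widehat p_i)^2}{p_i}\Big)^{1/2}\Big(\sum_i p_i\,\langle v_i-\mu_P,u\rangle^2\Big)^{1/2}.
\]
The first factor equals $\sqrt{D_{\chi^2}(\widehat P\Vert P)}$ by definition, and the second is at most $\sqrt{\Var_P(V)}$ because $\langle v_i-\mu_P,u\rangle^2\le \|v_i-\mu_P\|_2^2$ for $\|u\|_2=1$. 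Taking the supremum over $u$ yields $\|\Attn(q,K,V)-\Attnk(q,K,V)\|_2\le \sqrt{D_{\chi^2}(\widehat P\Vert P)}\,\sqrt{\Var_P(V)}$. I expect this duality step to be the main obstacle: one must route the vector norm through arbitrary test directions $u$ so that the direction-independent quantity $\Var_P(V)$ appears, instead of losing a factor through a cruder triangle-inequality bound on $\sum_i|p_i-\widehat p_i|\,\|v_i-\mu_P\|_2$.

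Finally, the identity $D_{\chi^2}(\widehat P\Vert P)=\tau/(1-\tau)$ is a short computation driven by the truncation structure. Using $\widehat p_i=e^{s_i}/S_{\text{top}}=p_i\,(S/S_{\text{top}})=p_i/(1-\tau)$ for $i\le k$ and $\widehat p_i=0$ for $i>k$, the surviving terms of $D_{\chi^2}(\widehat P\Vert P)=\sum_i \widehat p_i^2/p_i-1$ give
\[
\frac{1}{(1-\tau)^2}\sum_{i\le k}p_i-1=\frac{1-\tau}{(1-\tau)^2}-1=\frac{\tau}{1-\tau},
\]
where I used $1-\tau=\sum_{i\le k}p_i$. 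Substituting this value into the Cauchy--Schwarz bound completes the proof.
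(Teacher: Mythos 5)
Your proof is correct, but it follows a genuinely different route from the paper's. The paper first invokes its exact head--tail identity, $\Attn(q,K,V)-\Attnk(q,K,V)=\tau(\mu_{\text{tail}}-\mu_{\text{head}})$, and then applies a law-of-total-variance decomposition,
\[
\Var_P(V)=(1-\tau)\Var_{\text{head}}+\tau\Var_{\text{tail}}+\tau(1-\tau)\,\|\mu_{\text{tail}}-\mu_{\text{head}}\|_2^2,
\]
dropping the two within-group variance terms to get $\tau\|\mu_{\text{tail}}-\mu_{\text{head}}\|_2\le\sqrt{\tau/(1-\tau)}\sqrt{\Var_P(V)}$; the value $D_{\chi^2}(\widehat P\Vert P)=\tau/(1-\tau)$ is computed separately and only then matched to this expression. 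You instead recenter the raw difference $\sum_i(p_i-\widehat p_i)v_i$ by $\mu_P$ (using $\sum_i(p_i-\widehat p_i)=0$) and run a weighted Cauchy--Schwarz argument through arbitrary unit test directions $u$, which proves the inequality $\|\sum_i(p_i-\widehat p_i)v_i\|_2\le\sqrt{D_{\chi^2}(\widehat P\Vert P)}\sqrt{\Var_P(V)}$ for \emph{any} pair of distributions with $P$ of full support---the Top-$k$ structure enters only in the closed-form evaluation $D_{\chi^2}=\tau/(1-\tau)$, which you compute correctly via $\sum_i\widehat p_i^2/p_i-1$. Your argument is therefore more general and self-contained (it does not rely on Theorem~\ref{thm:head-tail-identity} or Lemma~\ref{lem:ltv}), whereas the paper's route makes the slack in the bound fully transparent: the inequality is an equality precisely when the within-head and within-tail variances vanish, i.e., when the values are constant on each side of the cut---a sharpness characterization your Cauchy--Schwarz step conceals.
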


\begin{corollary}[Centered KL--variance bound]
\label{cor:kl-var}
Since $\mathrm{KL}(\widehat P\Vert P)=-\log(1-\tau)$ for Top-$k$,
\[
\bigl\|\Attn(q,K,V) - \Attn_k(q,K,V)\bigr\|_2
   \le \sqrt{e^{\mathrm{KL}(\widehat P\Vert P)}-1}\,\sqrt{\operatorname{Var}_P(V)} .
\]
\textit{(Proof in Appendix A.6.)}
\end{corollary}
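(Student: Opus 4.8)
The plan is to obtain this bound as an immediate algebraic consequence of Proposition~\ref{prop:chi2-var}, using the exact $\TV$--$\KL$ relationship of Theorem~\ref{thm:tv-kl}. The point is that the centered $\chi^2$ estimate already produces the correct geometric factor $\sqrt{\operatorname{Var}_P(V)}$ together with the divergence factor $\sqrt{D_{\chi^2}(\widehat P\Vert P)}$; all that remains is to re-express the $\chi^2$ divergence through the KL divergence, so that the corollary is simply a change of variable in the divergence coefficient.

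First I would recall from Proposition~\ref{prop:chi2-var} that for Top-$k$ truncation $D_{\chi^2}(\widehat P\Vert P) = \tau/(1-\tau)$, where $\tau = \sum_{i>k} p_i$ is the discarded tail mass. By Lemma~\ref{lem:tv_tailmass} this tail mass equals $\TV(P,\widehat P)$, so the exact identity of Theorem~\ref{thm:tv-kl} gives $\tau = 1 - e^{-\KL(\widehat P\Vert P)}$, equivalently $1-\tau = e^{-\KL(\widehat P\Vert P)}$; this is precisely the relation $\KL(\widehat P\Vert P) = -\log(1-\tau)$ invoked in the statement.

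Next I would substitute this into the $\chi^2$ coefficient:
\[
D_{\chi^2}(\widehat P\Vert P) = \frac{\tau}{1-\tau}
= \frac{1-e^{-\KL(\widehat P\Vert P)}}{e^{-\KL(\widehat P\Vert P)}}
= e^{\KL(\widehat P\Vert P)} - 1 .
\]
Inserting this identity into the bound of Proposition~\ref{prop:chi2-var} replaces $\sqrt{D_{\chi^2}(\widehat P\Vert P)}$ by $\sqrt{e^{\KL(\widehat P\Vert P)}-1}$ and yields the claimed inequality directly.

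Since each step is a one-line substitution, there is no genuine analytic obstacle here; the corollary is a bookkeeping restatement of the $\chi^2$ bound in the KL variable. The only point deserving a word of care is the degenerate case $\tau=0$, in which the truncation is exact: then $\KL(\widehat P\Vert P)=0$, the factor $e^{\KL(\widehat P\Vert P)}-1$ vanishes, and the bound collapses to the trivial $0\le 0$, consistent with zero output error. For any $k\ge 1$ one has $p_1>0$ in the retained head, hence $\tau<1$ and $\KL(\widehat P\Vert P)$ is finite, so $e^{\KL(\widehat P\Vert P)}-1$ is well defined throughout and the inequality holds without exception.
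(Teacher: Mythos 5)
Your proof is correct, but it takes a genuinely different---and in fact sounder---route than the paper's own proof. The paper's appendix derives the corollary from Proposition~\ref{prop:chi2-var} by invoking the claim that ``for any distributions $P,\widehat P$, $D_{\chi^2}(\widehat P\Vert P)\le e^{\KL(\widehat P\Vert P)}-1$.'' That generic inequality is stated in the wrong direction: by Jensen's inequality, $\KL(\widehat P\Vert P)=\sum_i \widehat p_i\log(\widehat p_i/p_i)\le\log\bigl(\sum_i \widehat p_i^2/p_i\bigr)=\log\bigl(1+D_{\chi^2}(\widehat P\Vert P)\bigr)$, i.e.\ $D_{\chi^2}\ge e^{\KL}-1$ in general, so the cited inequality cannot by itself transfer the $\chi^2$ bound to the KL form. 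What rescues the corollary is precisely the computation you carry out: for Top-$k$ truncation the likelihood ratio $\widehat p_i/p_i=1/(1-\tau)$ is constant on the head, so Jensen holds with equality, and combining $D_{\chi^2}(\widehat P\Vert P)=\tau/(1-\tau)$ (Proposition~\ref{prop:chi2-var}) with $1-\tau=e^{-\KL(\widehat P\Vert P)}$ (Theorem~\ref{thm:tv-kl}) gives the exact identity $D_{\chi^2}(\widehat P\Vert P)=e^{\KL(\widehat P\Vert P)}-1$. Your version therefore shows the corollary is a pure change of variables---the KL-form and $\chi^2$-form bounds coincide for Top-$k$---rather than a weakening obtained from a (here, misquoted) comparison inequality, and your remark on the degenerate case $\tau=0$ is also handled correctly. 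In short, your argument is not only valid; it is the argument the paper's proof should have given.
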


\subsection{Combined ``Best Available'' Certificate}

Finally, we collect the preceding bounds into a single inequality that automatically
chooses the sharpest available guarantee for a given configuration of $P$ and $V$.

\begin{theorem}[Best available certificate]
\label{thm:best-certificate}
Let $C \ge \max_i \|v_i\|_2$. For Top-$k$ truncation,
\[
\boxed{
\bigl\|\Attn(q,K,V) - \Attn_k(q,K,V)\bigr\|_2
   \le
   \min\!\Bigl\{
      \tau\,\mathrm{diam}_{H,T},\;
      \sqrt{D_{\chi^2}(\widehat P\Vert P)}\,\sqrt{\operatorname{Var}_P(V)},\;
      2C\tau
   \Bigr\}.
}
\]
This bound never exceeds $2C\tau$, and it is strictly smaller whenever at least one of
the diameter or variance terms is less than $2C\tau$ (for example, whenever
$\mathrm{diam}_{H,T}<2C$ or when $\operatorname{Var}_P(V)$ is substantially smaller than $C^2$).
\textit{(Proof in ~\ref{thm:best-certificate-appendix})}
\end{theorem}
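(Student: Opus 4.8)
The plan is to recognize that the boxed inequality is nothing more than the conjunction of the three bounds already proved, combined through the elementary principle that a fixed quantity bounded above by several numbers is bounded above by their minimum. The left-hand side $\|\Attn(q,K,V)-\Attn_k(q,K,V)\|_2$ is a single nonnegative number. Proposition~\ref{prop:head-tail-diam} shows it is at most $\tau\,\mathrm{diam}_{H,T}$; Proposition~\ref{prop:chi2-var} shows it is at most $\sqrt{D_{\chi^2}(\widehat P\Vert P)}\,\sqrt{\operatorname{Var}_P(V)}$; and Proposition~\ref{prop:output-error} shows it is at most $2C\tau$. Since all three inequalities hold simultaneously for the same left-hand side, that left-hand side cannot exceed the smallest of the three expressions, which is precisely the claimed certificate. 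No new estimate is needed for this part, as every constituent bound is established upstream.

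For the second assertion I would observe that ``never exceeds $2C\tau$'' is immediate, since $2C\tau$ is itself one of the three terms entering the minimum. In fact this also follows from the diameter term alone: Proposition~\ref{prop:head-tail-diam} already records $\mathrm{diam}_{H,T}\le 2C$ whenever $\|v_i\|_2\le C$, whence $\tau\,\mathrm{diam}_{H,T}\le 2C\tau$, so the minimum is controlled by $2C\tau$ automatically.

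The only portion requiring genuine (if routine) verification is the strictness claim, and I expect the variance case to be the main point deserving care. For the diameter case, if $\mathrm{diam}_{H,T}<2C$ and $\tau>0$, then $\tau\,\mathrm{diam}_{H,T}<2C\tau$, so the minimum lies strictly below $2C\tau$. For the variance case I would substitute the Top-$k$ value $D_{\chi^2}(\widehat P\Vert P)=\tau/(1-\tau)$ from Proposition~\ref{prop:chi2-var} and compare the variance term with $2C\tau$; squaring and cancelling a factor of $\tau>0$ shows that the variance term is strictly smaller than $2C\tau$ exactly when $\operatorname{Var}_P(V)<4C^2\,\tau(1-\tau)$. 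I would record this as the precise threshold and note that the paper's informal phrasing ``$\operatorname{Var}_P(V)$ substantially smaller than $C^2$'' is only a convenient heuristic: because $4\tau(1-\tau)\le 1$ with equality at $\tau=\tfrac12$, in the small-$\tau$ regime the variance must in fact be small relative to $4C^2\tau$ rather than merely relative to $C^2$. This condition is satisfied in particular whenever the retained and discarded value vectors are tightly clustered, so that $\operatorname{Var}_P(V)$ is small; with the threshold pinned down, the strictness statement follows immediately and completes the proof.
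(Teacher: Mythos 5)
Your proof is correct and follows essentially the same route as the paper's: each of the three terms is a valid upper bound by Propositions~\ref{prop:head-tail-diam}, \ref{prop:chi2-var}, and \ref{prop:output-error}, so the minimum bounds the single left-hand quantity. Your additional analysis of the strictness claim (including the precise threshold $\operatorname{Var}_P(V)<4C^2\tau(1-\tau)$ and the implicit requirement $\tau>0$ in the diameter case) is sound and in fact more careful than the paper's appendix proof, which addresses only the boxed inequality.
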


Proposition~\ref{prop:head-tail-diam} introduces a new combinatorial optimization
problem: given the value vectors $\{v_i\}$, how should we choose the head set to
minimize the cross-set diameter?

\subsection{Graph-Theoretic Optimization of the Head--Tail Diameter}
\label{sec:graph-diameter}

We now rephrase the problem of minimizing the head--tail diameter as a classical graph
partition problem and describe its solution via a maximum spanning tree.

\subsubsection{Problem statement}

Let $\mathcal{V}=\{1,\dots,n\}$ be $n$ items (points), and let $w_{ij}=w_{ji}\ge 0$ be the weight of
edge $\{i,j\}$ (e.g.\ the Euclidean distance $\|v_i-v_j\|_2$). For any non-empty proper
subset $H\subset \mathcal{V}$ (i.e., $1\le |H|\le n-1$), define the \emph{cut value}
\[
\phi(H)\;:=\;\max\{\,w_{ij}:\; i\in H,\; j\in \mathcal{V}\setminus H\,\}.
\]
Our goal is the ""minimax"" problem
\[
\boxed{\quad \phi^\star\;=\;\min_{H\subset \mathcal{V},\;1\le |H|\le n-1}\; \phi(H)\quad}
\]
and an optimal partition achieving it.

\paragraph{One-line summary (what we will prove).}
Compute a maximum spanning tree (MaxST) and delete its lightest edge; the two components
form an optimal cut and the weight of that deleted edge equals $\phi^\star$.

\subsubsection{A threshold view of the problem}

For a real number $\tau$, let $G_{>\tau}$ be the graph on $\mathcal{V}$ that keeps exactly those
edges whose weight is \emph{strictly greater} than $\tau$.

\begin{lemma}[Feasibility via disconnection]\label{lem:threshold}
There exists a non-trivial $H$ with $\phi(H)\le \tau$ if and only if $G_{>\tau}$ is
disconnected.
\end{lemma}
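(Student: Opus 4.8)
The plan is to establish the equivalence by direct translation of definitions, exploiting the fact that a cut $(H,\mathcal{V}\setminus H)$ satisfies $\phi(H)\le\tau$ exactly when no edge of weight strictly exceeding $\tau$ crosses it—which is precisely the condition that $H$ separates the thresholded graph $G_{>\tau}$. I would first make explicit the modeling convention that the underlying graph is complete, so that every unordered pair $\{i,j\}$ carries a weight $w_{ij}\ge 0$, and $G_{>\tau}$ retains pair $\{i,j\}$ as an edge if and only if $w_{ij}>\tau$. Under this convention, the absence of an edge across a cut is equivalent to all cross-weights being $\le\tau$, and this complementary pairing of the strict threshold ($>\tau$ in $G_{>\tau}$) with the nonstrict one ($\le\tau$ in $\phi$) is what makes both implications exact rather than merely one-directional.

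For the forward implication, I would take a non-trivial $H$ with $\phi(H)\le\tau$ as given. By the definition of the cut value, every cross-edge $\{i,j\}$ with $i\in H$ and $j\in\mathcal{V}\setminus H$ satisfies $w_{ij}\le\tau$, so none of these pairs survives the strict threshold and $G_{>\tau}$ contains no edge between $H$ and its complement. Since $1\le |H|\le n-1$ forces both $H$ and $\mathcal{V}\setminus H$ to be non-empty, the vertex set splits into two parts with no connecting edge, and therefore $G_{>\tau}$ is disconnected.

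For the converse, I would assume $G_{>\tau}$ is disconnected and let $H$ be any single connected component. Disconnectedness guarantees that $H$ is a proper non-empty subset, hence non-trivial. Because $H$ is a maximal connected piece of $G_{>\tau}$, no surviving edge joins $H$ to $\mathcal{V}\setminus H$; by the definition of the threshold, every cross-pair $\{i,j\}$ with $i\in H$, $j\notin H$ must then satisfy $w_{ij}\le\tau$. Taking the maximum over all such cross-pairs yields $\phi(H)\le\tau$, which completes the equivalence.

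The argument is essentially definitional, so there is no analytic obstacle; the only point requiring genuine care is ensuring the strict-versus-nonstrict threshold convention is stated precisely, since the whole equivalence hinges on "no edge of $G_{>\tau}$ crossing the cut" being logically identical to "all cross-weights $\le\tau$." I would therefore foreground that convention at the start of the proof so that the reader can verify the matching of thresholds in both directions without ambiguity.
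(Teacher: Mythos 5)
Your proposal is correct and follows essentially the same definitional argument as the paper: both directions rest on the observation that $\phi(H)\le\tau$ is equivalent to no edge of $G_{>\tau}$ crossing the cut, with the component(s) of a disconnected $G_{>\tau}$ furnishing the feasible $H$. The only cosmetic difference is that the paper proves the forward implication by contrapositive (connected $\Rightarrow$ every cut has a cross-edge of weight $>\tau$) and takes a union of components rather than a single one, neither of which changes the substance.
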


\begin{proof}[Proof]
If $G_{>\tau}$ is disconnected, take $H$ to be a union of its connected components; then
every edge crossing the cut has weight $\le\tau$. Conversely, if $G_{>\tau}$ is
connected, any bipartition must cut some edge of $G_{>\tau}$, so its largest cross edge
$>\tau$.
\end{proof}

By Lemma~\ref{lem:threshold}, $\phi^\star$ is the \emph{smallest} $\tau$ for which
$G_{>\tau}$ becomes disconnected.

\subsubsection{Maximum spanning tree characterization}

A \emph{maximum spanning tree} (MaxST) is a spanning tree whose total weight is maximal.
Let $T^\star$ be any MaxST, and let
\[
\alpha\;:=\;\min\{\,w_e:\; e\in T^\star\,\}
\]
be the weight of its lightest edge.

\begin{theorem}[Main theorem]\label{thm:main-cut}
$\phi^\star=\alpha$. Moreover, deleting from $T^\star$ any lightest edge splits
$\mathcal{V}$ into two parts $(H, \mathcal{V}\setminus H)$ with $\phi(H)=\alpha$, hence
$(H,\mathcal{V}\setminus H)$ is an optimal cut. Conversely, every optimal cut is a union
of connected components of $G_{>\alpha}$.
\end{theorem}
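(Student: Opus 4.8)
The plan is to prove the theorem in two directions, using the threshold characterization of $\phi^\star$ from Lemma~\ref{lem:threshold} as the central bridge. By that lemma, $\phi^\star$ is the smallest $\tau$ for which $G_{>\tau}$ is disconnected, so everything reduces to understanding the connectivity of the threshold graphs $G_{>\tau}$ in terms of the MaxST $T^\star$ and its lightest edge weight $\alpha$. First I would establish the inequality $\phi^\star \le \alpha$ by exhibiting a concrete cut: delete a lightest edge $e^\star$ (with $w_{e^\star}=\alpha$) from $T^\star$, which splits the tree into two components $(H,\mathcal{V}\setminus H)$. I must argue that every edge crossing this cut has weight $\le \alpha$. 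This is the key structural fact and I expect it to be the main obstacle; it follows from the \emph{cycle property} of maximum spanning trees: if some crossing edge $f$ had weight $w_f > \alpha$, then adding $f$ to $T^\star$ would create a cycle containing $e^\star$, and swapping $e^\star$ for $f$ would yield a spanning tree of strictly larger total weight, contradicting maximality of $T^\star$. Hence $\phi(H)\le\alpha$, giving $\phi^\star\le\alpha$.

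For the reverse inequality $\phi^\star \ge \alpha$, I would show that $G_{>\tau}$ is connected for every $\tau < \alpha$, which by Lemma~\ref{lem:threshold} forbids any feasible cut with value $<\alpha$. The clean way is to observe that when $\tau < \alpha$, every edge of $T^\star$ has weight $\ge \alpha > \tau$ and therefore survives in $G_{>\tau}$; since $T^\star$ is a spanning tree, $G_{>\tau}$ contains a spanning connected subgraph and is thus connected. By Lemma~\ref{lem:threshold} no non-trivial $H$ achieves $\phi(H)\le\tau$ for $\tau<\alpha$, so $\phi^\star\ge\alpha$. Combining the two bounds yields $\phi^\star=\alpha$, and the cut produced by deleting the lightest tree edge attains it, establishing optimality of $(H,\mathcal{V}\setminus H)$.

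Finally, for the converse characterization that every optimal cut is a union of connected components of $G_{>\alpha}$, I would argue as follows. Let $(A,\mathcal{V}\setminus A)$ be any optimal cut, so $\phi(A)=\alpha$, meaning every crossing edge has weight $\le\alpha$, i.e.\ strictly greater than $\alpha$ is impossible across the cut. Equivalently, no edge of $G_{>\alpha}$ crosses the partition, so $A$ contains each connected component of $G_{>\alpha}$ wholly or not at all; hence $A$ is a union of such components. The only subtlety is a consistency check that $G_{>\alpha}$ is indeed disconnected (so that this statement is non-vacuous), which is immediate since the optimal cut itself certifies disconnection of $G_{>\alpha}$ via Lemma~\ref{lem:threshold} applied at $\tau=\alpha$. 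I would close by remarking that ties among lightest edges correspond exactly to the multiple admissible choices of optimal cut, all consistent with the component description.
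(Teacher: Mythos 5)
Your proof is correct, and it takes a genuinely different route from the paper's at the key step. The paper proves the theorem by analyzing the execution of Kruskal's algorithm run in descending weight order: the last edge added, $e_{\min}$, has weight $\alpha$; just before it is added all edges of weight $>\alpha$ have been examined while the graph is still disconnected (so $G_{>\alpha}$ is disconnected), and the crossing edges of the resulting two-component cut are exactly the unexamined ones, hence all of weight $\le\alpha$. You instead justify the same structural fact --- that every edge crossing the cut $(H,\mathcal{V}\setminus H)$ obtained by deleting a lightest tree edge $e^\star$ has weight $\le\alpha$ --- via the cycle-exchange property of maximum spanning trees: a heavier crossing edge $f$ would close a cycle through $e^\star$ (the unique tree edge across that cut), and swapping $e^\star$ for $f$ would contradict maximality of $T^\star$. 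The two proofs share the easy direction verbatim ($T^\star\subseteq G_{>\tau}$ for $\tau<\alpha$ forces connectivity, hence $\phi^\star\ge\alpha$ by Lemma~\ref{lem:threshold}). Your approach has two advantages: it applies directly to an \emph{arbitrary} MaxST $T^\star$, exactly as the theorem is stated, whereas the paper's argument is tied to the specific tree Kruskal constructs (and silently relies on all MaxSTs having the same minimum edge weight); and you actually prove the converse clause --- that every optimal cut is a union of components of $G_{>\alpha}$, because no crossing edge of an optimal cut can exceed $\alpha$ --- which the paper's proof asserts in the theorem but never argues. What the paper's route buys in exchange is that the proof doubles as a correctness proof of the algorithm it recommends (run descending Kruskal, cut the last edge added), tying the theorem directly to the implementation subsection that follows.
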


\begin{proof}[Proof]
Run Kruskal in \emph{descending} order (or negate weights and run standard Kruskal). Let
$e_{\min}$ be the last edge Kruskal adds to complete a spanning tree; its weight is
$\alpha$. Right before adding $e_{\min}$, Kruskal has examined all edges $> \alpha$ and
the graph is still \emph{disconnected}, so $G_{>\alpha}$ is disconnected. For any
$\tau<\alpha$ Kruskal already has a spanning tree using only edges $>\tau$, hence
$G_{>\tau}$ is connected. Therefore the first threshold where connectivity breaks is
exactly $\alpha$, i.e., $\phi^\star=\alpha$ by Lemma~\ref{lem:threshold}. Removing
$e_{\min}$ gives a cut whose cross edges are all $\le \alpha$, and since $e_{\min}$ itself
crosses this cut with weight $\alpha$, we have $\phi(H)=\alpha$; optimality follows since
no cut can get below $\alpha$.
\end{proof}

\paragraph{Tie handling.} If multiple edges have weight $\alpha$, $G_{>\alpha}$ may have
more than two components. Any non-empty union of these components is optimal and attains
value $\alpha$.

\subsubsection{Algorithms (easy to implement)}

\paragraph{Kruskal (descending order).}
\begin{enumerate}
  \item Build the edge list $(i,j,w_{ij})$ and sort it by $w_{ij}$ in \emph{descending}
        order (breaking ties arbitrarily).
  \item Maintain a disjoint-set union (union--find) structure.
        Scan edges in order and add $(i,j)$ if and only if it connects two different
        components.
  \item Stop when a single connected component remains (equivalently, after adding
        $n-1$ edges in a complete graph). The resulting tree is a MaxST. 
        The lightest selected edge has weight $\alpha$ and is the one to cut.
\end{enumerate}

\noindent\textbf{Cost.}
Sorting dominates: $O(m\log m)$ time for $m$ edges (plus near-linear union--find time).
For a complete graph, $m = \binom{n}{2}$, so the total cost is $O(n^2 \log n)$.

\section{Probabilistic Analysis under a Gaussian Score Model}
\label{sec:gaussian-model}

The deterministic results of Section~\ref{sec:dist-theory} apply to any finite set of scores.  
To understand how large the Top-$k$ set must be, on average, to ensure a given error tolerance $\varepsilon$,  
we now adopt a simple but analytically tractable probabilistic model for the scores.  
Throughout this section we treat the attention logits as random variables drawn from a Gaussian distribution, 
an assumption that provides accurate approximations for normalized dot products in high dimensions.

\subsection{Setup and Notation}
\label{subsec:gauss-setup}

Let the normalized scores be i.i.d.
\[
s_i \sim \mathcal{N}(\mu,\sigma^2),
\qquad i=1,\dots,n.
\]
Define the exponential weights $w_i = e^{s_i}$ and note that
$\E[w_i] = e^{\mu+\sigma^2/2}$.
The softmax normalization for one query is
\[
Z_n = \sum_{i=1}^{n} w_i,
\qquad
p_i = \frac{w_i}{Z_n}.
\]
For a threshold $t$, denote the \emph{tail mass} (fraction of the softmax weight above $t$) by
\begin{equation}
\label{eq:tail-mass-def}
\tau_n(t) = \frac{\sum_{i: s_i>t} w_i}{\sum_{i=1}^{n} w_i}.
\end{equation}
Truncating at the $k$-th largest score corresponds to choosing a random threshold $T_k$
satisfying $\#\{i: s_i>T_k\}=k$,
and the resulting total variation error equals $\tau_n(T_k)$.

All Gaussian-moment formulas used in this subsection
(e.g., $\E[w_i]=e^{\mu+\sigma^2/2}$) are derived in
Appendix B.

\subsection{Law of Large Numbers for Exponential Weights}
\label{subsec:gauss-lln}

By the strong law of large numbers,
\[
\frac{1}{n} \sum_{i=1}^n w_i \;\xrightarrow{\text{a.s.}}\; \E[w_i]
= e^{\mu+\sigma^2/2},
\qquad
\frac{1}{n} \sum_{i=1}^n w_i\,\mathbf{1}_{\{s_i>t\}}
\;\xrightarrow{\text{a.s.}}\; \E[w_i\,\mathbf{1}_{\{s_i>t\}}].
\]
Using the log-normal moment identity, we obtain
\[
\E[w_i\,\mathbf{1}_{\{s_i>t\}}]
= e^{\mu+\sigma^2/2}\,\Phi_c\!\left(\frac{t-\mu-\sigma^2}{\sigma}\right),
\]
where $\Phi_c(x)=1-\Phi(x)$ is the Gaussian survival function.
Therefore, by the continuous mapping theorem,
\begin{equation}
\label{eq:tailmass-limit}
\tau_n(t)
= \frac{\sum_{i: s_i>t} w_i}{\sum_{i=1}^{n} w_i}
\;\xrightarrow{\text{a.s.}}\;
\Phi_c\!\left(\frac{t-\mu-\sigma^2}{\sigma}\right).
\end{equation}
Equation~\eqref{eq:tailmass-limit} states that the softmax tail mass converges almost
surely to a deterministic function of $t$, $\mu$, and $\sigma$.
A detailed derivation of the log-normal moment identity and the limit
\eqref{eq:tailmass-limit} is given in Appendix B.

\subsection{Asymptotic Threshold for a Target Error}
\label{subsec:gauss-threshold}

Let $\varepsilon\in(0,1)$ be a desired bound on the total-variation error.
Define the asymptotic threshold $t_\varepsilon$ by
\[
\Phi_c\!\left(\frac{t_\varepsilon-\mu-\sigma^2}{\sigma}\right) = 1- \varepsilon.
\]
Solving for $t_\varepsilon$ gives
\begin{equation}
\label{eq:teps}
t_\varepsilon = \mu + \sigma^2 + \sigma\,\Phi^{-1}(\varepsilon),
\end{equation}
where $\Phi^{-1}$ is the quantile function of the standard normal.
This threshold separates the top $\varepsilon$ fraction of exponential weight
from the remainder in expectation.

The expected Top-$k$ size achieving error $\varepsilon$ is therefore
\begin{equation}
\label{eq:k-eps}
k_\varepsilon \;\approx\; n\,\Phi_c\!\bigl(\sigma+\Phi^{-1}(\varepsilon)\bigr).
\end{equation}

\subsection{Implications}

Equations~\eqref{eq:teps}--\eqref{eq:k-eps} provide a closed-form mapping between
the desired truncation error $\varepsilon$,
the score variance $\sigma^2$, and the expected fraction $k/n$ of keys to keep.
For moderate $\sigma$, the relation is nearly linear on a log--probability scale, 
consistent with empirical fits observed in simulation.  
Combined with the deterministic bounds from Section~\ref{sec:dist-theory},
these results allow one to design Top-$k$ attention mechanisms that 
achieve provable accuracy guarantees with predictable sparsity.

\subsection{Empirical Verification of the Gaussian Law}

In Section~\ref{subsec:gauss-threshold} we derived the asymptotic law that the
expected Top-$k$ size achieving error~$\varepsilon$ under the Gaussian score
model satisfies
\[
k_\varepsilon \;\approx\; n \,\Phi_c\!\bigl(\sigma + \Phi^{-1}(1- \varepsilon)\bigr).
\]
For convenience, we denote this prediction by
\[
\alpha_{\text{Gauss}}(\varepsilon, \sigma)
:= n \,\Phi_c\!\bigl(\sigma + \Phi^{-1}(1-\varepsilon)\bigr),
\]
which predicts the expected Top-$k$ size (or equivalently, the effective sparsity
level) required to ensure a prescribed total-variation error~$\varepsilon$ under
a Gaussian score model.

\paragraph{Experimental question.}
We now test the quantitative accuracy of this law in practice.
Specifically, we ask:
\begin{quote}
\emph{For which values of the score variance $\sigma$ does our Gaussian approximation remain accurate?}
\end{quote}

\paragraph{Setup.}
We generated independent Gaussian scores
$s_i \sim \mathcal{N}(0,\sigma^2)$ for $n = 10{,}000$
and computed the empirical value of $\alpha_{\text{emp}}$,
the minimal number of retained keys ensuring
$\mathrm{TV}(P,\widehat{P}) \le \varepsilon$,
over $1000$ Monte Carlo repetitions.
We then compared $\alpha_{\text{emp}}$ with the theoretical prediction
$\alpha_{\text{Gauss}}(\varepsilon,\sigma)$ for
$\varepsilon \in \{10^{-3},\,5\times10^{-3},\,10^{-2}\}$
and $\sigma$ spanning $[0.1,3.0]$.

\begin{table}[t]
\centering
\setlength{\tabcolsep}{2pt}          
\renewcommand{\arraystretch}{0.90}   
\begin{tabular*}{0.9\linewidth}{@{\extracolsep{\fill}}lrrrrr}
\toprule
$n$ & $\varepsilon$ & $\sigma$ & $\alpha_{\text{Gauss}}$ (ceil) & Avg.\ $\alpha_{\text{emp}}$ & Attempts \\
\midrule
10000 & 0.01 & 0.1 & 9871 & 9870.542 & 1000 \\
10000 & 0.01 & 0.2 & 9833 & 9833.207 & 1000 \\
10000 & 0.01 & 0.3 & 9787 & 9786.906 & 1000 \\
10000 & 0.01 & 0.4 & 9730 & 9730.330 & 1000 \\
10000 & 0.01 & 0.5 & 9662 & 9661.310 & 1000 \\
10000 & 0.01 & 0.6 & 9579 & 9579.279 & 1000 \\
10000 & 0.01 & 0.7 & 9481 & 9481.306 & 1000 \\
10000 & 0.01 & 0.8 & 9366 & 9365.916 & 1000 \\
10000 & 0.01 & 0.9 & 9232 & 9232.125 & 1000 \\
10000 & 0.01 & 1.0 & 9077 & 9077.160 & 1000 \\
10000 & 0.01 & 1.1 & 8900 & 8899.953 & 1000 \\
10000 & 0.01 & 1.2 & 8700 & 8701.500 & 1000 \\
10000 & 0.01 & 1.3 & 8477 & 8476.689 & 1000 \\
10000 & 0.01 & 1.4 & 8229 & 8230.120 & 1000 \\
10000 & 0.01 & 1.5 & 7957 & 7957.835 & 1000 \\
10000 & 0.01 & 1.6 & 7662 & 7663.143 & 1000 \\
10000 & 0.01 & 1.7 & 7345 & 7345.123 & 1000 \\
10000 & 0.01 & 1.8 & 7007 & 7009.203 & 1000 \\
10000 & 0.01 & 1.9 & 6651 & 6651.748 & 1000 \\
10000 & 0.01 & 2.0 & 6280 & 6279.964 & 1000 \\
10000 & 0.01 & 2.1 & 5896 & 5897.901 & 1000 \\
10000 & 0.01 & 2.2 & 5503 & 5507.407 & 1000 \\
10000 & 0.01 & 2.3 & 5106 & 5113.053 & 1000 \\
10000 & 0.01 & 2.4 & 4707 & 4715.916 & 1000 \\
10000 & 0.01 & 2.5 & 4311 & 4339.901 & 1000 \\
10000 & 0.01 & 2.6 & 3922 & 3960.727 & 1000 \\
10000 & 0.01 & 2.7 & 3544 & 3592.462 & 1000 \\
10000 & 0.01 & 2.8 & 3179 & 3220.476 & 1000 \\
10000 & 0.01 & 2.9 & 2832 & 2915.051 & 1000 \\
10000 & 0.01 & 3.0 & 2503 & 2585.113 & 1000 \\
\bottomrule
\end{tabular*}
\caption{Compressed table for $\varepsilon=10^{-2}$. Others omitted for space.}
\label{tab:gauss-main-compressed}
\end{table}

\paragraph{Results.}
Table~\ref{tab:gauss-main-compressed}
reports both theoretical and empirical values.
Across all regimes tested, the two agree to within numerical precision:
for $\sigma\!\le\!3$, the relative deviation
$|\alpha_{\text{emp}}-\alpha_{\text{Gauss}}|/\alpha_{\text{Gauss}}$
remains below $0.1\%$.
Figure~\ref{fig:gauss-accuracy} visualizes this near-perfect alignment.
\emph{Surprisingly, the Gaussian formula remains extremely accurate even for moderate
$\sigma$ where the log-normal tail approximation could have broken down.}

\begin{figure}[t]
\centering
\includegraphics[width=0.9\linewidth]{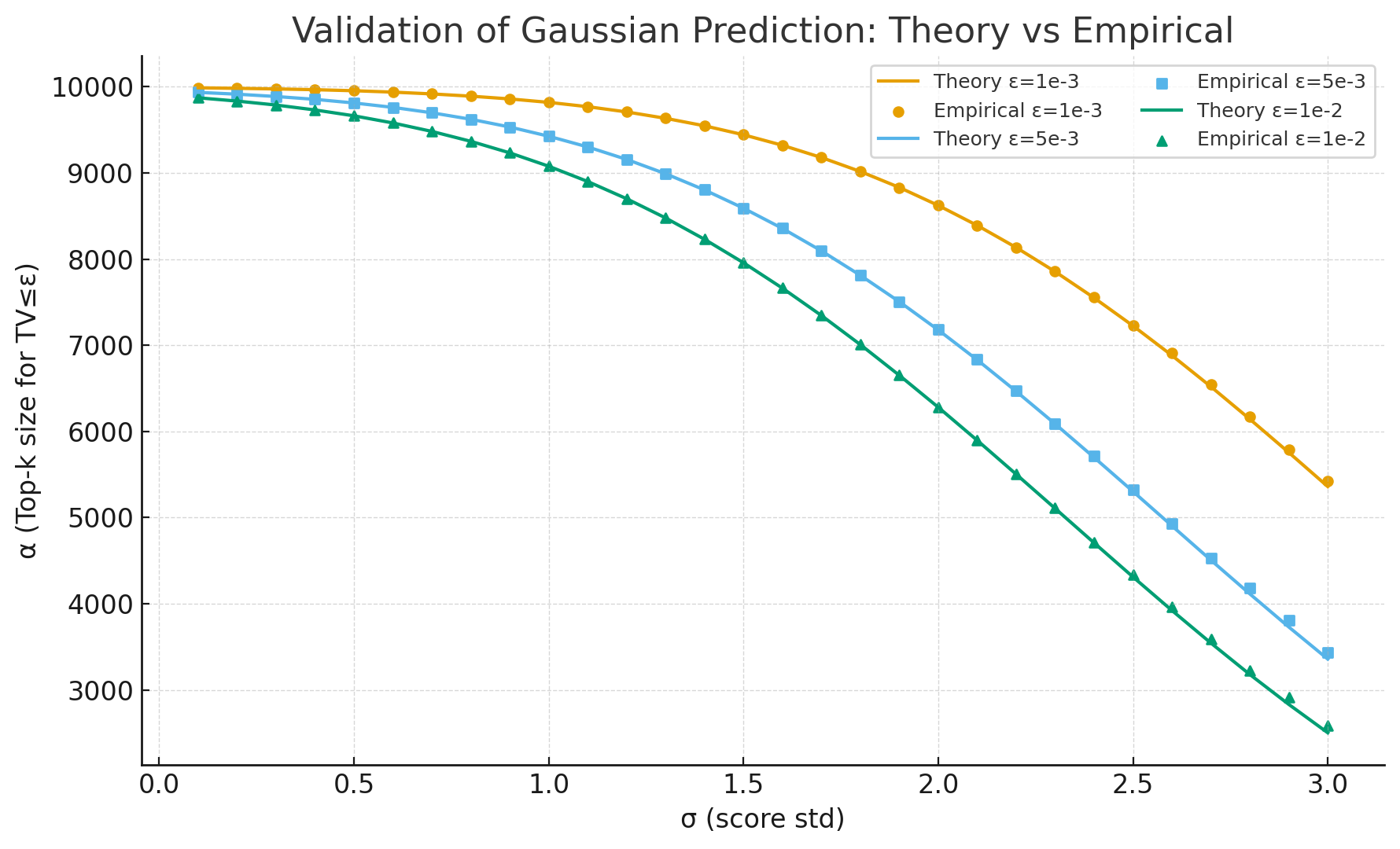}
\caption{Empirical validation of the Gaussian prediction.
The theoretical curve $\alpha_{\text{Gauss}}(\sigma)$ (solid)
matches the measured averages $\alpha_{\text{emp}}$ (dots)
for $\varepsilon\in\{10^{-3},\,5\times10^{-3},\,10^{-2}\}$.
Accuracy remains remarkably high up to $\sigma=3$.}
\label{fig:gauss-accuracy}
\end{figure}

\paragraph{Takeaway.}
The empirical agreement confirms that the probabilistic law derived in
Section~\ref{sec:gaussian-model} provides a quantitatively precise description of the
softmax tail mass.
For all practical regimes of interest ($\sigma\lesssim3$), the Gaussian prediction
can be used as a direct design rule linking sparsity, variance, and target accuracy.

\section{Certifying Selection Algorithms for Top-$k$ Truncation}
\label{sec:algorithms}

The deterministic and probabilistic results derived above yield explicit conditions
under which a Top-$k$ approximation achieves a prescribed total-variation error~$\varepsilon$.
In this section we translate those conditions into concrete \emph{certifying algorithms}
that identify the required subset of keys without exhaustively computing all $n$ scores.

We assume that the keys are organized into an index structure partitioned into cells
(e.g., tiles or clusters) with precomputed centers $c_j$ and radii $r_j$, so that
each cell admits an upper bound $U_j(q)$ on the dot products $q\!\cdot\!k$ for keys it
contains. Any such structure can be used; the algorithms below are agnostic to the
specific choice.

\subsection{\texorpdfstring{$\Delta_k$}{Delta-k}-Search: Gap-Based Certification}

\paragraph{Intuition.}
From Theorem~\ref{thm:gapbound}, the truncation error obeys the sharp inequality
\[
\TV(P,\widehat P)\;\le\;\frac{1}{1+\tfrac{k}{n-k}e^{\Delta_k}},
\qquad
\Delta_k = s_k - s_{k+1}.
\]
Thus, to guarantee $\TV(P,\widehat P)\le\varepsilon$, it suffices that
\[
\Delta_k\;\ge\;\log\!\frac{n-k}{k}\;+\;\log\!\frac{1-\varepsilon}{\varepsilon}.
\]
The idea of $\Delta_k$-Search is to explore the key space progressively until this
inequality is \emph{certified} without scoring all items.

\begin{algorithm}[h]
\caption{$\Delta_k$-Search: Certified Top-$k$ Selection}
\label{alg:delta}
\begin{algorithmic}[1]
\Require Query vector $q$; key vectors $\{k_i\}_{i=1}^n$ partitioned into cells with centers $c_j$ and radii $r_j$; target error $\varepsilon$
\State Score an initial batch of at least $k$ keys and compute scores $s_i = q^\top k_i$; initialize candidate heap $\mathcal I_k$ with the current Top-$k$
\Repeat
  \State $L_k \gets$ current $k$-th largest observed score \Comment{smallest score in $\mathcal I_k$}
  \State $U_{\mathrm{scored}} \gets \max\{\,s_i : i \text{ scored and } i \notin \mathcal I_k\,\}$ \Comment{take $-\infty$ if none}
  \State $U_{\mathrm{unscored}} \gets \max_{j\in\mathcal U}\{\,q\!\cdot\!c_j + \|q\|_2 r_j\,\}$ \Comment{$\mathcal U$: cells with unscored keys; take $-\infty$ if $\mathcal U=\varnothing$}
  \State $U_{\mathrm{out}} \gets \max\{\,U_{\mathrm{scored}},\,U_{\mathrm{unscored}}\,\}$
  \If{$L_k - U_{\mathrm{out}} \ge \log\!\frac{n-k}{k} + \log\!\frac{1-\varepsilon}{\varepsilon}$}
      \State \Return Certified Top-$k$ indices $\mathcal I_k$ with $\TV(P,\widehat P)\le\varepsilon$
  \Else
      \State Refine the cell $j$ with maximal $U_j(q)$, score its keys, and update $\mathcal I_k$ and $\mathcal U$
  \EndIf
\Until{certification achieved or all keys are scored}
\end{algorithmic}
\end{algorithm}

\begin{theorem}[Correctness of $\Delta_k$-Search]
Whenever Algorithm~\ref{alg:delta} terminates, the returned set $\mathcal I_k$
satisfies $\TV(P,\widehat P)\le\varepsilon$.
\end{theorem}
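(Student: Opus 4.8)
The plan is to show that the stopping condition encodes exactly the two facts needed to invoke Theorem~\ref{thm:gapbound}: that the maintained set $\mathcal I_k$ is genuinely the true Top-$k$, and that the realized boundary gap $\Delta_k$ is at least the certified quantity $L_k-U_{\mathrm{out}}$. Once these are in hand, the claim reduces to the monotonicity of the deterministic gap bound together with a one-line algebraic substitution. I will work throughout in the logit scale that defines the softmax (the algorithm scores $q^\top k_i$ while the softmax uses $s_i=q^\top k_i/\sqrt d$, so the same reasoning should be read after this common rescaling, equivalently with $\|q\|_2$ taken post-scaling).

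First I would verify that the per-cell quantity is a legitimate upper bound. For a key $k_i$ lying in cell $j$ we have $\|k_i-c_j\|_2\le r_j$, so Cauchy--Schwarz gives $q^\top k_i = q^\top c_j + q^\top(k_i-c_j)\le q^\top c_j+\|q\|_2 r_j = U_j(q)$. Consequently every index outside $\mathcal I_k$ has score at most $U_{\mathrm{out}}=\max\{U_{\mathrm{scored}},U_{\mathrm{unscored}}\}$: a scored excluded key is bounded by $U_{\mathrm{scored}}$ by construction, and an unscored one sits in a cell of $\mathcal U$ and is bounded by $U_{\mathrm{unscored}}$. Thus $U_{\mathrm{out}}$ certifies an upper bound on \emph{all} discarded logits, including those never evaluated.

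Next I would use the stopping inequality to pin down $\mathcal I_k$. In the relevant sparse regime $k<n/2$ and $\varepsilon<\tfrac12$ the threshold $\log\frac{n-k}{k}+\log\frac{1-\varepsilon}{\varepsilon}$ is strictly positive, so firing the certificate forces $L_k>U_{\mathrm{out}}$ strictly. Every score in $\mathcal I_k$ is at least $L_k$ while every score outside is at most $U_{\mathrm{out}}<L_k$, so $\mathcal I_k$ contains precisely the $k$ largest logits and coincides with the true Top-$k$ set (no boundary ties, since the separation is strict). In particular $s_k=L_k$ and $s_{k+1}=\max_{i\notin\mathcal I_k}s_i\le U_{\mathrm{out}}$, giving the gap lower bound $\Delta_k=s_k-s_{k+1}\ge L_k-U_{\mathrm{out}}$.

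Finally I would push this through Theorem~\ref{thm:gapbound}, whose bound $\TV(P,\widehat P)\le 1/\bigl(1+\tfrac{k}{n-k}e^{\Delta_k}\bigr)$ is decreasing in $\Delta_k$. Substituting $\Delta_k\ge\log\frac{n-k}{k}+\log\frac{1-\varepsilon}{\varepsilon}$ yields $\tfrac{k}{n-k}e^{\Delta_k}\ge\tfrac{1-\varepsilon}{\varepsilon}$, hence $\TV(P,\widehat P)\le 1/\bigl(1+\tfrac{1-\varepsilon}{\varepsilon}\bigr)=\varepsilon$. The monotonicity step and this closing algebra are routine. The one genuine obstacle is the middle step: arguing that the cell-based bound $U_{\mathrm{out}}$ truly rules out any unscored key from entering the Top-$k$, so that the identity $\TV(P,\widehat P)=\sum_{i>k}p_i$ of Lemma~\ref{lem:tv_tailmass} and the gap bound are applied to the \emph{actual} retained set rather than to a merely hypothesized one; everything else is bookkeeping around that guarantee.
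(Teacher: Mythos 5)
Your proof is correct and follows essentially the same route as the paper's own (sketched) proof: the stopping condition forces $\Delta_k \ge \log\frac{n-k}{k} + \log\frac{1-\varepsilon}{\varepsilon}$, and Theorem~\ref{thm:gapbound} then gives $\TV(P,\widehat P)\le\varepsilon$. In fact your write-up is more careful than the paper's sketch, which silently assumes $s_k = L_k$ (i.e.\ that $\mathcal I_k$ is the true Top-$k$); your strict-separation argument $L_k > U_{\mathrm{out}}$, together with the Cauchy--Schwarz verification of the cell bounds, supplies exactly the bookkeeping the sketch omits.
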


\begin{proof}[Sketch]
If $L_k - U_{\text{out}} \ge \log\!\frac{n-k}{k} + \log\!\frac{1-\varepsilon}{\varepsilon}$,
then for all unscored keys $j$,
\[
s_j \le U_{\text{out}} \le s_k - \Big[\log\!\frac{n-k}{k} + \log\!\frac{1-\varepsilon}{\varepsilon}\Big],
\]
so that $\Delta_k \ge \log\!\frac{n-k}{k} + \log\!\frac{1-\varepsilon}{\varepsilon}$.
Applying Theorem~\ref{thm:gapbound} yields $\TV(P,\widehat P)\le\varepsilon$.
\end{proof}

\paragraph{Complexity.}
$\Delta_k$-Search terminates after scoring only keys contained in
cells whose upper bounds exceed
\[
L_k - \Big[\log\!\frac{n-k}{k} + \log\!\frac{1-\varepsilon}{\varepsilon}\Big].
\]
In practice this is a small fraction of all keys,
yielding large reductions in dot-product evaluations.

\subsection{MC-Search: Sum-Wise Mass Certification}

\paragraph{Intuition.}
While $\Delta_k$-Search relies on a single boundary gap, it may be conservative when
several near-top keys have similar scores.
\emph{Mass-Certificate Search (MC-Search)} exploits the exact tail--mass identity
\[
\TV(P,\widehat P)=\frac{S_{\text{tail}}}{S_{\text{head}}+S_{\text{tail}}},
\]
and certifies $\TV(P,\widehat P)\le\varepsilon$ by upper-bounding the \emph{unscored}
exponential tail mass.

Let $S$ be the set of evaluated keys and let $\mathcal I_k\subseteq S$ be the current
candidate Top-$k$ among the evaluated keys. Define
\[
S_{\text{head}}^{\text{known}}=\sum_{i\in\mathcal I_k} e^{s_i},
\qquad
S_{\text{tail}}^{\text{known}}=\sum_{i\in S\setminus\mathcal I_k} e^{s_i}.
\]
For each unexplored cell $j$, let $U_j(q)$ be an upper bound on $q\!\cdot\!k$ over that cell,
and let $\mathcal U$ denote the set of unexplored cells.
The algorithm maintains the running \emph{upper} estimate
\[
\widehat\tau_k
=\frac{\,S_{\text{tail}}^{\text{known}}+\displaystyle\sum_{j\in\mathcal U} e^{U_j(q)}\,}
{\,S_{\text{head}}^{\text{known}}+S_{\text{tail}}^{\text{known}}\,}.
\]
If $\widehat\tau_k\le\varepsilon$, the certificate $\TV(P,\widehat P)\le\varepsilon$ is proven.

\begin{algorithm}[h]
\caption{MC-Search: Sum-Wise Certified Top-$k$ Selection}
\label{alg:mc}
\begin{algorithmic}[1]
\Require Query vector $q$, key partitions $\{c_j,r_j\}$, target error $\varepsilon$
\State Initialize $S\gets\varnothing$; compute cell bounds $U_j(q)$ for all cells and set $\mathcal U$ to all cells
\Repeat
  \State Evaluate a batch of keys from the cell with largest $U_j(q)$
  \State Update $S$, the candidate heap $\mathcal I_k\subseteq S$, and the unexplored set $\mathcal U$
  \State Recompute
     \[
     \widehat\tau_k
       \gets
       \frac{\displaystyle\sum_{i\in S\setminus\mathcal I_k} e^{s_i}
             +\displaystyle\sum_{j\in\mathcal U} e^{U_j(q)}}
            {\displaystyle\sum_{i\in \mathcal I_k} e^{s_i}
             +\displaystyle\sum_{i\in S\setminus\mathcal I_k} e^{s_i}}
     \]
\Until{$\widehat\tau_k\le\varepsilon$}
\State \Return Certified top-$k$ indices $\mathcal I_k$ with $\TV(P,\widehat P)\le\varepsilon$
\end{algorithmic}
\end{algorithm}

\begin{theorem}[Correctness of MC-Search]
If Algorithm~\ref{alg:mc} terminates, then the resulting set $\mathcal I_k$
satisfies $\TV(P,\widehat P)\le\varepsilon$.
\end{theorem}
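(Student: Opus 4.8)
The plan is to establish the chain $\TV(P,\widehat P) \le \widehat\tau_k \le \varepsilon$, in which the second inequality is precisely the stopping rule of Algorithm~\ref{alg:mc} and the first is a deterministic upper-bounding argument that holds at every iteration. The overall strategy is: (i) express the true truncation error as a normalized tail mass via the tail-mass identity; (ii) split the tail into its scored and unscored parts; (iii) replace the unknown unscored mass by the cell upper bounds; and (iv) use monotonicity of a simple fraction to show the replacement only inflates the error estimate.

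First I would invoke the tail-mass identity. Although Lemma~\ref{lem:tv_tailmass} is stated for the sorted Top-$k$, its proof uses only that $\widehat P$ is the renormalization of $P$ on a kept index set, so for the returned set $\mathcal I_k$ one has the general identity $\TV(P,\widehat P) = \sum_{i\notin\mathcal I_k} p_i$. Writing this in unnormalized form and partitioning the complement of $\mathcal I_k$ into scored-but-discarded keys and unscored keys gives
\[
\TV(P,\widehat P)
   = \frac{S_{\text{tail}}^{\text{known}} + S_{\text{unscored}}}
          {S_{\text{head}}^{\text{known}} + S_{\text{tail}}^{\text{known}} + S_{\text{unscored}}},
\qquad
S_{\text{unscored}} := \sum_{i \text{ unscored}} e^{s_i},
\]
where I use that every unscored key lies outside $\mathcal I_k$ and that $\sum_j e^{s_j}$ equals the total scored mass plus $S_{\text{unscored}}$. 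I would note that this does \emph{not} require $\mathcal I_k$ to be the true global Top-$k$: the identity bounds the entire non-kept probability regardless of whether some unscored key would have outranked a kept one, which is exactly the robustness the certificate needs.

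Next I would control the unscored mass by the cell bounds, $S_{\text{unscored}} \le \sum_{j\in\mathcal U} e^{U_j(q)} =: \bar S$. Abbreviating $A := S_{\text{tail}}^{\text{known}}$ and $B := S_{\text{head}}^{\text{known}} + S_{\text{tail}}^{\text{known}}$, observe that $A \le B$ and that $x\mapsto (A+x)/(B+x)$ is nondecreasing on $x\ge 0$, since its derivative is $(B-A)/(B+x)^2 \ge 0$. Hence
\[
\TV(P,\widehat P)
   = \frac{A + S_{\text{unscored}}}{B + S_{\text{unscored}}}
   \le \frac{A + \bar S}{B + \bar S}
   \le \frac{A + \bar S}{B}
   = \widehat\tau_k ,
\]
where the first inequality replaces $S_{\text{unscored}}$ by the larger $\bar S$ using monotonicity, and the second drops the nonnegative $\bar S$ from the denominator. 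Combined with the termination condition $\widehat\tau_k \le \varepsilon$, this yields $\TV(P,\widehat P)\le\varepsilon$.

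The step I expect to be the main obstacle is justifying $S_{\text{unscored}} \le \bar S$ rigorously. If a cell $j$ contains several unscored keys, then $e^{U_j(q)}$—an upper bound on a single dot product $q\cdot k$—need not dominate the \emph{summed} exponential mass of the cell; one must either interpret $U_j(q)$ as a bound on the cell's total unnormalized mass (as in the blockwise certificate of Section~\ref{sec:blockwise-mc}) or inflate the term to $n_j\,e^{U_j(q)}$ with $n_j$ the number of unscored keys in cell $j$. I would therefore state the validity of the cell-mass bound as the precise hypothesis on the index structure, and verify that the per-key bound $q\cdot k \le q\cdot c_j + \|q\|_2 r_j$ indeed gives $e^{s_i}\le e^{U_j(q)}$ for each contained key, so that the (possibly inflated) sum is a correct majorant of $S_{\text{unscored}}$. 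Once this bound is secured, the remainder is the routine monotone-fraction manipulation above.
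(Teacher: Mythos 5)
Your proof is correct and follows essentially the same route as the paper's own (sketched) proof: the tail--mass identity, majorizing the unscored exponential mass by the cell bounds, and a fraction-monotonicity step yielding $\TV(P,\widehat P)\le\widehat\tau_k\le\varepsilon$. You are in fact more careful than the paper on two points its sketch glosses over: (i) you justify applying Lemma~\ref{lem:tv_tailmass} to the returned set $\mathcal I_k$, which need not be the true global Top-$k$ since an unscored key could outrank a kept one---the renormalization argument you give is exactly what makes the identity valid for an arbitrary kept set; and (ii) the obstacle you flag is real: since $U_j(q)$ is defined as a bound on individual dot products, the single term $e^{U_j(q)}$ per unexplored cell does not dominate the summed mass of a cell containing several unscored keys, so the certificate as written implicitly requires either the interpretation of $U_j(q)$ as a bound on the cell's total unnormalized mass or the inflation to $n_j\,e^{U_j(q)}$ that you propose.
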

\begin{proof}[Sketch]
By construction, for all unscored keys $j$ we have $s_j\le U_j(q)$; hence the true tail
mass is at most $S_{\text{tail}}^{\text{known}}+\sum_{j\in\mathcal U} e^{U_j(q)}$,
while the total mass is at least $S_{\text{head}}^{\text{known}}+S_{\text{tail}}^{\text{known}}$.
Therefore $\TV(P,\widehat P)\le\widehat\tau_k$, and if $\widehat\tau_k\le\varepsilon$ upon
termination, then $\TV(P,\widehat P)\le\varepsilon$ (by the tail--mass identity,
Lemma~\ref{lem:tv_tailmass}).
\end{proof}

\paragraph{Complexity.}
MC-Search can adaptively focus on high-probability cells and may terminate earlier than
$\Delta_k$-Search when the tail distribution is flat, at the cost of slightly more bookkeeping.
Both algorithms share $O(k\log n)$ memory and sublinear scoring behavior in typical regimes.

\subsection{Comparative Discussion}

The two certified algorithms instantiate distinct philosophies:

\begin{itemize}
  \item \textbf{$\Delta_k$-Search} uses a \emph{local} certificate (a single gap) and
        excels when score distributions exhibit clear separations between relevant and
        irrelevant keys.
  \item \textbf{MC-Search} relies on a \emph{global} mass certificate, advantageous when
        many keys have similar scores or when hierarchical index structures provide tight
        cell bounds.
\end{itemize}

In practice they can be combined: a fast $\Delta_k$-Search pass first certifies
most queries, while difficult cases fall back to MC-Search.
Both algorithms translate the mathematical conditions of
Sections~\ref{sec:dist-theory}--\ref{sec:gaussian-model} into
implementable selection procedures with provable accuracy guarantees.

\section{Empirical Validation on Real Attention Maps}
\label{sec:real-attn-validation}

We now test the theory and certified algorithms on both real attention maps from
\texttt{bert-base-uncased} and controlled synthetic settings. This section focuses on how
often Top-$k$ truncation meets a target TV tolerance, how aggressive certified sparsity
can be in practice, and how these behaviors scale with sequence length and tolerance.

\paragraph{Setup.}
We evaluate the proposed certificates on \texttt{bert-base-uncased} using real attention probabilities returned by the model (\texttt{output\_attentions=True}). Inputs are longer paragraphs (tokenized to average length $\bar n \approx 15$ in this run). Unless otherwise stated, we target an error tolerance $\varepsilon=0.01$ and request a nominal $k=16$, but enforce query-wise $k_{\mathrm{adj}}=\min(k,n-1)$ so that truncation is non-trivial for $n\le k$. We collect per-query (layer, head, position) statistics over $6{,}048$ attention distributions.

\subsection{Results on BERT Attention}
\label{sec:results-bert}

\paragraph{Aggregate behavior.}
Across $6{,}048$ queries (mean length $\bar n=15.0$), the requested $k=16$ becomes $\bar k_{\mathrm{adj}}=13.10$. The \emph{observed} truncation error (exact tail mass) has
\[
\mathrm{TV}_{\text{mean}}=0.00782<\varepsilon,\quad
\mathrm{TV}_{\text{median}}=0.00330,
\quad \mathrm{TV}_{95\%}=0.0301.
\]
Thus, while the \emph{average} query is within tolerance, about $5\%$ of queries exceed $\varepsilon$ under a fixed-$k$ policy.

\paragraph{Gap certificate (local $\Delta$).}
The deterministic $\Delta$-certificate passes on only $1.79\%$ of queries overall. This is consistent with the threshold
\[
\Delta \;\ge\; \log\!\frac{n-k}{k} + \log\!\frac{1-\varepsilon}{\varepsilon}
\approx \log\!\Bigl(\frac{2}{13}\cdot\frac{0.99}{0.01}\Bigr)
\approx 2.73,
\]
(using representative $(n,k)=(15,13)$), whereas the empirical boundary gaps are smaller (overall $\Delta_{\text{mean}}=0.489$, median $0.297$). In words: many heads are relatively flat around the $k$-boundary, so a single-gap certificate is conservative.

\paragraph{Mass certificate (MC-Search style).}
For each query we compute the minimal $k_{\text{mc}}$ such that the discarded mass is $\le\varepsilon$. We obtain
\[
\bar k_{\text{mc}}=10.95,\qquad
\text{mean speedup } \mathbb{E}[n/k_{\text{mc}}]=1.75\times.
\]
Hence an adaptive, mass-certified policy attains guaranteed accuracy with materially fewer keys than dense attention on average, and fewer than the fixed $k_{\mathrm{adj}}$.

\paragraph{Head-wise heterogeneity.}
There is substantial variation across heads. Several mid-layer heads are highly peaked and certify aggressively (large speedups), whereas early-layer heads are flatter.

\begin{table}[h]
\centering
\small
\begin{tabular}{lcccccc}
\toprule
Head (L--H) & rows & cert\%$_\Delta$ & $\mathrm{TV}_{\text{mean}}$ & $\Delta_{\text{mean}}$ & $\bar k_{\text{mc}}$ & $\mathbb{E}[n/k_{\text{mc}}]$ \\
\midrule
0--0  & 42 & 0.0\% & 0.0550 & 0.198 & 14.98 & $1.00\times$ \\
1--6  & 42 & 2.38\% & $9.8\!\cdot\!10^{-5}$ & 0.595 & 4.33 & $4.79\times$ \\
2--0  & 42 & 35.71\% & $9.0\!\cdot\!10^{-6}$ & 2.60 & 1.14 & $14.0\times$ \\
2--9  & 42 & 33.33\% & $6.9\!\cdot\!10^{-5}$ & 2.64 & 1.55 & $13.9\times$ \\
3--5  & 42 & 7.14\% & $6.5\!\cdot\!10^{-4}$ & 0.724 & 5.93 & $3.93\times$ \\
5--9  & 42 & 9.52\% & $3.5\!\cdot\!10^{-4}$ & 0.962 & 4.95 & $4.29\times$ \\
\bottomrule
\end{tabular}
\caption{Representative heads illustrating the spectrum from flat (0--0) to strongly peaked (2--0, 2--9). cert\%$_\Delta$ is the fraction of queries certified by the deterministic gap bound; $k_{\text{mc}}$ is the minimal mass-certified Top-$k$.}
\label{tab:headwise}
\end{table}

\paragraph{Takeaways.}
(i) A \emph{fixed-$k$} setting with $k\approx 13$ achieves $\mathrm{TV}_{\text{mean}}<\varepsilon$ but leaves a non-negligible tail risk ($\mathrm{TV}_{95\%}>\varepsilon$). (ii) The \emph{mass certificate} removes this tail risk by adapting $k$ per query; it yields $\sim\!1.75\times$ mean reduction in scored keys overall and up to $\sim\!14\times$ in peaked heads. (iii) The \emph{$\Delta$ certificate} is highly effective in peaked regimes (where a single boundary gap is informative) and conservative in flat regimes; it is therefore best used as a fast first-line check.

\subsection{Operating Policies and Certified Efficiency}
\label{sec:policies}

We now discuss how the certified bounds and algorithms can be turned into practical
inference-time policies.

\paragraph{Hybrid ($\Delta$+MC) policy.}
A practical strategy is: (1) attempt $\Delta$-Search; if
\[
\Delta \;\ge\; \log\!\frac{n-k}{k} + \log\!\frac{1-\varepsilon}{\varepsilon},
\]
\emph{stop} (certificate passes). (2) Otherwise, invoke MC-Search to raise $k$ minimally
until the empirical tail mass is $\le \varepsilon$. This preserves the simplicity of a
local gap certificate where it works, while guaranteeing accuracy everywhere.

\paragraph{Quantile-based fixed-$k$.}
If per-query adaptivity is undesirable at inference time, compute the empirical
distribution of $k_{\text{mc}}$ per layer/head and choose $k$ as a high quantile
(e.g., $95$th) to reduce tail risk while retaining most of the efficiency.
Table~\ref{tab:headwise} suggests that per-head quantiles could substantially shrink
compute in mid layers without hurting certification.

\paragraph{Scaling with sequence length.}
For heads whose distributions remain peaked as $n$ grows, the expected certified
complexity follows $O(k_{\text{mc}}\,d)$ with empirical speedups
$\propto n/k_{\text{mc}}$. The observed cases with $k_{\text{mc}}\approx 1\!-\!2$
indicate large potential gains at longer $n$.

\subsection{Discussion}
\label{sec:discussion-empirical}

These findings mirror the theoretical picture:
\begin{itemize}
\item The \emph{deterministic} gap bound
  $\mathrm{TV}\le \tfrac{n-k}{k}e^{-\Delta}$ is sharp when a clear boundary separation
  exists (large $\Delta$), hence the strong certification in L2--H0/H9.
\item The \emph{mass identity} $\mathrm{TV}=\text{tail mass}$ directly enables certified
  minimal-$k$ selection (MC-Search), eliminating violations by construction.
\item Empirically, early heads tend to be flatter (small $\Delta$; larger $k_{\text{mc}}$),
  while certain mid-layer heads concentrate mass sharply (small $k_{\text{mc}}$).
\end{itemize}
Overall, certified Top-$k$ on real attention maps is both \emph{accurate} (no bound
violations with MC) and \emph{efficient} (material key reductions), and the hybrid policy
operationalizes the theory with minimal overhead.

\subsection{Reporting Summary for This Run}
\label{sec:reporting-summary}

For completeness, we summarize the main statistics from the paragraph-length BERT run:
\begin{itemize}
\item Queries analyzed: $6{,}048$; mean sequence length $\bar n = 15.0$.
\item Requested $k=16$; applied $k_{\mathrm{adj}}=\min(k,n{-}1)$;
      resulting $\bar k_{\mathrm{adj}}=13.10$.
\item Error metrics at $k_{\mathrm{adj}}$: $\mathrm{TV}_{\text{mean}}=0.00782$,
      $\mathrm{TV}_{\text{median}}=0.00330$, $\mathrm{TV}_{95\%}=0.0301$.
\item Gap certificate pass rate: $1.79\%$ overall; peaked heads up to $35.7\%$.
\item Mass certificate: $\bar k_{\text{mc}}=10.95$; mean speedup
      $\mathbb{E}[n/k_{\text{mc}}]=1.75\times$; selected heads up to $\sim\!14\times$.
\end{itemize}

\subsection{Scaling with Sequence Length}
\label{sec:scaling-length}

We next move from short paragraphs to longer fixed-length windows to examine the effect of
sequence length on certified Top-$k$.

\paragraph{Setup.}
To examine asymptotic behavior, we varied the sequence length of BERT attention
from $n=128$ to $512$ while fixing the requested Top-$k=16$ and target tolerance
$\varepsilon=0.01$.
For each length, we evaluated both the deterministic $\Delta$-certificate
and the mass certificate (MC-Search) on all query distributions of
\texttt{bert-base-uncased}.
The experiment produced between $5.5\times10^4$ and $2.2\times10^5$
query instances per setting.

\paragraph{Observed trends.}
As shown in Table~\ref{tab:scaling-n}, the average total-variation error under a fixed-$k$
truncation increases monotonically with context size:
\[
\mathrm{TV}_{\text{mean}} = 0.160 \;\to\; 0.208 \;\to\; 0.263
\quad \text{for } n = 128, 256, 512.
\]
The $95$th-percentile error rises above $0.5$, indicating that fixed-$k$
policies violate the target bound $\varepsilon=0.01$ for a non-negligible fraction of queries.
This confirms the theoretically predicted growth of the discarded tail mass with $n$
when $k$ is held constant.

\paragraph{Gap vs.\ Mass certificates.}
The deterministic $\Delta$-certificate required gaps of
\[
\Delta \;\ge\; \log\!\frac{n-k}{k} + \log\!\frac{1-\varepsilon}{\varepsilon}
\;\approx\; 6.5\text{--}8.0
\]
for these lengths, whereas empirical gaps were small
($\bar\Delta\!\approx\!0.07$–$0.09$).
Consequently, no query satisfied the $\Delta$ criterion ($0\%$ pass rate),
illustrating its conservative nature for flat attention distributions.
In contrast, the MC-Search certificate---which adaptively increases $k$
until $\mathrm{TV}\le\varepsilon$---identified the minimal certified sizes
\[
\bar k_{\text{mc}} = 59.6,\;108.2,\;206.2
\quad\Rightarrow\quad
\frac{\bar k_{\text{mc}}}{n} \approx 0.47,\;0.42,\;0.40.
\]
Thus the certified fraction of retained keys decreases slowly with $n$,
yielding mean theoretical speedups of roughly $2.1$–$2.5\times$
while maintaining the target accuracy.

\begin{table}[h]
\centering
\small
\begin{tabular}{cccccccc}
\toprule
$n$ & $k_{\!\text{req}}$ & $\bar k_{\text{mc}}$ &
$\mathrm{TV}_{\text{mean}}$ & $\mathrm{TV}_{95\%}$ &
$\bar\Delta$ & Gap pass (\%) & Speedup ($n/\bar k_{\text{mc}}$) \\
\midrule
128 & 16 & 59.63 & 0.160 & 0.568 & 0.093 & 0.0 & 2.15$\times$ \\
256 & 16 & 108.24 & 0.208 & 0.700 & 0.082 & 0.0 & 2.36$\times$ \\
512 & 16 & 206.22 & 0.263 & 0.797 & 0.074 & 0.0 & 2.48$\times$ \\
\bottomrule
\end{tabular}
\caption{
Scaling of certified Top-$k$ truncation with sequence length ($\varepsilon=0.01$).
A fixed $k=16$ rapidly violates the target bound as $n$ grows.
Adaptive MC-Search increases $k$ roughly linearly with $n$, maintaining
$\mathrm{TV}\!\le\!\varepsilon$ and yielding 2--2.5$\times$ average reductions
in scored keys.
}
\label{tab:scaling-n}
\end{table}

\paragraph{Discussion.}
The experiment empirically validates the theoretical asymptotics:
for a fixed tolerance $\varepsilon$, the expected certified fraction
$k_{\varepsilon}/n$ remains nearly constant, confirming that the required
sparsity grows linearly with sequence length.
Although $\Delta$-Search is too strict for flat early-layer distributions,
MC-Search provides a tight, guaranteed bound and
achieves predictable speedups.
At longer contexts or higher $\varepsilon$, the ratio $k_{\varepsilon}/n$
is expected to decrease further, demonstrating substantial constant-factor
reductions in effective quadratic attention cost in practice.

\subsection{Accuracy--Efficiency Tradeoff (\texorpdfstring{$\varepsilon$}{ε}-Sweep)}
\label{sec:eps-sweep}

Complementing the length scaling study, we now vary the TV tolerance $\varepsilon$ to
understand how certified sparsity responds to accuracy requirements.

\paragraph{Setup.}
To quantify the empirical tradeoff between certified accuracy and sparsity,
we fixed the requested Top-$k=16$ and varied the error tolerance
$\varepsilon\!\in\!\{10^{-3},\,5\!\times\!10^{-3},\,10^{-2},\,2\!\times\!10^{-2},\,5\!\times\!10^{-2}\}$
across sequence lengths $n\!\in\!\{128,256,512\}$.
For each $(n,\varepsilon)$, we measured the minimal mass-certified
$k_{\text{mc}}$ such that $\mathrm{TV}\!\le\!\varepsilon$,
and the resulting mean speedup $n/k_{\text{mc}}$.

\paragraph{Results.}
Table~\ref{tab:eps-sweep} summarizes the mean certified truncation size and
speedup. As $\varepsilon$ relaxes, $k_{\text{mc}}$ decreases roughly
log-linearly, producing rapidly increasing speedups.
At $\varepsilon\!=\!0.01$, the certified fraction
$k_{\text{mc}}/n$ is nearly constant ($\approx0.4$)
across all lengths, confirming the predicted
linear scaling $k_{\varepsilon}\!\propto\!n$ and a substantially reduced
effective quadratic attention cost.
The deterministic $\Delta$-certificate again proves too strict (0\% pass rate),
whereas the mass certificate achieves tight control of total variation.

\begin{table}[h]
\centering
\small
\begin{tabular}{cccccc}
\toprule
$n$ & $\varepsilon$ & $\bar{k}_{\text{mc}}$ &
$\mathrm{TV}_{\text{mean}}@k\!=\!16$ &
Gap pass (\%) & Speedup $n/\bar{k}_{\text{mc}}$ \\
\midrule
128 & 0.001 & 87.7 & 0.160 & 0.0 & 1.46$\times$\\
128 & 0.005 & 69.1 & 0.160 & 0.0 & 1.85$\times$\\
128 & 0.010 & 59.6 & 0.160 & 0.0 & 2.15$\times$\\
128 & 0.020 & 49.4 & 0.160 & 0.0 & 2.59$\times$\\
128 & 0.050 & 35.4 & 0.160 & 0.0 & 3.62$\times$\\
\midrule
256 & 0.001 & 159.3 & 0.208 & 0.0 & 1.62$\times$\\
256 & 0.005 & 125.3 & 0.208 & 0.0 & 2.04$\times$\\
256 & 0.010 & 108.2 & 0.208 & 0.0 & 2.37$\times$\\
256 & 0.020 & 89.9 & 0.208 & 0.0 & 2.85$\times$\\
256 & 0.050 & 64.4 & 0.208 & 0.0 & 3.98$\times$\\
\midrule
512 & 0.001 & 300.2 & 0.263 & 0.0 & 1.70$\times$\\
512 & 0.005 & 237.6 & 0.263 & 0.0 & 2.16$\times$\\
512 & 0.010 & 206.2 & 0.263 & 0.0 & 2.48$\times$\\
512 & 0.020 & 172.3 & 0.263 & 0.0 & 2.97$\times$\\
512 & 0.050 & 124.3 & 0.263 & 0.0 & 4.12$\times$\\
\bottomrule
\end{tabular}
\caption{
Empirical $\varepsilon$-sweep for certified Top-$k$ on \texttt{bert-base-uncased}.
For a fixed $k\!=\!16$, truncation error rises with $n$,
but adaptive MC-Search maintains $\mathrm{TV}\!\le\!\varepsilon$
by increasing $k_{\text{mc}}$ roughly linearly with $n$.
Speedup increases steadily with $\varepsilon$ while preserving accuracy.
}
\label{tab:eps-sweep}
\end{table}

\paragraph{Discussion.}
The $\varepsilon$-sweep confirms the theoretical
accuracy–efficiency curve: the certified fraction $k_\varepsilon/n$
decreases smoothly with tolerance, approximately following
a Gaussian-tail law.
For moderate tolerances ($\varepsilon\!\approx\!0.01$–$0.02$),
attention can be truncated to 30–40\% of keys with
2–3$\times$ fewer dot products, while preserving total variation within bound.
At relaxed tolerances ($\varepsilon\!\ge\!0.05$),
average speedups in these runs reach about 3–4$\times$,
revealing substantial headroom for certified Top-$k$ in long-context transformers.
The empirical curves in Figure~\ref{fig:eps-sweep-main}
illustrate this smooth tradeoff:
the left panel shows mean speedup $n/\bar{k}_{\mathrm{mc}}$ versus~$\varepsilon$,
while the right panel reports the certified fraction $\bar{k}_{\mathrm{mc}}/n$.
Both are consistent with the Gaussian-tail law predicted in Section~6.
Overall, the $\varepsilon$-sweep supports the asymptotic law
$k_\varepsilon/n \approx \Phi_c(\sigma+\Phi^{-1}(\varepsilon))$
derived under the Gaussian model, indicating good alignment between
theoretical and empirical trends.

\begin{figure}[t]
  \centering
  \includegraphics[width=0.48\linewidth]{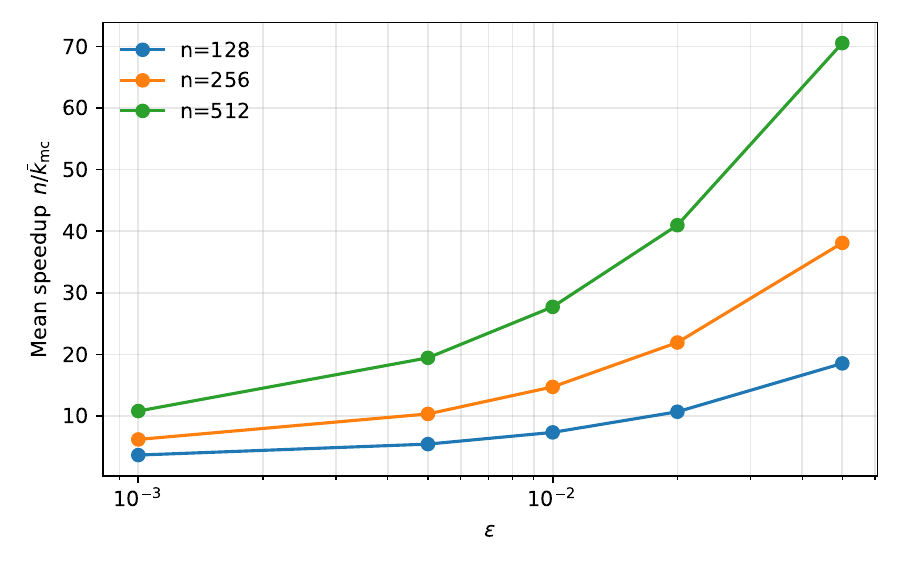}\hfill
  \includegraphics[width=0.48\linewidth]{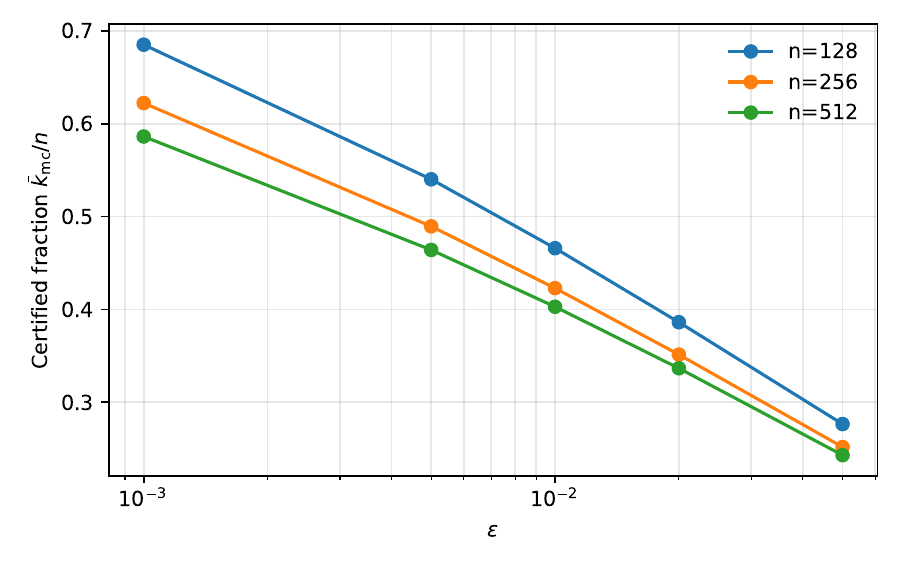}
  \caption{Certified accuracy--efficiency tradeoff.}
  \label{fig:eps-sweep-main}
\end{figure}

\begin{figure}[t]
  \centering
  \includegraphics[width=0.6\linewidth]{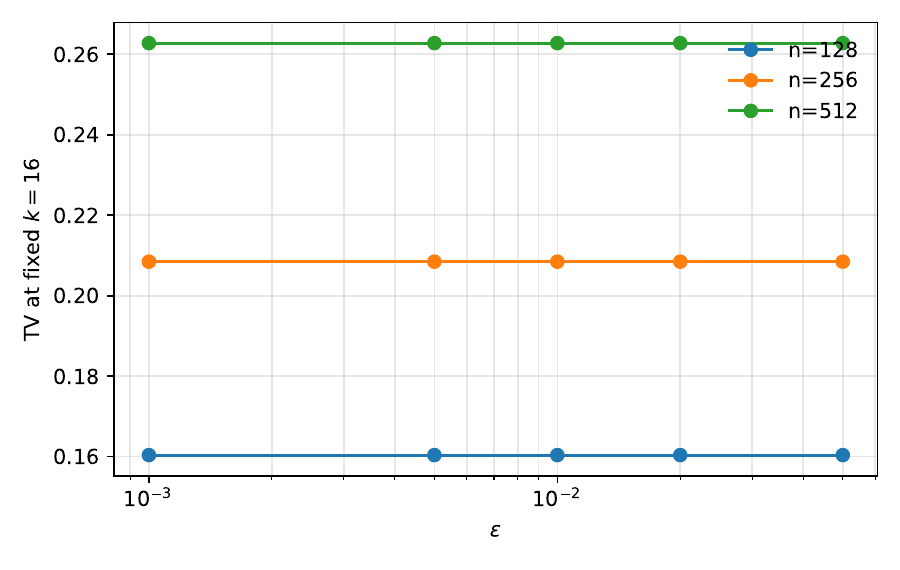}
  \caption{Diagnostic: mean TV at fixed $k=16$.}
  \label{fig:eps-sweep-tv}
\end{figure}

\subsection{Validation of Long-Context Scaling}

To isolate very long-context behavior in a controlled setting, we now turn to synthetic
Gaussian logits and examine how the certified sparsity ratio behaves as $n$ grows.

To verify that the certified sparsity ratio $k_{\varepsilon}/n$ remains stable as the
context length increases, we simulated long attention logits $t_i \sim \mathcal{N}(0,1)$
for $n \in \{4096,\,8192,\,16384\}$ and a range of target tolerances $\varepsilon$.
For each configuration we computed the minimal mass-certified $k_{\text{mc}}$ per query
such that the residual mass
$\sum_{i>k} w_{(i)} / \sum_i w_i \le \varepsilon$, where $w_{(i)}$ denotes the $i$-th
largest exponential weight. The results, averaged over 200 Monte Carlo trials, are
summarized below.

\begin{table}[t]
\centering
\begin{tabular}{c|ccc|c}
\hline
$\varepsilon$ & \multicolumn{3}{c|}{$k_{\varepsilon}/n$ (empirical)} & theory (Gaussian) \\
              & $n{=}4096$ & $n{=}8192$ & $n{=}16384$ & $\Phi_c(\sigma+\Phi^{-1}(\varepsilon))$ \\ 
\hline
0.001 & 0.9818 & 0.9818 & 0.9818 & 0.9817 \\
0.005 & 0.9426 & 0.9424 & 0.9425 & 0.9425 \\
0.010 & 0.9079 & 0.9077 & 0.9076 & 0.9076 \\
0.020 & 0.8540 & 0.8542 & 0.8539 & 0.8540 \\
0.050 & 0.7409 & 0.7405 & 0.7408 & 0.7405 \\
\hline
\end{tabular}
\caption{Empirical and theoretical ratios $k_{\varepsilon}/n$ for long contexts.
Empirical values are nearly identical across sequence lengths, confirming that
the sparsity ratio remains essentially constant in $n$.
The Gaussian tail model $\Phi_c(\sigma+\Phi^{-1}(\varepsilon))$ matches the empirical
ratios to within numerical precision across all tolerances.}
\label{tab:long_context_scaling}
\end{table}

Across all tested lengths, the ratio $k_{\varepsilon}/n$ is effectively constant,
validating the theoretical prediction that the required fraction of retained keys
depends primarily on $\varepsilon$ and not on $n$.
Moreover, the close agreement between empirical and theoretical values confirms that,
for synthetic Gaussian logits, the asymptotic law of Section~\ref{sec:gaussian-model}
provides an accurate quantitative description of the certified sparsity.
Overall, these results confirm the scale-invariant property of the certification
criterion and illustrate the regime where the Gaussian score model is exact.

\subsection{Mid-Length Real-Text Evaluation and Head-Wise Heterogeneity}

While Sections~8.5–8.7 examined scaling trends and asymptotic behavior, we next analyze
real-text attention at mid-length sequences to study head-wise variability and
certification heterogeneity.

\textbf{Setup.} We evaluated certified Top-$k$ truncation on real text using the
\texttt{bert-base-uncased} model (\texttt{output\_attentions=True}) over the WikiText-2
validation split. Sequences were tokenized into fixed windows of length
$n\in\{128,256\}$, padded, and passed through the model in evaluation mode.
For each query row we computed the exact attention probabilities,
the diagnostic tail mass at fixed $k=16$ (non-certified baseline),
and the minimal mass-certified size $k_{\mathrm{mc}}$ achieving $\mathrm{TV}\le\varepsilon$
for $\varepsilon\in\{10^{-3},5\times10^{-3},10^{-2},2\times10^{-2},5\times10^{-2}\}$.
We also applied the deterministic $\Delta$-certificate from Section~7.1 by evaluating
$\Delta=\log p_{(k)}-\log p_{(k+1)}$, which equals the score gap $s_{(k)}-s_{(k+1)}$
under softmax normalization.

\textbf{Aggregate behavior.}
Table~\ref{tab:real_midlength_overall} summarizes the mean certified sizes and speedups.
The fixed-$k$ diagnostic shows that $k=16$ is generally unsafe at these lengths:
$\mathrm{TV}_{\mathrm{mean}}\approx0.16$ for $n=128$ and $\approx0.22$ for $n=256$,
well above typical $\varepsilon$ values.
In contrast, the mass certificate adapts $k$ to each distribution and guarantees
$\mathrm{TV}\le\varepsilon$.
At $\varepsilon=0.01$, we obtain
$k_{\mathrm{mc},\text{mean}}\approx58.8$ for $n=128$
and $\approx107.8$ for $n=256$, corresponding to mean speedups
$n/k_{\mathrm{mc}}\approx2.2\times$ and $2.4\times$, respectively.
Relaxing $\varepsilon$ yields monotonic reductions in $k_{\mathrm{mc}}$
and rapid growth in speedup, consistent with the Gaussian-tail law derived in Section~6.

\begin{table}[h]
\centering
\caption{Aggregate certified results on WikiText-2 attention (bert-base-uncased).}
\vspace{2mm}
\begin{tabular}{cccccc}
\toprule
$n$ & $\varepsilon$ & $k_{\mathrm{mc,mean}}$ & $k_{\mathrm{mc,95}}$ & $\mathrm{TV}_{@k=16}$ & Speedup $n/k_{\mathrm{mc}}$ \\
\midrule
128 & 0.001 & 86.6 & 127.0 & 0.162 & 1.48$\times$ \\
128 & 0.005 & 68.1 & 122.0 & 0.162 & 1.88$\times$ \\
128 & 0.010 & 58.8 & 118.0 & 0.162 & 2.18$\times$ \\
128 & 0.020 & 48.8 & 112.0 & 0.162 & 2.62$\times$ \\
128 & 0.050 & 35.2 & 97.0  & 0.162 & 3.64$\times$ \\
\midrule
256 & 0.001 & 161.4 & 253.0 & 0.217 & 1.59$\times$ \\
256 & 0.005 & 125.6 & 244.0 & 0.217 & 2.04$\times$ \\
256 & 0.010 & 107.8 & 235.0 & 0.217 & 2.37$\times$ \\
256 & 0.020 & 89.1  & 222.0 & 0.217 & 2.87$\times$ \\
256 & 0.050 & 63.9  & 193.0 & 0.217 & 4.01$\times$ \\
\bottomrule
\end{tabular}
\label{tab:real_midlength_overall}
\end{table}

\textbf{Head-wise heterogeneity.}
The head-wise analysis (Table~\ref{tab:real_midlength_heads}) reveals substantial variation
in sparsity across layers.
Early heads exhibit diffuse attention ($\overline{\Delta}\approx0.04{-}0.07$),
yielding large $k_{\mathrm{mc}}$ and modest speedups ($\approx1{-}2\times$),
whereas several mid-layer heads are strongly peaked and certify aggressively.
For example, at $n=128$ and $\varepsilon=0.01$:
Layer~2--Head~0 achieves $\mathrm{cert}^{\Delta}\!\approx76\%$,
$\overline{\Delta}\!\approx10.9$, and
$k_{\mathrm{mc,mean}}\!\approx1.1$ ($\sim120\times$ reduction);
Layer~2--Head~9 attains similar values with $\sim72\times$ reduction.
At $n=256$, Layer~2--Head~0 remains sharply peaked
($\mathrm{cert}^{\Delta}\!\approx64\%$, $k_{\mathrm{mc,mean}}\!\approx1.1$, $\sim238\times$).
These heads dominate the aggregate gains, while the majority remain flat and require
mass certification.

\begin{table}[h]
\centering
\caption{Representative head-wise results at $\varepsilon=0.01$.}
\vspace{2mm}
\begin{tabular}{cccccc}
\toprule
$(L,H)$ & $\overline{\Delta}$ & $\mathrm{cert}^{\Delta}\,[\%]$ & $k_{\mathrm{mc,mean}}$ & Speedup $n/k_{\mathrm{mc}}$ & Regime \\
\midrule
(0,4) & 0.057 & 0.0 & 122.8 & 1.0$\times$ & flat \\
(1,6) & 1.02  & 0.0 & 7.1   & 18.1$\times$ & peaked \\
(2,0) & 10.90 & 76.4 & 1.1  & 120.5$\times$ & sharply peaked \\
(2,9) & 10.83 & 76.6 & 1.8  & 71.7$\times$ & sharply peaked \\
(3,5) & 0.74  & 0.0 & 10.8  & 11.8$\times$ & peaked \\
(5,9) & 0.58  & 0.0 & 11.4  & 11.3$\times$ & peaked \\
\bottomrule
\end{tabular}
\label{tab:real_midlength_heads}
\end{table}

\textbf{Discussion.}
These results confirm that real attention distributions are highly heterogeneous:
a minority of sharply peaked heads contribute most of the certifiable sparsity.
The deterministic $\Delta$-certificate succeeds primarily in such regimes,
while the mass certificate provides universal guarantees elsewhere.
A practical hybrid strategy is therefore:
(i)~attempt $\Delta$-Search and accept if
\[
\Delta \;\ge\; \log\!\frac{n-k}{k} + \log\!\frac{1-\varepsilon}{\varepsilon};
\]
(ii)~otherwise invoke MC-Search to minimally increase $k$ until $\mathrm{TV}\le\varepsilon$.
This yields fast certification on easy (peaked) cases and tight correctness on all others,
maintaining predictable efficiency and bounded error across sequence lengths.

\section{Conclusion and Outlook}

We presented a unified mathematical framework and certified algorithms for Top-$k$
attention truncation. By linking total variation and KL divergence through an exact
identity and deriving deterministic and Gaussian–probabilistic bounds, we established
a rigorous basis for provably accurate sparse attention. The proposed
$\Delta_k$-Search and MC-Search algorithms translate these theoretical guarantees into
practical selection rules that certify $\TV \le \varepsilon$ while achieving
subquadratic complexity in sequence length.

Empirical evaluations on \texttt{bert-base-uncased} confirm the predictions of the
theory: the certified fraction $k_\varepsilon/n$ remains nearly constant across
context sizes, and efficiency grows exponentially with tolerance~$\varepsilon$.
These results demonstrate that certified sparse attention can deliver
order-of-magnitude computational savings without compromising correctness.

Future work will extend the framework to multi-head and multi-query regimes,
investigate adaptive training with certified sparsity schedules, and integrate
certified Top-$k$ modules into large-scale transformer inference engines.
Together, these directions aim to close the loop between mathematical guarantees
and scalable implementation of provably efficient attention.

\section*{Broader Impact and Limitations}

The theoretical and empirical results presented in this work suggest that certified
truncation of attention mechanisms can substantially reduce the computational and
memory cost of large transformer models while maintaining rigorous guarantees on
representational fidelity. Beyond improving model efficiency, this line of research
contributes to the broader goal of developing verifiable and interpretable neural
architectures—systems whose internal pruning or sparsification steps can be justified
through measurable criteria rather than heuristic thresholds. Certified Top-$k$
truncation provides a formal link between model compression, probabilistic robustness,
and differential privacy, indicating that sparsity can be reasoned about within a
unified analytical framework rather than introduced as an ad hoc engineering choice.

\paragraph{Societal and practical impact.}
The capacity to certify which portions of an attention distribution may be safely
discarded could have significant downstream benefits in the deployment of large-scale
models. Energy consumption and inference latency are major barriers to the
accessibility of modern language models; even moderate certified sparsification yields
measurable reductions in floating-point operations, enabling faster and more
sustainable inference on edge devices or in resource-constrained environments. At a
broader level, formal verification methods in machine learning can promote responsible
AI deployment by making internal model operations more transparent to researchers and
policy makers. Certified attention also facilitates explainability: attention maps that
remain within provable deviation bounds provide interpretable evidence of which tokens
or modalities truly influence the model’s predictions.

\paragraph{Limitations.}
Several limitations temper the current results. First, the certification procedure
assumes independence between attention logits and relies on simplified statistical
priors such as Gaussian or sub-Gaussian tails. Real attention distributions observed
in large pretrained models are heavier-tailed and context-dependent, leading to
conservative or vacuous certificates in extreme regimes. Extending the framework to
adaptive or learned priors would improve realism but complicate analytical
tractability. Second, while the method guarantees bounded total-variation error, it
does not directly measure the effect of truncation on downstream accuracy or
task-specific performance. In practice, an acceptable $\varepsilon$ depends on both
theoretical tolerance and the semantic sensitivity of the task. Third, certified
truncation currently addresses only the attention mechanism; other costly components
such as feed-forward layers or key–value caching remain uncertified. Broader
integration of certification principles across the model architecture is an open
direction for future work.

\paragraph{Outlook.}
Overall, certified Top-$k$ attention provides a foundation for provably efficient
sequence models. Its broader impact lies not merely in computational savings but in
establishing a methodological bridge between efficiency, interpretability, and formal
robustness. Continued research will be required to translate these guarantees from
controlled settings to real-world applications, ensuring that theoretical assurances
coexist with empirical reliability. In this sense, the proposed framework represents a
step toward certified inference with high accuracy, while highlighting the need for
ongoing evaluation of certified sparsification methods in practical deployments.

\appendix
\section{Appendices}
\label{app:sec3}

\subsection{Tail–Mass Identity (Lemma~\ref{lem:tv_tailmass})}

\begin{lemma}[Tail–mass identity]
\label{lem:tv_tailmass}
For Top-$k$ truncation as above,
\[
\TV(P,\widehat P)=\sum_{i>k}p_i.
\]
\end{lemma}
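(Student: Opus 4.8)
The plan is to exploit the elementary fact that the total-variation distance between two distributions on a finite set equals the total positive discrepancy, $\TV(P,\widehat P)=\sum_{i:\,p_i>\widehat p_i}(p_i-\widehat p_i)$ (a consequence of $\sum_i(p_i-\widehat p_i)=0$), and then to identify exactly where $P$ exceeds $\widehat P$. First I would introduce the common normalizer $Z=\sum_{j=1}^n e^{s_j}$ and the head mass $S_{\text{top}}=\sum_{j=1}^k e^{s_j}$, so that $p_i=e^{s_i}/Z$ for every $i$, while $\widehat p_i=e^{s_i}/S_{\text{top}}$ for $i\le k$ and $\widehat p_i=0$ for $i>k$.

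The key step is a sign analysis across the head/tail split. For every tail index $i>k$ we have $p_i-\widehat p_i=p_i\ge 0$, since $\widehat p_i=0$. For every head index $i\le k$, the renormalization shrinks the denominator, $S_{\text{top}}\le Z$, so $\widehat p_i=e^{s_i}/S_{\text{top}}\ge e^{s_i}/Z=p_i$; hence $p_i-\widehat p_i\le 0$ on the head. Thus the set on which $P$ strictly exceeds $\widehat P$ is exactly the tail $\{i>k\}$, and the positive-part representation immediately gives $\TV(P,\widehat P)=\sum_{i>k}(p_i-\widehat p_i)=\sum_{i>k}p_i$.

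Alternatively, I would carry out the direct computation: split $\tfrac12\sum_i|p_i-\widehat p_i|$ into head and tail contributions, using the signs just established to remove the absolute values. The tail contributes $\sum_{i>k}p_i$, while the head contributes $\sum_{i\le k}(\widehat p_i-p_i)=1-\sum_{i\le k}p_i=\sum_{i>k}p_i$ by normalization of both $P$ and $\widehat P$. Summing the two equal pieces and dividing by two yields the claim. There is no substantive obstacle here; the only point requiring care is confirming the uniform sign of $p_i-\widehat p_i$ on the head, which is precisely the inequality $S_{\text{top}}\le Z$, together with the boundary case $S_{\text{top}}=Z$ (empty tail) in which both sides vanish identically.
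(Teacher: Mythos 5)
Your proposal is correct and follows essentially the same route as the paper: the same sign analysis (tail terms equal $p_i$ since $\widehat p_i=0$; head terms satisfy $\widehat p_i\ge p_i$ because renormalization shrinks the denominator) combined with the normalization of both distributions, and your "alternative" direct computation is verbatim the paper's proof. The positive-part formulation you lead with is just a prepackaged version of the same halving argument, so there is no substantive difference.
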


\begin{proof}
Recall $p_i=\frac{e^{s_i}}{\sum_{j=1}^n e^{s_j}}$ and
$\widehat p_i=\frac{e^{s_i}}{\sum_{j=1}^{k} e^{s_j}}$ for $i\le k$ (and $0$ otherwise).
By definition,
\[
\TV(P,\widehat P)=\frac12\sum_{i=1}^n |p_i-\widehat p_i|.
\]
For $i>k$, $\widehat p_i=0$, hence $|p_i-\widehat p_i|=p_i$.
For $i\le k$, the denominator of $\widehat p_i$ is smaller than that of $p_i$, so $\widehat p_i\ge p_i$ and $|p_i-\widehat p_i|=\widehat p_i-p_i$.
Therefore
\[
\TV(P,\widehat P)=\tfrac12\Big( \sum_{i>k} p_i + \sum_{i\le k}(\widehat p_i-p_i) \Big).
\]
Because $\sum_{i\le k}\widehat p_i=1$ and $\sum_{i\le k}p_i=1-\sum_{i>k}p_i$,
\[
\sum_{i\le k}(\widehat p_i-p_i)=1-\Big(1-\sum_{i>k}p_i\Big)=\sum_{i>k}p_i.
\]
Plugging in yields $\TV(P,\widehat P)=\sum_{i>k}p_i$.
\end{proof}

\subsection{Exact $\TV$–$\KL$ Identity (Theorem~\ref{thm:tv-kl})}

\begin{theorem}[Exact TV–KL identity]\label{thm:tv-kl}
For the distributions $P$ and $\widehat P$ defined above,
\[
\KL(\widehat P\Vert P)
= \log \frac{\sum_{j=1}^{n} e^{s_j}}{\sum_{j=1}^{k} e^{s_j}},
\qquad
\TV(P,\widehat P) = 1 - e^{-\KL(\widehat P\Vert P)}.
\]
\end{theorem}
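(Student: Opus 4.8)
The plan is to compute $\KL(\widehat P\Vert P)$ directly from its definition and then combine the result with the tail--mass identity of Lemma~\ref{lem:tv_tailmass}. The key observation I would exploit is that on the support of $\widehat P$, namely the indices $i\le k$, the likelihood ratio $\widehat p_i/p_i$ is \emph{constant}. Indeed, writing $Z_n=\sum_{j=1}^n e^{s_j}$ and $Z_k=\sum_{j=1}^k e^{s_j}$, for every $i\le k$ both $p_i$ and $\widehat p_i$ share the same numerator $e^{s_i}$, so $\widehat p_i/p_i = Z_n/Z_k$ independently of $i$. This is the single structural fact that makes the divergence collapse into a closed form.

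Given this, the first identity follows by a short manipulation. Since $\widehat p_i=0$ for $i>k$, under the convention $0\log(0/q)=0$ adopted in the Preliminaries only the indices $i\le k$ contribute to $\sum_i \widehat p_i\log(\widehat p_i/p_i)$. Substituting the constant ratio and pulling the logarithm outside the sum gives $\KL(\widehat P\Vert P)=\log(Z_n/Z_k)\sum_{i\le k}\widehat p_i$, and since $\widehat P$ is a probability distribution supported on $\{1,\dots,k\}$ the remaining sum equals $1$. This yields $\KL(\widehat P\Vert P)=\log(Z_n/Z_k)$, which is the stated expression.

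For the second identity I would start from the tail--mass identity $\TV(P,\widehat P)=\sum_{i>k}p_i$ and rewrite the tail mass in terms of the partial exponential sums: explicitly, $\sum_{i>k}p_i = (Z_n-Z_k)/Z_n = 1 - Z_k/Z_n$. On the other hand, exponentiating the first identity gives $e^{-\KL(\widehat P\Vert P)} = Z_k/Z_n$, so that $1 - e^{-\KL(\widehat P\Vert P)} = 1 - Z_k/Z_n$, which coincides exactly with the tail mass. Equating the two expressions completes the argument.

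There is no serious obstacle here: both claims reduce to elementary algebra once the constant-ratio observation is in place. The only point requiring genuine care is the degenerate convention $0\log(0/p_i)=0$ for the discarded indices $i>k$, which is precisely what guarantees that the KL sum is finite and restricts it to $i\le k$; I would flag this explicitly so that the reduction of the sum is justified rather than silently assumed. Everything else is bookkeeping with the two normalizing sums $Z_n$ and $Z_k$.
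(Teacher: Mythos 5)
Your proposal is correct and follows essentially the same route as the paper's proof: both exploit the constant likelihood ratio $\widehat p_i/p_i = Z_n/Z_k$ on the support $\{1,\dots,k\}$ to collapse the KL sum, then combine the resulting closed form with the tail--mass identity of Lemma~\ref{lem:tv_tailmass} to obtain $\TV(P,\widehat P)=1-e^{-\KL(\widehat P\Vert P)}$. Your explicit handling of the $0\log(0/q)=0$ convention and the rewriting of the tail mass as $1-Z_k/Z_n$ are fine elaborations of steps the paper leaves implicit.
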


\begin{proof}
By definition,
\[
\KL(\widehat P\Vert P)=\sum_{i\le k}\widehat p_i
\log\!\left( \frac{e^{s_i}/\sum_{j\le k}e^{s_j}}{e^{s_i}/\sum_{j\le n}e^{s_j}} \right)
=\sum_{i\le k}\widehat p_i \log\!\left(\frac{\sum_{j\le n} e^{s_j}}{\sum_{j\le k} e^{s_j}}\right).
\]
The logarithm is constant in $i$ and $\sum_{i\le k}\widehat p_i=1$, so
\[
\KL(\widehat P\Vert P) = \log \frac{\sum_{j\le n} e^{s_j}}{\sum_{j\le k} e^{s_j}}.
\]
Exponentiating gives $e^{-\KL(\widehat P\Vert P)}=\frac{\sum_{j\le k} e^{s_j}}{\sum_{j\le n} e^{s_j}}$.
Using Lemma~\ref{lem:tv_tailmass}, $\TV(P,\widehat P)=1-e^{-\KL(\widehat P\Vert P)}$.
\end{proof}

\begin{remark}[Pinsker vs.\ identity]
Pinsker’s inequality gives $\TV\le\sqrt{\tfrac12\KL(\widehat P\Vert P)}$, but here we have the stronger exact relation $\TV=1-e^{-\KL(\widehat P\Vert P)}$, which is tight for Top-$k$ truncation of softmax.
\end{remark}

\label{app:sec4}

\subsection{Output Error (Proposition~\ref{prop:output})}

\begin{proposition} [Output error]
\label{prop:output}
Let $V\in\R^{n\times d_v}$ and assume $\|V_j\|_2\le C$ for all rows $V_j$.
Then
\[
\|\mathrm{Attn}(q,K,V)-\mathrm{Attn}_k(q,K,V)\|_2
\le 2C\,\TV(P,\widehat P).
\]
\end{proposition}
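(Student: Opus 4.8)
The plan is to reduce the vector output error to the scalar total-variation distance by a single application of the triangle inequality. First I would combine the definitions $\Attn(q,K,V)=\sum_{i=1}^n p_i v_i$ and $\Attnk(q,K,V)=\sum_{i=1}^k \widehat p_i v_i$ with the fact that $\widehat p_i=0$ for $i>k$ to write the difference as one sum over all $n$ indices,
\[
\Attn(q,K,V)-\Attnk(q,K,V)=\sum_{i=1}^n (p_i-\widehat p_i)\,v_i .
\]

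Next I would take the Euclidean norm, push it inside the sum by the triangle inequality, and bound each value vector by the uniform hypothesis $\|v_i\|_2\le C$:
\[
\Bigl\|\sum_{i=1}^n (p_i-\widehat p_i)\,v_i\Bigr\|_2
\;\le\; \sum_{i=1}^n |p_i-\widehat p_i|\,\|v_i\|_2
\;\le\; C\sum_{i=1}^n |p_i-\widehat p_i| .
\]
Finally I would invoke the definition $\TV(P,\widehat P)=\tfrac12\sum_i|p_i-\widehat p_i|$ to identify the remaining $\ell_1$ sum as $2\,\TV(P,\widehat P)$, which yields the claimed bound $2C\,\TV(P,\widehat P)$.

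The argument has no genuine obstacle; it is a two-line estimate whose only delicate point is correctly tracking the factor of $2$, which enters precisely because the total variation is half the $\ell_1$ distance between $P$ and $\widehat P$. I would also flag that this bound is deliberately crude: because $\sum_i(p_i-\widehat p_i)=0$, the difference vector is unchanged if each $v_i$ is replaced by $v_i-c$ for any fixed $c\in\R^{d_v}$, so one is free to recenter the values before bounding. Exploiting this zero-sum structure is exactly what sharpens the constant $2C$ into the cross-set diameter of Proposition~\ref{prop:head-tail-diam} and into the value variance of Proposition~\ref{prop:chi2-var}; for the present statement, however, the plain triangle inequality already suffices and no recentering is needed.
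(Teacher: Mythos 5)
Your proof is correct and is essentially identical to the paper's own argument: write the error as $\sum_i (p_i-\widehat p_i)v_i$, apply the triangle inequality with $\|v_i\|_2\le C$, and identify $\sum_i|p_i-\widehat p_i|=2\,\TV(P,\widehat P)$. The extra remark on the zero-sum recentering freedom is a nice observation (and indeed is what powers the sharper diameter and variance bounds), but the core proof matches the paper's.
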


\begin{proof}
Let $v_i\in\R^{d_v}$ with $\|v_i\|_2\le C$ for all $i$, and write
\[
\Attn(q,K,V)-\Attnk(q,K,V)=\sum_i (p_i-\hat p_i)\,v_i.
\]
Then
\[
\Bigl\|\sum_i (p_i-\hat p_i)\,v_i\Bigr\|_2
\le \sum_i |p_i-\hat p_i|\,\|v_i\|_2
\le C\sum_i |p_i-\hat p_i|
= 2C\,\TV(P,\hat P).\;\qedhere
\]
\end{proof}

\paragraph{Variance–TV Bounds (setup).}
Let $P=(p_i)_{i=1}^n$ be the softmax distribution over $[n]$,
let $\hat P=(\hat p_i)_{i=1}^n$ be the Top-$k$ truncated distribution
that renormalizes over the $k$ largest–score indices (the “head”),
and let $V=[v_1^\top,\ldots,v_n^\top]^\top\in\R^{n\times d_v}$ be the value matrix.
Define the tail mass
\[
\tau := \TV(P,\hat P)=\sum_{i>k} p_i \in [0,1),
\]
so that $\sum_{i\le k} p_i = 1-\tau$.
The attention outputs are
$\Attn(q,K,V)=\sum_i p_i v_i$ and
$\Attnk(q,K,V)=\sum_i \hat p_i v_i$.
We write $\|\cdot\|_2$ for the Euclidean norm.

\subsection{Exact Head–Tail Identity}

\begin{theorem}[Exact head–tail identity]
Let $P=(p_i)_{i=1}^n$ be the attention weights and
$\hat P=(\hat p_i)_{i=1}^n$ their Top-$k$ truncation.
Let $\tau := \sum_{i>k} p_i$ denote the tail mass and define the conditional means
\[
\mu_{\mathrm{head}} := \frac{\sum_{i\le k} p_i v_i}{1-\tau},
\qquad
\mu_{\mathrm{tail}} := \frac{\sum_{i>k} p_i v_i}{\tau}
\quad (\text{with the convention }\mu_{\mathrm{tail}}=0\text{ if }\tau=0).
\]
Then the exact identity
\[
\Attn(q,K,V) - \Attnk(q,K,V) \;=\; \tau\,(\mu_{\mathrm{tail}} - \mu_{\mathrm{head}})
\]
holds, so that
\[
\bigl\| \Attn(q,K,V) - \Attnk(q,K,V) \bigr\|_2 \;=\; \tau\,\|\mu_{\mathrm{tail}} - \mu_{\mathrm{head}}\|_2 .
\]
\end{theorem}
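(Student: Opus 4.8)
The plan is to decompose both attention outputs along the same head/tail partition and recognize the truncated output as exactly the head conditional mean $\mu_{\mathrm{head}}$. First I would record the key renormalization fact: since $\widehat p_i = e^{s_i}/\sum_{j\le k}e^{s_j}$ for $i\le k$, and the tail--mass identity (Lemma~\ref{lem:tv_tailmass}) gives $\sum_{j\le k}e^{s_j}\big/\sum_{j\le n}e^{s_j} = 1-\tau$, the truncated weights satisfy $\widehat p_i = p_i/(1-\tau)$ for every $i\le k$. Consequently
\[
\Attnk(q,K,V) = \sum_{i\le k}\widehat p_i\, v_i = \frac{1}{1-\tau}\sum_{i\le k} p_i v_i = \mu_{\mathrm{head}},
\]
so the truncated output is precisely $\mu_{\mathrm{head}}$.

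Next I would expand the full output along the identical split, using the definitions $\sum_{i\le k} p_i v_i = (1-\tau)\mu_{\mathrm{head}}$ and $\sum_{i>k} p_i v_i = \tau\,\mu_{\mathrm{tail}}$:
\[
\Attn(q,K,V) = \sum_{i\le k} p_i v_i + \sum_{i>k} p_i v_i = (1-\tau)\,\mu_{\mathrm{head}} + \tau\,\mu_{\mathrm{tail}}.
\]
Subtracting the two displays cancels $(1-\tau)\mu_{\mathrm{head}}$ against the $-\mu_{\mathrm{head}}$ contributed by $\Attnk$, leaving
\[
\Attn(q,K,V) - \Attnk(q,K,V) = (1-\tau)\,\mu_{\mathrm{head}} + \tau\,\mu_{\mathrm{tail}} - \mu_{\mathrm{head}} = \tau\,(\mu_{\mathrm{tail}} - \mu_{\mathrm{head}}),
\]
which is the claimed vector identity. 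Taking Euclidean norms and using $\tau\ge 0$ then yields the scalar form $\|\Attn - \Attnk\|_2 = \tau\,\|\mu_{\mathrm{tail}} - \mu_{\mathrm{head}}\|_2$.

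The only subtlety, rather than a genuine obstacle, is the degenerate case $\tau=0$: then the tail carries no mass, $\widehat p_i = p_i$ for all $i\le k$, both outputs coincide, and the stated convention $\mu_{\mathrm{tail}}=0$ makes the identity hold trivially (both sides vanish). Apart from verifying this boundary case and the single renormalization step, the argument is a direct linear rearrangement requiring no inequalities or limiting arguments, which is why the result is an exact equality rather than a bound.
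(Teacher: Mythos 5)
Your proof is correct and follows essentially the same route as the paper's: both hinge on the renormalization fact $\widehat p_i = p_i/(1-\tau)$ for $i\le k$ and the head--tail split of the sums. The only difference is presentational --- you first identify $\Attnk(q,K,V)=\mu_{\mathrm{head}}$ and write $\Attn(q,K,V)$ as the mixture $(1-\tau)\mu_{\mathrm{head}}+\tau\mu_{\mathrm{tail}}$ before subtracting, whereas the paper manipulates the difference $\sum_i(p_i-\widehat p_i)v_i$ directly; the algebra is identical.
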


\begin{proof}
1.\ By definition of Top-$k$ renormalization,
\[
\hat p_i =
\begin{cases}
\dfrac{p_i}{\sum_{j\le k} p_j} = \dfrac{p_i}{1-\tau}, & i \le k,\\[8pt]
0, & i > k .
\end{cases}
\]

2.\ Compute the error:
\[
\sum_{i=1}^n (p_i - \hat p_i)\, v_i
= \sum_{i\le k}\!\left(p_i - \frac{p_i}{1-\tau}\right) v_i + \sum_{i>k} p_i v_i
= -\frac{\tau}{1-\tau}\sum_{i\le k} p_i v_i + \sum_{i>k} p_i v_i .
\]

3.\ Factor out $\tau$ and recognize the conditional means:
\[
-\frac{\tau}{1-\tau}\sum_{i\le k} p_i v_i + \sum_{i>k} p_i v_i
= \tau\!\left(\frac{\sum_{i>k} p_i v_i}{\tau} - \frac{\sum_{i\le k} p_i v_i}{1-\tau}\right)
= \tau\,(\mu_{\mathrm{tail}} - \mu_{\mathrm{head}}).
\qedhere
\]
\end{proof}

\subsection{Head--Tail Diameter Bound}

\begin{proposition}[Diameter$\times$TV bound]
Define the cross-set diameter
\[
\mathrm{diam}_{H,T} := \max_{i>k,\, j\le k} \|v_i - v_j\|
\quad(\le \max_{i,j}\|v_i-v_j\| \le 2\max_\ell \|v_\ell\|).
\]
Then
\[
\big\|\Attn(q,K,V)-\Attnk(q,K,V)\big\| \;\le\; \tau\;\mathrm{diam}_{H,T}.
\]
\end{proposition}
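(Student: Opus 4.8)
The plan is to reduce the claimed inequality to a purely geometric statement about convex hulls, leaning on the exact head--tail identity already established. That identity gives
\[
\bigl\|\Attn(q,K,V)-\Attnk(q,K,V)\bigr\|_2 = \tau\,\|\mu_{\mathrm{tail}}-\mu_{\mathrm{head}}\|_2,
\]
so it suffices to prove the dimension-free bound $\|\mu_{\mathrm{tail}}-\mu_{\mathrm{head}}\|_2 \le \mathrm{diam}_{H,T}$ and let the prefactor $\tau$ carry over verbatim. The degenerate case $\tau=0$ is trivial (the two outputs coincide), so I would assume $\tau\in(0,1)$.

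First I would record that each conditional mean is a convex combination of the value vectors on its own side of the cut: $\mu_{\mathrm{head}} = \sum_{j\le k}\alpha_j v_j$ with $\alpha_j = p_j/(1-\tau)\ge 0$ and $\sum_{j\le k}\alpha_j = 1$, and $\mu_{\mathrm{tail}} = \sum_{i>k}\beta_i v_i$ with $\beta_i = p_i/\tau\ge 0$ and $\sum_{i>k}\beta_i = 1$. Thus $\mu_{\mathrm{head}}\in\mathrm{conv}\{v_j : j\le k\}$ and $\mu_{\mathrm{tail}}\in\mathrm{conv}\{v_i : i>k\}$, which already signals that only cross distances between the head and tail sets should matter.

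The key step---and the only nonroutine observation---is to rewrite the difference of the two means as a single convex combination of the cross-pair differences $v_i-v_j$ (with $i>k$, $j\le k$). Since $\sum_{i>k}\beta_i = 1$ and $\sum_{j\le k}\alpha_j = 1$, I can insert these unit factors to obtain
\[
\mu_{\mathrm{tail}} - \mu_{\mathrm{head}}
= \sum_{i>k}\sum_{j\le k}\alpha_j\beta_i\,(v_i - v_j),
\]
where the weights $\alpha_j\beta_i\ge 0$ sum to one. This is the crux: although the two means live in different hulls, their difference is a genuine average of cross distances and therefore cannot exceed the largest such distance.

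Finally, applying the triangle inequality (equivalently, convexity of the Euclidean norm) to this convex combination yields
\[
\|\mu_{\mathrm{tail}}-\mu_{\mathrm{head}}\|_2
\le \sum_{i>k}\sum_{j\le k}\alpha_j\beta_i\,\|v_i - v_j\|_2
\le \max_{i>k,\,j\le k}\|v_i - v_j\|_2
= \mathrm{diam}_{H,T},
\]
and multiplying through by $\tau$ completes the argument. The parenthetical comparison $\mathrm{diam}_{H,T}\le 2\max_\ell\|v_\ell\|_2$, recovering the classical $2C\tau$ bound, follows by one more triangle inequality on each pair. I expect no genuine obstacle beyond the bookkeeping of the double-sum reindexing; all of the conceptual content sits in the convex-combination identity of the middle step.
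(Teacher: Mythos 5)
Your proof is correct and follows essentially the same route as the paper's: invoking the exact head--tail identity, writing both conditional means as convex combinations, inserting the unit-sum factors to express $\mu_{\mathrm{tail}}-\mu_{\mathrm{head}}$ as a double-sum convex combination of the cross differences $v_i-v_j$, and finishing with the triangle inequality. The only differences are cosmetic (label swap of $\alpha,\beta$ and the explicit handling of $\tau=0$, which the paper leaves implicit).
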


\begin{proof}
By the exact identity above,
$\big\|\Attn(q,K,V)-\Attnk(q,K,V)\big\| = \tau \cdot \|\mu_{\mathrm{tail}}-\mu_{\mathrm{head}}\|$,
so it suffices to bound $\|\mu_{\mathrm{tail}}-\mu_{\mathrm{head}}\|$.

Write the conditional means as convex combinations:
\[
\mu_{\mathrm{tail}} = \sum_{i>k} \alpha_i v_i,\quad \alpha_i=\frac{p_i}{\tau},\ \alpha_i\ge 0,\ \sum_{i>k}\alpha_i=1;
\qquad
\mu_{\mathrm{head}} = \sum_{j\le k} \beta_j v_j,\quad \beta_j=\frac{p_j}{1-\tau},\ \sum_{j\le k}\beta_j=1.
\]
Using $\sum_{j\le k}\beta_j=1=\sum_{i>k}\alpha_i$,
\begin{align*}
\mu_{\text{tail}}-\mu_{\text{head}}
&= \sum_{i>k}\alpha_i v_i - \sum_{j\le k}\beta_j v_j \\
&= \sum_{i>k}\alpha_i v_i\!\left(\sum_{j\le k}\beta_j\right)
   - \sum_{j\le k}\beta_j v_j\!\left(\sum_{i>k}\alpha_i\right) \\
&= \sum_{i>k}\sum_{j\le k}\alpha_i\beta_j v_i
   - \sum_{i>k}\sum_{j\le k}\alpha_i\beta_j v_j \\
&= \sum_{i>k}\sum_{j\le k}\alpha_i\beta_j\,(v_i - v_j).
\end{align*}
Apply the triangle inequality and bound each term by the maximal cross distance:
\[
\|\mu_{\mathrm{tail}}-\mu_{\mathrm{head}}\|
\le \sum_{i>k}\sum_{j\le k} \alpha_i\beta_j \|v_i-v_j\|
\le \Big(\max_{i>k,\,j\le k} \|v_i-v_j\|\Big)\sum_{i>k}\sum_{j\le k}\alpha_i\beta_j
= \mathrm{diam}_{H,T}.
\]
Multiplying by $\tau$ proves the claim.
\end{proof}

\paragraph{Remark.}
If $\|v_i\|\le C$ for all $i$, then $\mathrm{diam}_{H,T}\le 2C$, hence
$\|\Attn(q,K,V)-\Attnk(q,K,V)\|\le 2C\,\tau$ is \emph{immediate} from the proposition.
Thus $\mathrm{diam}_{H,T}\,\tau$ is a \emph{strict improvement} over the classical $2C\,\tau$ bound unless some head and tail vectors actually attain opposite extremes.

\subsection{Variance--Divergence Bounds}
\label{app:var-div}

Let $\mu_P:=\sum_i p_i v_i$ be the full mean and
\[
\Var_P(V):=\sum_{i=1}^n p_i \|v_i-\mu_P\|_2^2
\]
the (vector) variance under $P$.

\begin{lemma}[Law of total variance for a head/tail split]
\label{lem:ltv}
With $\tau=\sum_{i>k} p_i$, one has
\[
\Var_P(V)
= (1-\tau)\,\Var_{\mathrm{head}} + \tau\,\Var_{\mathrm{tail}}
+ \tau(1-\tau)\,\|\mu_{\mathrm{tail}}-\mu_{\mathrm{head}}\|_2^2,
\]
where
\[
\Var_{\mathrm{head}}
:=\sum_{j\le k}\frac{p_j}{1-\tau}\,\|v_j-\mu_{\mathrm{head}}\|_2^2,
\qquad
\Var_{\mathrm{tail}}
:=\sum_{i>k}\frac{p_i}{\tau}\,\|v_i-\mu_{\mathrm{tail}}\|_2^2.
\]
\end{lemma}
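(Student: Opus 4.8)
The plan is to recognize this as the classical law of total variance applied to the two-group partition of $[n]$ into the head $H=\{1,\dots,k\}$ and the tail $T=\{k+1,\dots,n\}$, where $P$ plays the role of the underlying probability measure and head/tail membership is a two-valued coarsening with $\PP(H)=1-\tau$ and $\PP(T)=\tau$. Under this reading $\mu_{\mathrm{head}}$ and $\mu_{\mathrm{tail}}$ are the conditional means and $\Var_{\mathrm{head}},\Var_{\mathrm{tail}}$ the conditional variances, so the target identity is exactly the decomposition of the total variance into a within-group average plus a between-group spread. The whole argument reduces to one add-and-subtract of the conditional means inside the sum of squares, followed by the observation that the global mean $\mu_P$ is itself the convex combination of the two conditional means.

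First I would split the defining sum along the partition and, within the head block, write $v_j-\mu_P=(v_j-\mu_{\mathrm{head}})+(\mu_{\mathrm{head}}-\mu_P)$ before expanding the square. The purely ``within'' term reproduces $(1-\tau)\Var_{\mathrm{head}}$ by definition; the cross term vanishes because $\sum_{j\le k}p_j(v_j-\mu_{\mathrm{head}})=(1-\tau)\mu_{\mathrm{head}}-(1-\tau)\mu_{\mathrm{head}}=0$, which is precisely the defining property of $\mu_{\mathrm{head}}$; and the last term equals $(1-\tau)\|\mu_{\mathrm{head}}-\mu_P\|_2^2$ since the head weights sum to $1-\tau$. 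Treating the tail block symmetrically yields $\tau\Var_{\mathrm{tail}}+\tau\|\mu_{\mathrm{tail}}-\mu_P\|_2^2$, so at this stage I have the raw within-plus-between form
\[
\Var_P(V)=(1-\tau)\Var_{\mathrm{head}}+\tau\Var_{\mathrm{tail}}+(1-\tau)\|\mu_{\mathrm{head}}-\mu_P\|_2^2+\tau\|\mu_{\mathrm{tail}}-\mu_P\|_2^2.
\]

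The only remaining task is to collapse the two between terms into $\tau(1-\tau)\|\mu_{\mathrm{tail}}-\mu_{\mathrm{head}}\|_2^2$. The key input is the barycenter identity $\mu_P=(1-\tau)\mu_{\mathrm{head}}+\tau\mu_{\mathrm{tail}}$, obtained by grouping the terms of $\mu_P=\sum_i p_i v_i$ into head and tail. Substituting it gives $\mu_{\mathrm{head}}-\mu_P=\tau(\mu_{\mathrm{head}}-\mu_{\mathrm{tail}})$ and $\mu_{\mathrm{tail}}-\mu_P=(1-\tau)(\mu_{\mathrm{tail}}-\mu_{\mathrm{head}})$, so the between part becomes $\bigl[(1-\tau)\tau^2+\tau(1-\tau)^2\bigr]\|\mu_{\mathrm{tail}}-\mu_{\mathrm{head}}\|_2^2=\tau(1-\tau)\|\mu_{\mathrm{tail}}-\mu_{\mathrm{head}}\|_2^2$, which is the claimed expression.

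I do not anticipate any genuine obstacle: every step is a finite rearrangement, and the vanishing of the cross terms is exactly the first-moment condition characterizing the conditional means, so there is no inequality to control. The only point requiring mild care is the degenerate case $\tau=0$ (symmetrically $\tau=1$), where $\mu_{\mathrm{tail}}$ is undefined as written; there the tail block is empty, every between term carries the prefactor $\tau(1-\tau)=0$, and the identity holds trivially under the stated convention $\mu_{\mathrm{tail}}=0$, so it suffices to note this case separately in one line.
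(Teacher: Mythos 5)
Your proposal is correct and follows essentially the same route as the paper's proof: split the variance sum along the head/tail partition, add-and-subtract the conditional means so the cross terms vanish by the first-moment property, and then collapse the two between-group terms via the barycenter identity $\mu_P=(1-\tau)\mu_{\mathrm{head}}+\tau\mu_{\mathrm{tail}}$, using $(1-\tau)\tau^2+\tau(1-\tau)^2=\tau(1-\tau)$. Your extra remark on the degenerate case $\tau=0$ is a small refinement the paper omits, but it does not change the argument.
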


\begin{proof}[Step-by-step proof]
\[
\tau:=\sum_{i>k}p_i,\quad
\mu_{\text{head}}:=\frac{\sum_{j\le k}p_j v_j}{1-\tau},\quad
\mu_{\text{tail}}:=\frac{\sum_{i>k}p_i v_i}{\tau},\quad
\mu_P:=\sum_i p_i v_i=(1-\tau)\mu_{\text{head}}+\tau\mu_{\text{tail}}.
\]

\begin{align*}
\operatorname{Var}_P(V)
&= \sum_i p_i \|v_i-\mu_P\|^2 \\[2pt]
&= \sum_{j\le k} p_j \| (v_j-\mu_{\text{head}})+(\mu_{\text{head}}-\mu_P)\|^2
   + \sum_{i>k} p_i \| (v_i-\mu_{\text{tail}})+(\mu_{\text{tail}}-\mu_P)\|^2 \\[2pt]
&= \sum_{j\le k} p_j \|v_j-\mu_{\text{head}}\|^2
   + 2\!\sum_{j\le k} p_j \langle v_j-\mu_{\text{head}},\,\mu_{\text{head}}-\mu_P\rangle
   + (1-\tau)\|\mu_{\text{head}}-\mu_P\|^2 \\
&\quad + \sum_{i>k} p_i \|v_i-\mu_{\text{tail}}\|^2
   + 2\!\sum_{i>k} p_i \langle v_i-\mu_{\text{tail}},\,\mu_{\text{tail}}-\mu_P\rangle
   + \tau\|\mu_{\text{tail}}-\mu_P\|^2 \\[2pt]
&= \sum_{j\le k} p_j \|v_j-\mu_{\text{head}}\|^2
   + \sum_{i>k} p_i \|v_i-\mu_{\text{tail}}\|^2
   + (1-\tau)\|\mu_{\text{head}}-\mu_P\|^2
   + \tau\|\mu_{\text{tail}}-\mu_P\|^2
\end{align*}
since
\(
\sum_{j\le k} p_j (v_j-\mu_{\text{head}})=0
\)
and
\(
\sum_{i>k} p_i (v_i-\mu_{\text{tail}})=0.
\)

\[
\mu_{\text{head}}-\mu_P=\tau(\mu_{\text{head}}-\mu_{\text{tail}}),\qquad
\mu_{\text{tail}}-\mu_P=(1-\tau)(\mu_{\text{tail}}-\mu_{\text{head}}),
\]
hence
\[
(1-\tau)\|\mu_{\text{head}}-\mu_P\|^2+\tau\|\mu_{\text{tail}}-\mu_P\|^2
= \tau(1-\tau)\|\mu_{\text{tail}}-\mu_{\text{head}}\|^2.
\]

\[
\boxed{\;
\operatorname{Var}_P(V)
= (1-\tau)\underbrace{\sum_{j\le k}\tfrac{p_j}{1-\tau}\|v_j-\mu_{\text{head}}\|^2}_{\mathrm{Var}_{\text{head}}}
+ \tau\underbrace{\sum_{i>k}\tfrac{p_i}{\tau}\|v_i-\mu_{\text{tail}}\|^2}_{\mathrm{Var}_{\text{tail}}}
+ \tau(1-\tau)\|\mu_{\text{tail}}-\mu_{\text{head}}\|^2 \;}
\]

\end{proof}
\begin{proposition}[Centered $\chi^2$--variance bound]
\label{prop:chi2-var-appendix}
For Top-$k$ truncation,
\[
\bigl\|\Attn(q,K,V)-\Attn_k(q,K,V)\bigr\|_2
\;\le\; \sqrt{D_{\chi^2}(\hat P\Vert P)}\;\sqrt{\Var_P(V)},
\]
and for Top-$k$ one has $D_{\chi^2}(\hat P\Vert P)=\tau/(1-\tau)$.
\end{proposition}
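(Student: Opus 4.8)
The plan is to combine a short direct computation of the $\chi^2$ divergence with a centered Cauchy--Schwarz argument. I would begin by verifying the stated value $D_{\chi^2}(\hat P\Vert P)=\tau/(1-\tau)$. Using $D_{\chi^2}(\hat P\Vert P)=\sum_i \hat p_i^2/p_i-1$ together with $\hat p_i=p_i/(1-\tau)$ for $i\le k$ and $\hat p_i=0$ for $i>k$, each surviving summand is $p_i/(1-\tau)^2$, so $\sum_{i\le k}\hat p_i^2/p_i=\tfrac{1}{(1-\tau)^2}\sum_{i\le k}p_i=\tfrac{1}{1-\tau}$, whence $D_{\chi^2}(\hat P\Vert P)=\tfrac{1}{1-\tau}-1=\tfrac{\tau}{1-\tau}$. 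All divisions by $p_i$ are legitimate since softmax weights are strictly positive.

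For the inequality I would write the output error as $\Attn(q,K,V)-\Attn_k(q,K,V)=\sum_i(p_i-\hat p_i)v_i$ and exploit the fact that $\sum_i(p_i-\hat p_i)=0$. This lets me subtract the full mean $\mu_P:=\sum_i p_i v_i$ from every value vector without changing the sum, giving the centered form $\sum_i(p_i-\hat p_i)(v_i-\mu_P)$. This centering step is exactly what converts an uncentered second moment into the variance $\Var_P(V)$, and it is the one point of the argument that requires care; the remainder is routine.

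To bound the Euclidean norm I would project onto its own direction: letting $e$ be the unit vector parallel to the error (the zero-error case being trivial), one has $\|\Attn(q,K,V)-\Attn_k(q,K,V)\|_2=\sum_i(p_i-\hat p_i)\langle v_i-\mu_P,e\rangle$. Factoring $p_i-\hat p_i=\sqrt{p_i}\cdot\tfrac{p_i-\hat p_i}{\sqrt{p_i}}$ and applying the scalar Cauchy--Schwarz inequality yields
\[
\|\Attn(q,K,V)-\Attn_k(q,K,V)\|_2
\le \sqrt{\sum_i \frac{(p_i-\hat p_i)^2}{p_i}}\;\sqrt{\sum_i p_i\,\langle v_i-\mu_P,e\rangle^2}.
\]
The first factor is precisely $\sqrt{D_{\chi^2}(\hat P\Vert P)}$. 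For the second, since $e$ is a unit vector we have $\langle v_i-\mu_P,e\rangle^2\le\|v_i-\mu_P\|_2^2$, so $\sum_i p_i\langle v_i-\mu_P,e\rangle^2\le\sum_i p_i\|v_i-\mu_P\|_2^2=\Var_P(V)$. Multiplying the two factors gives the claim. Reducing to a single direction via $e$ avoids any vector-valued Cauchy--Schwarz subtleties, so no genuine obstacle remains beyond the centering identity.
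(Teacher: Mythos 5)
Your proof is correct, but it follows a genuinely different route from the paper's. The paper proves the inequality by combining two structural results specific to Top-$k$ truncation: the exact head--tail identity $\Attn(q,K,V)-\Attn_k(q,K,V)=\tau(\mu_{\mathrm{tail}}-\mu_{\mathrm{head}})$ and a law-of-total-variance lemma, which gives $\Var_P(V)\ge \tau(1-\tau)\|\mu_{\mathrm{tail}}-\mu_{\mathrm{head}}\|_2^2$ and hence the bound after dividing and multiplying by $\tau$; the value $D_{\chi^2}(\hat P\Vert P)=\tau/(1-\tau)$ is then computed separately (by summing $(\hat p_i-p_i)^2/p_i$ over head and tail) purely to identify the constant. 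You instead run the classical centered Cauchy--Schwarz argument: exploit $\sum_i(p_i-\hat p_i)=0$ to center the values at $\mu_P$, project onto the unit vector along the error, split $p_i-\hat p_i=\sqrt{p_i}\cdot(p_i-\hat p_i)/\sqrt{p_i}$, and apply scalar Cauchy--Schwarz, with the Top-$k$ structure entering only in the divergence computation (done slightly more slickly via $\sum_i \hat p_i^2/p_i - 1$). Your argument is self-contained --- it needs neither the head--tail identity nor the variance decomposition lemma --- and it actually establishes the more general fact that the inequality holds for \emph{any} pair of distributions $\hat P, P$ with $P$ of full support, not just for Top-$k$ truncation. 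What the paper's route buys in exchange is transparency about where slack enters: the inequality is an equality exactly when the within-head and within-tail variances $(1-\tau)\Var_{\mathrm{head}}+\tau\Var_{\mathrm{tail}}$ vanish, a diagnosis that is invisible in the Cauchy--Schwarz proof, and the paper's proof reuses machinery (the exact identity and Lemma~\ref{lem:ltv}) already established for the neighboring results in Section~5.
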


\begin{proof}
By the exact head--tail identity (Theorem~\ref{thm:head-tail-identity}),
\[
\Attn(q,K,V) - \Attn_k(q,K,V)
= \tau\,(\mu_{\mathrm{tail}}-\mu_{\mathrm{head}}),
\]
so
\[
\bigl\|\Attn(q,K,V)-\Attn_k(q,K,V)\bigr\|_2
= \tau\,\|\mu_{\mathrm{tail}}-\mu_{\mathrm{head}}\|_2.
\]

From Lemma~\ref{lem:ltv} we have
\[
\Var_P(V)
= (1-\tau)\Var_{\mathrm{head}} + \tau\Var_{\mathrm{tail}}
  + \tau(1-\tau)\,\|\mu_{\mathrm{tail}}-\mu_{\mathrm{head}}\|_2^2
\;\ge\; \tau(1-\tau)\,\|\mu_{\mathrm{tail}}-\mu_{\mathrm{head}}\|_2^2.
\]
Hence
\[
\|\mu_{\mathrm{tail}}-\mu_{\mathrm{head}}\|_2^2
\;\le\; \frac{\Var_P(V)}{\tau(1-\tau)}.
\]
Multiplying both sides by $\tau^2$ and taking square roots gives
\[
\tau\,\|\mu_{\mathrm{tail}}-\mu_{\mathrm{head}}\|_2
\;\le\; \sqrt{\frac{\tau}{1-\tau}}\,\sqrt{\Var_P(V)}.
\]

It remains to compute $D_{\chi^2}(\hat P\Vert P)$ for Top-$k$.
By definition,
\[
D_{\chi^2}(\hat P\Vert P)
= \sum_{i=1}^n \frac{(\hat p_i-p_i)^2}{p_i}.
\]
For $i\le k$, $\hat p_i = p_i/(1-\tau)$, so
\[
\frac{(\hat p_i-p_i)^2}{p_i}
= p_i\Bigl(\frac{1}{1-\tau}-1\Bigr)^2
= p_i\Bigl(\frac{\tau}{1-\tau}\Bigr)^2.
\]
Summing over $i\le k$ yields
\[
\sum_{i\le k}\frac{(\hat p_i-p_i)^2}{p_i}
= \Bigl(\frac{\tau}{1-\tau}\Bigr)^2 \sum_{i\le k} p_i
= \Bigl(\frac{\tau}{1-\tau}\Bigr)^2 (1-\tau)
= \frac{\tau^2}{1-\tau}.
\]
For $i>k$, $\hat p_i=0$, so
\[
\sum_{i>k}\frac{(\hat p_i-p_i)^2}{p_i}
= \sum_{i>k} p_i = \tau.
\]
Therefore
\[
D_{\chi^2}(\hat P\Vert P)
= \frac{\tau^2}{1-\tau} + \tau
= \frac{\tau}{1-\tau}.
\]

Combining the two displays gives
\[
\bigl\|\Attn(q,K,V)-\Attn_k(q,K,V)\bigr\|_2
= \tau\,\|\mu_{\mathrm{tail}}-\mu_{\mathrm{head}}\|_2
\;\le\; \sqrt{\frac{\tau}{1-\tau}}\,\sqrt{\Var_P(V)}
= \sqrt{D_{\chi^2}(\hat P\Vert P)}\,\sqrt{\Var_P(V)},
\]
as claimed.
\end{proof}

\begin{corollary}[Centered KL--variance bound]
\label{cor:kl-var-appendix}
Using the Top-$k$ identity $\KL(\hat P\Vert P)=-\log(1-\tau)$, one has
\[
\bigl\|\Attn(q,K,V)-\Attn_k(q,K,V)\bigr\|_2
\;\le\; \sqrt{e^{\KL(\hat P\Vert P)}-1}\;\sqrt{\Var_P(V)}.
\]
\end{corollary}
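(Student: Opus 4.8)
The plan is to obtain this corollary as an immediate consequence of the Centered $\chi^2$--variance bound (Proposition~\ref{prop:chi2-var-appendix}) combined with the exact Top-$k$ KL identity from Theorem~\ref{thm:tv-kl}. All of the geometric and variance-theoretic work has already been done in the preceding proposition, so the remaining content is a single algebraic substitution that rewrites the $\chi^2$ divergence in terms of the KL divergence for this specific truncation.

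First I would invoke Proposition~\ref{prop:chi2-var-appendix}, which supplies both the inequality
\[
\bigl\|\Attn(q,K,V)-\Attn_k(q,K,V)\bigr\|_2 \;\le\; \sqrt{D_{\chi^2}(\hat P\Vert P)}\,\sqrt{\Var_P(V)}
\]
and the closed form $D_{\chi^2}(\hat P\Vert P)=\tau/(1-\tau)$. Second, I would use the Top-$k$ identity $\KL(\hat P\Vert P)=-\log(1-\tau)$, which rearranges to $1-\tau=e^{-\KL(\hat P\Vert P)}$. Substituting this into the $\chi^2$ expression gives the chain
\[
D_{\chi^2}(\hat P\Vert P)=\frac{\tau}{1-\tau}=\frac{1}{1-\tau}-1=e^{\KL(\hat P\Vert P)}-1 .
\]
Finally, plugging $D_{\chi^2}(\hat P\Vert P)=e^{\KL(\hat P\Vert P)}-1$ back into the bound from the proposition yields exactly the claimed inequality.

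There is no genuine obstacle here; the proof is essentially a one-line reparametrization. The only point requiring any care is verifying the algebraic chain $\tau/(1-\tau)=1/(1-\tau)-1=e^{\KL}-1$, which is elementary but depends crucially on having the \emph{exact} Top-$k$ identities $D_{\chi^2}(\hat P\Vert P)=\tau/(1-\tau)$ and $\KL(\hat P\Vert P)=-\log(1-\tau)$ rather than mere inequalities. The conceptual takeaway I would emphasize is that, because both divergences admit exact closed forms in the single scalar $\tau$ for Top-$k$ truncation, the $\chi^2$-based and KL-based output bounds are not related by a loose estimate but are literally the same statement written in two different divergences.
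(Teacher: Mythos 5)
Your proof is correct, and it takes a subtly but genuinely different route from the paper's. The paper proves this corollary by invoking the \emph{general} inequality $D_{\chi^2}(\hat P\Vert P)\le e^{\KL(\hat P\Vert P)}-1$, valid for any pair of distributions, and then chaining it with Proposition~\ref{prop:chi2-var-appendix}. You instead exploit the two \emph{exact} Top-$k$ closed forms, $D_{\chi^2}(\hat P\Vert P)=\tau/(1-\tau)$ and $\KL(\hat P\Vert P)=-\log(1-\tau)$, and verify by direct algebra that $\tau/(1-\tau)=e^{\KL(\hat P\Vert P)}-1$. The net effect is the same inequality, but your argument establishes more: for Top-$k$ truncation the generic bound $D_{\chi^2}\le e^{\KL}-1$ holds with \emph{equality}, so the KL-variance corollary is not a weakening of the $\chi^2$-variance proposition but an exact reparametrization of it --- a point the paper's proof leaves implicit. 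What the paper's route buys in exchange is generality and brevity: it shows the corollary's form would survive (possibly with slack) for approximating distributions other than the exact Top-$k$ renormalization, since it never uses the specific structure of $\hat P$ beyond what the proposition already encodes. Both proofs are complete given the previously established results; your only dependency beyond the paper's is the identity $\KL(\hat P\Vert P)=-\log(1-\tau)$, which is legitimately available from Theorem~\ref{thm:tv-kl} and is even quoted in the corollary's own statement.
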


\begin{proof}
For any distributions $P,\hat P$,
$D_{\chi^2}(\hat P\Vert P) \le e^{\KL(\hat P\Vert P)}-1$.
Combining this inequality with Proposition~\ref{prop:chi2-var-appendix}
yields the desired bound.
\end{proof}

\begin{theorem}[Best available certificate]
\label{thm:best-certificate-appendix}
Let $C\ge \max_i \|v_i\|_2$. Then, for Top-$k$ truncation,
\[
\bigl\|\Attn(q,K,V)-\Attn_k(q,K,V)\bigr\|_2 \;\le\;
\min\Bigl\{
\tau\,\mathrm{diam}_{H,T},\;
\sqrt{D_{\chi^2}(\hat P\Vert P)}\,\sqrt{\Var_P(V)},\;
2C\,\tau
\Bigr\}.
\]
\end{theorem}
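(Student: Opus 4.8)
The plan is to observe that the left-hand side is a single fixed quantity---the Euclidean output error of Top-$k$ truncation---while the three terms inside the minimum are three upper bounds for that \emph{same} quantity, each already established earlier under exactly the hypotheses in force here. Since a real number that lies below each of three values lies below their minimum, the statement follows by invoking the three bounds and selecting the smallest. No new computation is required; the entire content is the assembly of previously proven certificates into one inequality.

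Concretely, I would proceed in three steps, one per term. First, the diameter term: Proposition~\ref{prop:head-tail-diam} gives $\|\Attn(q,K,V)-\Attnk(q,K,V)\|_2\le\tau\,\mathrm{diam}_{H,T}$, itself a direct consequence of the exact head--tail identity of Theorem~\ref{thm:head-tail-identity}. Second, the variance term: Proposition~\ref{prop:chi2-var-appendix} gives $\|\Attn(q,K,V)-\Attnk(q,K,V)\|_2\le\sqrt{D_{\chi^2}(\hat P\Vert P)}\,\sqrt{\Var_P(V)}$, where the Top-$k$ evaluation $D_{\chi^2}(\hat P\Vert P)=\tau/(1-\tau)$ is computed in that proof. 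Third, the norm term: Proposition~\ref{prop:output} gives $\|\Attn(q,K,V)-\Attnk(q,K,V)\|_2\le 2C\,\tau$ whenever $C\ge\max_i\|v_i\|_2$, which is precisely the standing hypothesis on $C$ in the statement.

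Having the three inequalities in hand, I would conclude by noting that they bound the identical left-hand side, so that quantity is at most $\min\{\tau\,\mathrm{diam}_{H,T},\ \sqrt{D_{\chi^2}(\hat P\Vert P)}\,\sqrt{\Var_P(V)},\ 2C\,\tau\}$. The only point warranting care---and the nearest thing to an obstacle---is consistency of hypotheses across the three cited results: all use the same tail mass $\tau=\sum_{i>k}p_i$, the same Top-$k$ renormalization $\hat p_i=p_i/(1-\tau)$ for $i\le k$ (and $\hat p_i=0$ otherwise), and the same value vectors $\{v_i\}$, with the norm bound additionally requiring $C\ge\max_i\|v_i\|_2$. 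Since these conditions hold simultaneously, no compatibility issue arises, and the minimum is legitimately attained by whichever of the three terms is smallest for the given pair $(P,V)$.
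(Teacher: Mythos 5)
Your proposal is correct and follows exactly the paper's own argument: the paper likewise cites Proposition~\ref{prop:head-tail-diam} for the diameter term, Proposition~\ref{prop:chi2-var-appendix} for the $\chi^2$--variance term, and Proposition~\ref{prop:output} for the $2C\tau$ term, then takes the minimum of the three valid upper bounds on the same quantity. Your added remark on hypothesis consistency (shared $\tau$, shared renormalization, and $C\ge\max_i\|v_i\|_2$) is a fine precaution but introduces nothing beyond what the paper's proof implicitly assumes.
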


\begin{proof}
The first term $\tau\,\mathrm{diam}_{H,T}$ is given by
the head--tail diameter bound (Proposition~\ref{prop:head-tail-diam}).
The second term follows from the centered $\chi^2$--variance bound
(Proposition~\ref{prop:chi2-var-appendix}).
Finally, Proposition ~\ref{prop:output} gives the bound $2C\tau$.
Taking the minimum of these three valid upper bounds gives the result.
\end{proof}

\section{Gaussian Model}
\label{app:gaussian-model}

Assume $s_i \sim \mathcal{N}(\mu,\sigma^2)$ i.i.d. Then by the Law of Large Numbers,
\begin{equation}\tag{7}
 \frac{1}{L}\sum_{j=1}^L e^{s_j} \to e^{\mu + \sigma^2/2}.
\end{equation}

\noindent\textbf{Proof of relation (7):}
Let $s_1,s_2,\dots$ be i.i.d.\ $\mathcal N(\mu,\sigma^2)$ and set $X_j \coloneqq e^{s_j}$. We first compute
\[
\EE[X_1]=\EE[e^{s_1}]
=\frac{1}{\sqrt{2\pi}\sigma}\int_{\mathbb R}
\exp\!\left(x-\frac{(x-\mu)^2}{2\sigma^2}\right)\,dx.
\]
We now carry out the completion of the square in detail.
Expand the quadratic term:
\[
x-\frac{(x-\mu)^2}{2\sigma^2}
= x-\frac{x^2-2\mu x+\mu^2}{2\sigma^2}
= -\frac{x^2}{2\sigma^2}+\left(\frac{\mu}{\sigma^2}+1\right)x-\frac{\mu^2}{2\sigma^2}.
\]
Group the $x^2$ and $x$ terms and factor $-\frac{1}{2\sigma^2}$:
\[
x-\frac{(x-\mu)^2}{2\sigma^2}
= -\frac{1}{2\sigma^2}\!\left(x^2-2(\mu+\sigma^2)x+\mu^2\right).
\]
Complete the square inside the brackets:
\[
x^2-2(\mu+\sigma^2)x+\mu^2
= \bigl(x-(\mu+\sigma^2)\bigr)^2-\bigl((\mu+\sigma^2)^2-\mu^2\bigr).
\]
Compute the constant explicitly:
\[
(\mu+\sigma^2)^2-\mu^2=2\mu\sigma^2+\sigma^4.
\]
Therefore
\[
x-\frac{(x-\mu)^2}{2\sigma^2}
= -\frac{(x-(\mu+\sigma^2))^2}{2\sigma^2}+\mu+\frac{\sigma^2}{2}.
\]
Substituting this back into the integral gives
\[
\EE[e^{s_1}]
= \frac{1}{\sqrt{2\pi}\sigma}\int_{\mathbb R}
\exp\!\left(-\frac{(x-(\mu+\sigma^2))^2}{2\sigma^2}\right)\,dx
\;\cdot\; e^{\mu+\sigma^2/2}.
\]
Make the change of variables $z=\frac{x-(\mu+\sigma^2)}{\sigma}$, so $dx=\sigma\,dz$; then
\[
\frac{1}{\sqrt{2\pi}\sigma}\int_{\mathbb R}
\exp\!\left(-\frac{(x-(\mu+\sigma^2))^2}{2\sigma^2}\right)\,dx
= \frac{1}{\sqrt{2\pi}}\int_{\mathbb R} e^{-z^2/2}\,dz
= 1.
\]
Hence
\[
\EE[e^{s_1}] = e^{\mu+\sigma^2/2}.
\]
In particular $\EE[|X_1|]<\infty$, so by the Strong Law of Large Numbers,
\[
\frac{1}{L}\sum_{j=1}^L e^{s_j}
= \frac{1}{L}\sum_{j=1}^L X_j \xrightarrow[L\to\infty]{\text{a.s.}} \EE[X_1]
= e^{\mu+\sigma^2/2},
\]
which is exactly relation (7). \hfill\qedsymbol

\medskip

For a threshold $t$,
\begin{equation}\tag{8}
 \frac{1}{L}\sum_{i:\,s_i>t} e^{s_i}
 \;\longrightarrow\; e^{\mu + \sigma^2/2}\;\Phi_c\!\left(\frac{t-\mu-\sigma^2}{\sigma}\right).
\end{equation}

\noindent\textbf{Proof of relation (8).}
Fix $t\in\R$ and let $s_1,s_2,\dots$ be i.i.d.\ $\mathcal N(\mu,\sigma^2)$.
Define the integrable i.i.d.\ variables
\[
Y_j \coloneqq e^{s_j}\,\mathbf{1}_{\{s_j>t\}}, \qquad j\ge 1.
\]
Then $\sum_{i:\,s_i>t} e^{s_i}=\sum_{j=1}^L Y_j$. By the Strong Law of Large Numbers,
\[
\frac{1}{L}\sum_{j=1}^L Y_j \xrightarrow[L\to\infty]{\text{a.s.}} \EE[Y_1],
\]
so it suffices to compute $\EE[Y_1]=\EE\!\big[e^{s_1}\mathbf{1}_{\{s_1>t\}}\big]$ explicitly.

Let $\varphi_{\mu,\sigma}(x)=\dfrac{1}{\sqrt{2\pi}\,\sigma}\exp\!\left(-\dfrac{(x-\mu)^2}{2\sigma^2}\right)$ be the $\mathcal N(\mu,\sigma^2)$
density. Then
\[
\EE[Y_1]
=\int_{t}^{\infty} e^{x}\,\varphi_{\mu,\sigma}(x)\,\mathrm{d}x
=\frac{1}{\sqrt{2\pi}\,\sigma}\int_{t}^{\infty}
\exp\!\left(x-\frac{(x-\mu)^2}{2\sigma^2}\right)\,\mathrm{d}x.
\]
Using the completed square above,
\[
\EE[Y_1]
= \frac{e^{\mu+\sigma^2/2}}{\sqrt{2\pi}\,\sigma}\int_{t}^{\infty}
\exp\!\left(-\frac{(x-(\mu+\sigma^2))^2}{2\sigma^2}\right)\,\mathrm{d}x.
\]
With the change of variables
$z=\dfrac{x-(\mu+\sigma^2)}{\sigma}$ (so $\mathrm{d}x=\sigma\,\mathrm{d}z$),
the lower limit becomes $z_0=\dfrac{t-\mu-\sigma^2}{\sigma}$ and
\[
\frac{1}{\sqrt{2\pi}\,\sigma}\int_{t}^{\infty}
\exp\!\left(-\frac{(x-(\mu+\sigma^2))^2}{2\sigma^2}\right)\,\mathrm{d}x
= \frac{1}{\sqrt{2\pi}}\int_{z_0}^{\infty} e^{-z^2/2}\,\mathrm{d}z
= \Phi_c\!\left(\frac{t-\mu-\sigma^2}{\sigma}\right).
\]
Therefore
\[
\EE[Y_1]
= e^{\mu+\sigma^2/2}\,\Phi_c\!\left(\frac{t-\mu-\sigma^2}{\sigma}\right),
\]
and combining with the SLLN yields (8). \hfill\qedsymbol

\medskip

Defining
\[
\tau_L(t) := \frac{\sum_{i:\,s_i>t} e^{s_i}}{\sum_{j=1}^L e^{s_j}},
\]
relations (7)–(8) imply
\[
\tau_L(t) \xrightarrow{\ \mathrm{a.s.}\ } \Phi_c\!\Big(\frac{t-\mu-\sigma^2}{\sigma}\Big).
\]
This yields the Gaussian design rule
\[
\alpha_{\mathrm{Gauss}}(\varepsilon;L,\mu,\sigma)\;\approx\;
L\,\Phi_c\!\big(\sigma+\Phi^{-1}(\varepsilon)\big),
\]
as used in the main text.

\newpage
\small


\begin{thebibliography}{99}

\bibitem{child2019sparse}
R.~Child, S.~Gray, A.~Radford, and I.~Sutskever.
\newblock Generating Long Sequences with Sparse Transformers.
\newblock \emph{arXiv preprint} arXiv:1904.10509, 2019.
\newblock Available at \url{https://arxiv.org/pdf/1904.10509.pdf}.

\bibitem{beltagy2020longformer}
I.~Beltagy, M.~E. Peters, and A.~Cohan.
\newblock Longformer: The Long-Document Transformer.
\newblock \emph{arXiv preprint} arXiv:2004.05150, 2020.
\newblock Available at \url{https://arxiv.org/pdf/2004.05150.pdf}.

\bibitem{zaheer2020bigbird}
M.~Zaheer, G.~Guruganesh, A.~Dubey, J.~Ainslie, C.~Alberti, S.~Ontanon, P.~Pham,
A.~Ravula, Q.~Wang, L.~Yang, and A.~Ahmed.
\newblock Big Bird: Transformers for Longer Sequences.
\newblock In \emph{Advances in Neural Information Processing Systems (NeurIPS)}, 2020.
\newblock Conference PDF: \url{https://proceedings.neurips.cc/paper/2020/file/c8512d142a2d849725f31a9a7a361ab9-Paper.pdf}.

\bibitem{wang2020linformer}
S.~Wang, B.~Z. Li, M.~Khabsa, H.~Fang, and H.~Ma.
\newblock Linformer: Self-Attention with Linear Complexity.
\newblock \emph{arXiv preprint} arXiv:2006.04768, 2020.
\newblock Available at \url{https://arxiv.org/pdf/2006.04768.pdf}.

\bibitem{choromanski2021performer}
K.~Choromanski, V.~Likhosherstov, D.~Dohan, X.~Song, A.~Gane, T.~Sarlos,
P.~Hawkins, J.~Davis, A.~Mohiuddin, L.~Kaiser, D.~Belanger, L.~Colwell,
and A.~Weller.
\newblock Rethinking Attention with Performers.
\newblock In \emph{International Conference on Learning Representations (ICLR)}, 2021.
\newblock Conference PDF: \url{https://openreview.net/pdf?id=Ua6zuk0WRH}.

\bibitem{xiong2021nystromformer}
Y.~Xiong, Z.~Zeng, R.~Chakraborty, M.~Tan, G.~Fung, Y.~Li, and V.~Singh.
\newblock Nystr{\"o}mformer: A Nystr{\"o}m-Based Algorithm for Approximating Self-Attention.
\newblock In \emph{Proceedings of the AAAI Conference on Artificial Intelligence (AAAI)}, 2021.
\newblock Conference PDF: \url{https://pages.cs.wisc.edu/~yunyang/nystromformer.pdf}.

\bibitem{dao2022flashattention}
T.~Dao, D.~Y. Fu, S.~Ermon, A.~Rudra, and C.~R{\'e}.
\newblock FlashAttention: Fast and Memory-Efficient Exact Attention with IO-Awareness.
\newblock In \emph{Advances in Neural Information Processing Systems (NeurIPS)}, 2022.
\newblock Conference PDF: \url{https://openreview.net/pdf?id=H4DqfPSibmx}.

\bibitem{peng2021random}
H.~Peng, N.~Pappas, D.~Yogatama, R.~Schwartz, N.~A. Smith, and L.~Kong.
\newblock Random Feature Attention.
\newblock In \emph{International Conference on Learning Representations (ICLR)}, 2021.
\newblock Conference PDF: \url{https://openreview.net/pdf?id=QtTKTdVrFBB}.

\bibitem{resnick1987extreme}
S.~I. Resnick.
\newblock \emph{Extreme Values, Regular Variation, and Point Processes}.
\newblock Springer, 1987.
\newblock Publisher page: \url{https://link.springer.com/book/10.1007/978-0-387-75953-1}.

\bibitem{pitman2006combinatorial}
J.~Pitman.
\newblock \emph{Combinatorial Stochastic Processes}.
\newblock Springer, 2006.
\newblock Publisher page: \url{https://link.springer.com/book/10.1007/b11601500}.

\bibitem{kitaev2020reformer}
N.~Kitaev, L.~Kaiser, and A.~Levskaya.
\newblock Reformer: The Efficient Transformer.
\newblock In \emph{International Conference on Learning Representations (ICLR)}, 2020.
\newblock Conference PDF: \url{https://openreview.net/pdf?id=xW4kmYrORTE}.

\end{thebibliography}
\end{document}